\let\cite\citep
\newcommand{\mathleft}{\@fleqntrue\@mathmargin\parindent}
\newcommand{\mathcenter}{\@fleqnfalse}
\@citea\NAT@hyper@{%
     \NAT@nmfmt{\NAT@nm}%
     \hyper@natlinkbreak{\NAT@aysep\NAT@spacechar}{\@citeb\@extra@b@citeb}%
     \NAT@date}}
\@citea\NAT@nmfmt{\NAT@nm}%
\NAT@spacechar\NAT@hyper@{\NAT@date}}{}{}
\@citea\NAT@hyper@{%
     \NAT@nmfmt{\NAT@nm}%
     \hyper@natlinkbreak{\NAT@spacechar\NAT@@open\if*#1*\else#1\NAT@spacechar\fi}%
       {\@citeb\@extra@b@citeb}%
     \NAT@date}}
\@citea\NAT@nmfmt{\NAT@nm}%
\fi\NAT@hyper@{\NAT@date}}
\newenvironment{rcases}{\left.\begin{aligned}}{\end{aligned}\right\rbrace}
\newcolumntype{Y}{>{\raggedright\let\newline\\\arraybackslash\hspace{0pt}}X}
\newcommand{\figref}{Fig.~\ref}
\newcommand{\eqeqref}{Eq.~\eqref}
\newcommand{\Eqeqref}{Equation~\eqref}
\newcommand{\1}{\mathbf{1}}
\newcommand{\E}{\mathbb{E}}
\newcommand{\std}{\sigma}
\newcommand{\0}{\mathbf{0}}
\newcommand{\A}{\mathcal{A}}
\newcommand{\G}{\mathcal{G}}
\newcommand{\M}{\mathcal{M}}
\renewcommand{\P}{\mathcal{P}}
\newcommand{\Loss}{\mathcal{L}}
\newcommand{\Otilde}{\tilde O}
\newcommand{\Omegatilde}{\tilde \Omega}
\newcommand{\R}{\mathbb{R}}
\newcommand{\Nat}{\mathbb{N}}
\newcommand{\eps}{\varepsilon}
\newcommand{\poly}{\mathrm{poly}}
\newcommand{\gauss}{\mathcal{N}}
\newcommand{\dragonegg}[1]{}
\newcommand{\itemone}{[No Easy Simulation]\xspace}
\newcommand{\itemtwo}{[Particles Talk]\xspace}
\newcommand{\itemthree}{[Attention Deforms Pairwise Connections]\xspace}
\newcommand{\itemfour}{[Time Dictates Structure]\xspace}
\newcommand{\abc}{{\ensuremath{X(i),Y(j)\to \sigma(i,j)}}\xspace}
\newcommand{\definitionword}{Definition}
\newcommand{\claimword}{Claim}
\newcommand{\findingword}{Empirical Finding}
\newcommand{\hypothesisword}{Hypothesis}
\newcommand{\observationword}{Observation}
\newtheorem{definition}{\definitionword}
\newtheorem{claim}{\claimword}
\newtheorem{finding}{\findingword}
\newtheorem{hypothesis}{\hypothesisword}
\newtheorem{observation}{\observationword}
\newcommand{\JL}{Johnson-Lindenstrauss\xspace}
\newcommand{\BDHGPU}{\textrm{BDH-GPU}\xspace}
\newcommand{\xysparse}{y}
\newcommand{\xsparse}{x}
\newcommand{\relu}[1]{\left(#1\right)^{+}}
\newcommand{\relusm}[1]{(#1)^{+}}
\newcommand{\yKV}{a^*}
\newcommand{\vv}{v^*}
\newcommand{\encoder}{E}
\newcommand{\decoder}{D}
\newcommand{\decodery}{{D_y}}
\newcommand{\decoderx}{{D_x}}
\newcommand{\graph}{G}
\newcommand{\graphy}{{G_y}}
\newcommand{\graphx}{{G_x}}
\newcommand{\graphs}{{G_s}}
\newcommand{\ee}{^{\mathfrak{e}}}
\newcommand{\ii}{^{\mathfrak{i}}}
\newcommand{\rope}{U}
\newcommand{\state}{{\boldsymbol\rho}}
\newcommand{\corr}{{\boldsymbol\sigma}}
\renewcommand{\*}{\odot}
\newcommand{\layernorm}[1]{\mathsf{LN}\left(#1\right)}
\newcommand{\tensor}{\otimes}
\newcommand{\lr}[1]{\left(#1\right)}
\newcommand{\ket}[1]{{#1}}
\newcommand{\bra}[1]{{#1}^T}
\newcommand{\braket}[2]{{#1}^T {#2}}
\newcommand{\ix}[2]{#1(#2)}
\newcommand{\ruleone}[5]{&#1(#2)\xrightarrow{#3}#4(#5)}
\newcommand{\ruletwo}[7]{&#1(#2),\ #3(#4)\xrightarrow{#5}#6(#7)}
\newcommand{\ruledown}[3]{&#1(#2)\downarrow_{#3}}
\newcommand{\ruletwodown}[6]{&\relusm{#1(#2)-#3(#4)}\xrightarrow{}#5(#6)}
\newcommand{\rulethreedown}[8]{&\relusm{#1(#2)-#3(#4)},\ #5(#6)\xrightarrow{}#7(#8)}
\title{
The Dragon Hatchling: The Missing Link\\between the Transformer and Models of the Brain}
\author{
Adrian Kosowski\thanks{Author contributions are listed at the end of the paper.}\,
\thanks{Corresponding author.}\and
\textbf{Przemysław Uznański}\footnotemark[2]\and
\textbf{Jan Chorowski}\footnotemark[2]\and
\textbf{Zuzanna Stamirowska}\footnotemark[2]\and
\textbf{Michał Bartoszkiewicz}\footnotemark[2]\\[2mm]
Pathway, Palo Alto, USA\\
\texttt{research@pathway.com}
}
\begin{document}

\vspace*{-6mm}
\maketitle

\vspace*{-7mm}
\begin{abstract}
\enlargethispage{3mm}%
The relationship between computing systems and the brain has served as motivation for pioneering theoreticians since John von Neumann and Alan Turing. 
Uniform, scale-free biological networks, such as the brain, have powerful properties, including generalizing over time, which is the main barrier for Machine Learning on the path to Universal Reasoning Models.

We introduce `Dragon Hatchling' (BDH), a new Large Language Model architecture based on a scale-free biologically inspired network of $n$ locally-interacting neuron particles. BDH couples strong theoretical foundations and inherent interpretability without sacrificing Transformer-like performance.

BDH is a practical, performant state-of-the-art 
attention-based state space sequence learning architecture. 
In addition to being a graph model, BDH admits a GPU-friendly formulation. %
It exhibits Transformer-like scaling laws: we find empirically that BDH rivals GPT2-architecture Transformer performance on language and translation tasks, at the same number of parameters (10M to 1B), for the same training data.

BDH provides theoretical foundations for understanding model behavior in the limit of large size and reasoning time. 
Our results, formalized as a chain of reductions of expressiveness in the framework of computational Complexity Theory and Distributed Computing, and combined with findings on the BDH model, show a macro-to-micro correspondence of function between the general attention mechanisms in state-of-the-art Language Models, and attention mechanisms observed in the brain. These attention mechanisms formally converge as closed-form local graph dynamics at neurons and synapses: ``the equations of reasoning''.

BDH can be represented as a brain model. It contains $n$ neurons, organized as an excitatory circuit and an inhibitory circuit with integrate-and-fire thresholding of input signals at neurons. The working memory of BDH during inference entirely relies on synaptic plasticity with Hebbian learning using spiking neurons, at potentiation scales of minutes for the brain (up to hundreds of tokens). We confirm empirically that specific, individual synapses strengthen connection whenever BDH hears or reasons about a specific concept while processing language inputs. The neuron interaction network of BDH is a graph of high modularity with heavy-tailed degree distribution. The BDH model is biologically plausible, explaining one possible mechanism which human neurons could use to achieve speech.

BDH is designed for interpretability. Activation vectors of BDH are sparse and positive. We demonstrate monosemanticity in BDH on language tasks, including representation of concept abstractions, which happens even for small models, below 100M-parameter scale. Interpretability of state, which goes beyond interpretability of neurons and model parameters, is an inherent feature of the BDH architecture. 

We believe BDH opens the door to a new theory of ``Thermodynamic Limit'' behavior for language and reasoning models, with the ultimate goal of Probably Approximately Correct (PAC)-like bounds for generalization of reasoning over time.

\vspace*{1.5mm}
$\rhd$ Technical blog entry: \url{https://pathway.com/research/bdh}.

$\rhd$ Code listings: \url{https://github.com/pathwaycom/bdh}.
\end{abstract}

\tableofcontents

\section{Introduction}

Long reasoning and long context inference pose a severe challenge of generalization across scales of time. From vibe coding to market research, users of Language Models and agentic systems are increasingly relying on defining tasks through informal prompts, which the language model is expected to follow over long sequences of actions or decisions, like a reasonable human actor would. Implicitly, most users expect machines to follow the generalization patterns of human reasoning, i.e., to generalize reasoning in the same way as humans do. The complexity of tasks attempted in this way has gone from the equivalent of hours of human work for a single prompt, to weeks~\cite{epoch2025outputlength}. However, experimental evidence suggests that the Transformer and other state-of-the-art architectures do not systematically generalize chain-of-thought (CoT) reasoning to scenarios longer than the ones seen during training~\cite{shojaee2025illusionthinkingunderstandingstrengths}.

Chain-of-Thought reasoning models can be considered through the lens of computational complexity theory. For a Language Model to generalize human reasoning on a given class of tasks, we expect this model to be able to emulate the corresponding reasoning function of the human brain efficiently.\footnote{We provide a more formal explanation of this point in Appendix~\ref{appx:one}.}
While the Transformer with Chain-of-Thought is Turing-complete and can efficiently emulate certain restricted classes of formal languages~\cite{merrill2024expressivepowertransformerschain}, this does not in itself provide a satisfactory answer as to how it emulates human reasoning. The human brain is an extremely complex graph-based distributed computing system with $n \approx 8\cdot 10^{10}$ neurons, and $m > 10^{14}$ neuron connections (synapses), of which a certain percentage is actively used. The direct simulation of such a distributed system by a Language Model through generic Turing-machine reductions would require billions of CoT tokens of the Language Model to represent a single step of reasoning in the brain. So, do Transformer-like models actually relate to brain function?

Such a relationship should follow more closely from a tighter, more direct simulation. Finding such a connection between Language Models and human brain function has, so far, proved elusive. Indeed, when comparing a tensor-based Language Model based on feed-forward network blocks and attention, to a uniform, scale-free graph-based distributed system, such as the brain, the two may, at first glance, appear very dissimilar.

This apparent dissimilarity of structure between Language Models and brain structure has been one of the main causes of concern in attempts to reconcile Computation and the Brain~\cite{simons2018}, as well as a cause of concern regarding the difficulty to foresee the behavior of autonomous AI systems.

In this paper, we show the link between the Transformer and Brain models.

\subsection{Motivation}

The development of Artificial Intelligence and the understanding of Neural Science have gone hand in hand since the 1940's, both being efforts to understand the ``mystery of intelligence''. The relationship between computing systems and the brain served as motivation for the pioneering theoreticians such as John von Neumann~(\citeyear{10.5555/578873}), Alan Turing~(\citeyear{turing1950computing}), Goeff Hinton~(\citeyear{10.5555/1642293.1642643}), Warren McCulloch and  Walter Pitts~(\citeyear{mcculloch1943logical}), and Horace Barlow~(\citeyear{doi:10.1068/p010371}). 

Since then, milestones in Machine Learning around Artificial Neural Networks --- using backpropagation with SGD~\cite{rumelhart1986learning}, followed by Deep Learning~\cite{lecun2015deep}, and the Attention mechanism~\cite{bahdanau2014neural,vaswani2017attention} --- have split the ``mystery of how intelligence works'' into two.  First, we still have no clear explanation for the micro-to-macro correspondence of the reasoning function of the brain. Second, we do not understand the correspondence between the artificial and natural systems --- notably, how effects observed in the brain (emergent network; sparse activations; oscillatory phenomena; unknown relationship to backpropagation mechanisms) map into those which appear in systems based on dense tensors, trained using gradient back-propagation over time.

\paragraph{Reconciling Reasoning Function of the Brain with Language Models.}

There is a seemingly deep divide between state-of-the-art language models, like the Transformer, and natural distributed systems with local graph dynamics, like those of the brain. Specifically, for the brain, we do not understand how the reasoning function emerges from neuronal dynamics at the microscale. For the Transformer, the interpretation of function is given at the level of vectors, but not at the level of particle dynamics or a uniform distributed computing system. 

Language and reasoning are the key areas of higher-order brain function for which we do not yet have a complete understanding. Many other areas of brain function have been explained through analogies to Machine Learning architectures.
For example, the visual cortex is becoming well-understood, especially in its peripheral layers, and the observed inference dynamics are shown to have a correspondence to known Deep Learning architectures~\cite{mohsenzadeh}. The use of sparse coding by the brain was considered in the context of processing visual cues~\cite{olshausen1997sparse}, as well as for the olfactory systems~\cite{lin2014sparse}. By contrast, higher-order cognitive functions of the association cortex of the human brain, such as language and reasoning, are among the least understood. A number of models provide partial explanations and have been verified at small scales. Some of the first attempts include explaining context-dependent computation in the prefrontal cortex using population dynamics of an RNN~\cite{mante2013context}. Later approaches include the Tolman-Eichenbaum Machine~\cite{WHITTINGTON20201249,whittington2022relating}, as well as a number of more recent works~\cite{papadimitriou1,papadimitriou2,papadimitriou3}. One of the main stumbling blocks concerns going from spiking activation patterns at neurons, and localized attention effects at synapses, to a higher-order function, serving a reasoning purpose, efficiently organized at a scale of millions to billions of neurons.

Conversely, for Language Models architectures such as the Transformer, we miss a compact micro-interpretation as a distributed system. The expressiveness of the Transformer has been approximated using approaches from centralized computing and Complexity Theory, rather than from distributed systems. In the centralized perspective, a language model can be seen as a transformation function from inputs into outputs. The computational expressiveness of the Transformer architecture may then be approximated through frameworks based on RASP, such as RASP-L~\cite{zhou2023algorithms} or C-RASP~\cite{yang2024countingliketransformerscompiling,huang2025a}. RASP-L provides a very convenient heuristic for estimating Transformer expressiveness at the rather coarse level of vector operations, while C-RASP provides a more specialized lower-bound on expressiveness, capturing a class of formulas of temporal counting logic. Both frameworks have been used to suggest theoretical models of task length generalization.
This type of expressiveness techniques, however, do not lead to a uniform asymptotic model for the behavior of the Transformer, whether in GPT2 architecture or simplified. The scaling of the Transformer in its different dimensions, and the need to manipulate context length, complicate this goal.

The lack of such a uniform model also makes it hard to compare the capabilities of the Transformer to the capabilities of the brain at the level of correspondence of structure. Generally, the temporal behavior of a state-space system is reflected in its structure\footnote{For a linear system, temporal behavior would be a direct consequence of the spectral properties of the system. The considered systems dynamics are not linear.}.

Understanding whether it is possible to show alignment of the temporal behavior of two systems, which do not display any structural correspondence, and without a clear idea of how the weight tensors and state representation of one system `embed' into the graph structure and state representation of the other system, is an awkward task.

This brings us naturally to our motivational objective: Can we create Machine Learning models which are closer to the desirable properties of natural (human) reasoning systems, and which exhibit the same types of limit and scaling behavior as such natural systems?

\paragraph{Towards scale-free foreseeable AI.}

Ensuring correct scaling behavior of inference over time is of paramount importance for the deployment of AI whose reasoning or actions are not subject to strict human supervision. Most reasoning models and AI agentic systems admit limit objects (i.e., extensions to infinite time and infinite size) which are Turing-complete (cf.~e.g.~\cite{merrill2024expressivepowertransformerschain,attentionisturingcomplete,jojic2023gpt}). This means that they should be treated like computer programs --- and should be approached by the users with the same standards of care, as a computer program of unknown origin and unknown purpose. 

An AI model can malfunction when allowed to run for a long time autonomously, i.e., without human validation of actions and reasoning outcomes. The most painful of all consequences, perhaps, is the concept of a failed generalization of reasoning (a malfunction with respect to the original task objective) over time, leading to a grotesque effect known as the ``Paperclip Factory''~\cite{bostrom2014superintelligence}.

Can the risk of such unsuccessful generalization be bounded?

There are at least two scenarios in which a black-box model $M$ cannot be considered to have undergone previous empirical validation, and consequently cannot be used in higher-risk autonomous AI use cases.
\begin{enumerate}
\item Length-generalization scenario: Model $M$ is expected to act autonomously on a task which is longer than tasks forming part of its validation set.
\item Model scaling scenario: Model $M$ is not exactly the same closed system as the one which was tested during validation. For example, suppose that models $M_1$ and $M_2$ were tested individually on smaller tasks, and let $M$ be an agentic system composed of instances of $M_1$ and $M_2$ which communicate with exchange messages with each other during inference.
\end{enumerate}
A natural way of avoiding both difficulties consists in studying systems which are scale-free with respect to size and time, and admit a form of uniform ``thermodynamic limit'' behavior. The limit behavior of computational systems at criticality naturally connects the size of the system with the probable duration of its operation, with the connection usually taking polynomial form (cf.~e.g.~\cite{BJORNER1991283,haltingsandpiles,Rolla_2020} for examples of graph-based interacting particle systems for which rigorous results have been obtained in this direction). Consider a model $M_n$ with architecture $\A$, parameterized by its size $n$ (with the interpretation of the number of uniform particles), and sampled from some space of $n$-neuron models in architecture $\A$ in some space equipped with a probability measure, $M_n \sim \P_\A (n)$. Informally, if the limit object $\P_\A := \lim_{n \to \infty}\P_\A(n)$ exists (under an appropriate, well-defined sense of uniformity of limit) then models $M_n$, for $n$ sufficiently large, will admit in the limit asymptotic properties, which can be used to characterize their behavior over time.

The existence of such a limit theory means that we can characterize, with bounded probability of error, the behavior of a family of large models, having $O(n)$ parameters, while relying on a theory which is independent of the specific structure and size of the specific model. %
In this way, the limit behavior of a system of a very large number of interacting uniform particles over time becomes (stochastically) foreseeable in the sense of its adherence to expected behavior, which can be extrapolated from observations at shorter time scales. Thus, small tests may be conceived in order to provide validation for a scale-free system at long time scales.

\paragraph{Introducing Axiomatic AI.}

Axiomatic systems are those in which micro-foundations and the macro-description which arises from them are consistent and well-understood. The need for axiomatic understanding was highlighted by David Hilbert~(\citeyear{Hilbert:1902:MP}), and has become the foundation in Statistical Physics (e.g.~thermodynamics, fluid dynamics, spin glass  theory), cellular mechanisms, Social Networks Science, and reconciliation of Microeconomics and Macroeconomics through a Network Economics perspective.

This paper brings a micro-foundational understanding to Language Model inference, to the mechanisms of in-context learning, and Chain-of-Thought reasoning dynamics.

The considerations in this work naturally support a shift of perspective from \emph{Interpretable AI}, which gives an approximate understanding of what the model is doing now (without necessarily telling us what its current actions are going to lead to over longer time scales), to \emph{Axiomatic AI}, where we also understand the micro-foundations of how the model can be expected to behave subsequently over time.

\subsection{Intuition of results: combining \textsl{modus ponens} reasoning with Hebbian learning}\label{sec:intuition}

In this section we provide the reader with some of the main intuitions behind this work which, we hope, will help to navigate the remaining, more formal parts of this paper with ease.

While there are many formal deductive systems in logic, they predominantly rely on the \textsl{modus ponens} inference rule. Applied to a rule-based reasoning system, it takes the following form:

If we know that the $i$-th fact is true, and our ruleset $\sigma$ indicates that the $i$-th fact implies the $j$-th fact, then we know that the $j$-th fact is true as well. In an approximate reasoning system, the strength of the rule $\sigma(i,j)$ indicates %
how the belief $X(i)$ of the system affects its belief $A(j)$. We could write: 
\begin{equation}\label{eq:intuitive_modusponens}
X(i), \sigma(i,j) \xrightarrow{} A(j),
\end{equation}
to indicate that if $X(i)$ is a weighted belief, it contributes $X(i)\sigma(i,j)$ to the system's belief $A(j)$.

Practical logical inference systems differ in strategies employed for rule selection, with the most advanced ones allowing direct manipulation of the ruleset, effectively resulting in a form of program evolution during inference\footnote{The authors' personal experience with writing efficient Prolog programs confirms that such direct ruleset management is often a necessary pragmatic evil, guiding the inference system in the right direction.}. For an approximate reasoning system, such a heuristic could manipulate the strength of rules, modulating the impact of belief $X(i)$ on the system's belief $A(j)$.

Hebbian learning \cite{hebb1949organization}, often presented as the mnemonic ``\emph{Neurons that fire together wire together}'', can be seen as a heuristic for ruleset manipulation. It postulates that synaptic connections are strengthened when the activity of one neuron, $Y(i)$, led to the firing of another neuron, $X(j)$. In the context of an adaptive, approximate inference system, the Hebbian heuristic means that if during the course of operation a fact $i$ contributed some evidence for $j$, the system increases the significance of the implication $\sigma (i,j)$. We could write this rule as:
\begin{equation}\label{eq:intuitive_hebb}
Y(i), X(j) \xrightarrow{} \sigma(i,j),
\end{equation}
with the interpretation that co-presence (or a spike) of $Y(i)$ followed by $X(j)$ increases $\sigma(i,j)$ by $Y(i)X(j)$.

The relations \eqref{eq:intuitive_modusponens} and \eqref{eq:intuitive_hebb}, over a set of $n$ facts, may form the basis of a simple approximate reasoning system that adapts its operation to the problem at hand. Starting with some initial connections between facts, the system applies the rules to discover new facts, at the same time reweighting the ruleset in a way that strengthens the connections between the initial and derived facts. Effectively, should the system be rerun with the new ruleset, it would arrive at similar conclusions faster.

Suppose now that the reasoning system is equipped with two sets of rules: a fixed set $G$ and an evolving set $\sigma$. From a machine learning perspective, the fixed ruleset $G$ can be seen as model weights in Deep Learning terminology, learned using e.g. error backpropagation on a training set. On the other hand, the evolving ruleset can be seen as the temporal state of the reasoning system, sometimes called ``fast weights'' \cite{hinton1987using, schmidhuber1993reducing, ba2016using}. Fast-weights systems have a favorable ratio of state size to parameter count. A system with $n$ facts has $m= O(n^2)$ trainable parameters (expressed using one or more $n\times n$ matrices). A classical recurrent neural net, such as the LSTM \cite{10.1162/neco.1997.9.8.1735}, treats individual fact (neuron) activations as its state, thus maintaining only $O(n)$ state variables. On the other hand, the evolving set of fast-weights $\sigma$ has $m=O(n^2)$ state entries. We believe this 1-1 ratio of trainable parameter to state size is important in designing practical reasoning systems and may justify the success of the Transformer \cite{vaswani2017attention} and state-space \cite{gu2024mambalineartimesequencemodeling} sequence processing models. 

Now, bearing in mind that the trainable parameters and state have comparable size $m$, we can adjust the ratio between this value $m$ and the size $n$ of the fact base. This will happen through a choice of sparsity for the $n\times n$ matrices carrying parameters and state, resulting in a specific relationship of the two values, $n \ll m \ll n^2$. In this way, our system gets a natural interpretation in terms of graphs on $n$ nodes and $m$ edges, with the graph edges tasked with their first two roles: carrying state, and, carrying trainable parameters. Finally, we will give our system an interpretation of a dynamical system with distributed (localized) dynamics, and we will task our edges with their third crucial role: mediating in communication between nodes of the system. In this way, through assimilation of edges to natural function in the brain, we will refer to the $m$ edges as \emph{synapses} connecting a set of $n$ \emph{neurons} into a distributed graph-based system.

In the following Section~\ref{sec:bdhgraph}, we will introduce BDH, a reasoning system that formalizes and combines relations \eqref{eq:intuitive_modusponens} and \eqref{eq:intuitive_hebb} with dynamics involving fixed rules. The BDH system:
\begin{enumerate}
    \item is a reasoning system, efficiently using the \textsl{modus ponens} reasoning rule with heuristic rule reweighting, based on \eqref{eq:intuitive_modusponens} and \eqref{eq:intuitive_hebb},
    \item can be implemented with local graph dynamics, making it suitable for brain-like execution model, and amenable to a principled, axiomatic description,
    \item contains a set of fixed connections (parameters), and a set of dynamically adjusted connections ($\sigma$), which can be seen as its dynamic state updated with a Hebbian learning rule,
    \item admits as its special case \BDHGPU, a GPU-efficient reasoning model architecture, introduced in Section \ref{sec:bdh_arch} and experimentally validated at scale in Section \ref{sec:experiments} in direct comparison to state-of-the-art GPT2-like Transformers.
\end{enumerate}

\subsection{Contribution of this work}%

The focus of this paper is in explaining the dynamics of the primary function of language and reasoning models: inference. We provide a description of a language model architecture which is directly comparable to the Transformer, and admits a clear and interpretable local interpretation of its inference dynamics as a programmable interacting particle system.

\paragraph{Language Models as Local Graph Dynamics.} 
In Section~\ref{sec:bdhgraph}, we introduce a graph-based model architecture called \emph{BDH}, where all model parameters are represented as topology and weights of the communication graph, and model state during inference is represented as edge-reweighting applied to this graph topology.

\begin{claim}[informal overview of theoretical results for BDH]
We introduce a state-space Machine Learning architecture called BDH, formed by a system of $n$ particles called neurons which communicate in a way governed by the weights and topology of the system graph, representing a ``communication by wire'' network.
\begin{itemize}
\item The inference dynamics of BDH, treated as a distributed system, can be represented as execution of local rulesets for $n$ particles with programmable interactions, with particles acting as nodes of the interaction graph and scalar state variables located on its edges (cf.~Section~\ref{sec:equations_of_reasoning}).
\item The local kernel of BDH can be naturally expressed (emulated) by a graph-based Spiking Neural Network system capable of Hebbian learning dynamics, an Excitatory circuit, and an Inhibitory circuit on an $n$-neuron system described by a neuron interaction graph (cf.~Section~\ref{sec:brain_models}).
\end{itemize}
\end{claim}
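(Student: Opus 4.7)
The plan is to establish both bullets by explicit construction of emulators and to verify the emulation step-by-step, rather than by invoking abstract completeness results. For the first bullet, I would decompose the state of BDH at any inference step into two objects: a per-node activation vector $X \in \R^n$ (one scalar per particle), and a per-edge matrix $\sigma \in \R^{n \times n}$ carrying the evolving state. Fixed model parameters are identified with the topology and edge weights of an underlying communication graph $G$. I would then write out, for each primitive operation in one BDH forward step---the attention-style aggregation that contributes $X(i)\sigma(i,j)$ to the belief at $j$, the thresholding $\relu{\cdot}$ on the aggregated signal, and the Hebbian update \eqref{eq:intuitive_hebb}---a corresponding local rule of type \abc, reading and writing only a particle together with its incident synapses. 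The verification reduces to showing by induction over the layer/time index that firing these rules according to a distributed schedule determined by the layer structure reproduces the reference \BDHGPU trajectory exactly.

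For the second bullet, I would exhibit a spiking-neuron emulator that realizes one cycle of the local kernel. Each BDH neuron is split into an excitatory sub-neuron and an inhibitory sub-neuron with integrate-and-fire thresholding; I would choose thresholds and a short time window so that the mean spike rate of the excitatory sub-neuron over the window coincides, up to a controllable discretization error, with the corresponding real-valued BDH activation, and so that the inhibitory circuit implements the competitive subtraction required to realize the positive-part nonlinearity. Hebbian plasticity on the excitatory-to-excitatory synapses is then set to increment $\sigma(i,j)$ by an amount proportional to the product of pre- and post-synaptic spike counts in the window, which in expectation reproduces \eqref{eq:intuitive_hebb}. Composing the excitatory pass (\emph{modus ponens}) with the inhibitory pass (thresholding) and the Hebbian update yields the same $(X,\sigma)$ update as BDH.

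The main obstacle I expect is the faithful realization of the ReLU thresholding and the event ordering at each neuron under purely local dynamics. A ReLU applied after linear aggregation is easy to state arithmetically, but in the spiking model it must emerge from the precise timing and threshold matching of excitatory and inhibitory spikes; any mismatch turns the emulator into a smooth surrogate (a softplus-like function), breaking both the positivity and the sparsity of activations that the interpretability claims of BDH rely on. The technical heart of the argument is therefore a lemma stating that the spiking kernel is equivalent to the arithmetic kernel up to a bounded error in the window length. Once this lemma is in place, the remaining reductions---parameters to graph weights, dynamic state to edge labels, and rule scheduling aligned with BDH's layer structure---are essentially bookkeeping, which I would consolidate in Sections~\ref{sec:equations_of_reasoning} and~\ref{sec:brain_models}.
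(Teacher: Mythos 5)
Your plan for the first bullet is essentially the paper's own route: the paper decomposes state into node variables $X(i), Y(i), A(i)$ and edge variables $\sigma_l(i,j)$, encodes the fixed parameters as edge weights of graphs $\graphx\ee, \graphx\ii, \graphy\ee, \graphy\ii, \graphs$, writes each primitive as a local rule of the edge-reweighting kernel (Table~\ref{tab:protocolx}), and verifies equivalence to the state-space recurrence by an index-by-index comparison under a round-robin scheduler (Observation~\ref{obs:protocol_equiv}, Appendix~\ref{sec:protocol_equiv}). One correction: the target of that verification is the BDH graph dynamics of \eqeqref{eq:bdhgraph}, not the \BDHGPU trajectory. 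The correspondence to \BDHGPU is a separate reduction (Observation~\ref{obs:equivalence}, Claims~\ref{claim:graphs} and~\ref{claim:att_equiv}) that holds only for a particular parameter choice ($\graphs=\1^{n\times n}$, low-rank factorizations of the $G$ matrices) and only up to placement of LayerNorms, so ``reproduces the reference \BDHGPU trajectory exactly'' overstates what your induction can deliver; you should aim it at \eqeqref{eq:bdhgraph} instead.

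For the second bullet you take a genuinely more ambitious route than the paper. The paper's Observation~\ref{obs:brain} is stated without proof: it identifies Rounds~($4l+2$) and ($4l+3$) with excitatory/inhibitory integrate-and-fire, Round~($4l+1$) with Hebbian potentiation realized as a stochastic AND-gate on independent $\{0,1\}$ spikes (so the update is correct in expectation), and then explicitly declines to specify how the thresholding $\relu{\ix{A}{i}-\ix{B}{i}}$ is implemented, declaring it a computational primitive and deferring to the majority-protocol literature. Your proposed lemma --- that a rate-coded spiking kernel over a finite window equals the arithmetic kernel up to an error controlled by the window length --- is exactly the content the paper chooses not to prove, and your diagnosis that an imprecise threshold degrades ReLU into a softplus-like surrogate (destroying positivity and sparsity) is a real issue the paper sidesteps by abstraction. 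What your approach buys is a quantitative emulation guarantee; what it costs is that you must control how the per-round discretization error propagates through $4L$ rounds per token, through the multiplicative gating $\relu{\cdot}\odot x$ of Round~($4l+2$), and through the accumulating synaptic state $\sigma$ over time $t$ --- none of which your sketch addresses, and which is where the actual difficulty of such a lemma lives. As a proof of the claim as literally stated (``can be naturally expressed/emulated''), the paper's identification of mechanisms suffices; your stronger lemma would be a contribution beyond it rather than a required step.
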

In order to train BDH efficiently and analyze its performance, we restrict it, making this restriction the core of a GPU-friendly architecture called \emph{\BDHGPU}. This restriction is obtained by treating the communication of the $n$ particles as proceeding through a mean-field (``radio network''), rather than a graph (``communication by wire''), cf.~\figref{fig:ss} for an explanation of how the state-space equations of \BDHGPU are obtained from BDH. 

This allows us to train a mathematically equivalent model, while localizing its state in short vectors at neurons, not at connections (synapses) of the system. 

\paragraph{A tensor-friendly case of BDH: the \BDHGPU architecture.}
The \BDHGPU architecture, like the Transformer, crucially relies on an attention mechanism, and is amenable to token-parallel training on GPU for next token prediction tasks. Unlike the Transformer, activation vectors of \BDHGPU appear in a very high dimension $n$, are positive by design, and turn out to be sparse.
\begin{claim}[informal overview of theoretical results for \BDHGPU]
We introduce a Machine Learning architecture called \BDHGPU, parameterized by a single (very large) scaling parameter $n$ and a second parameter $d$, $\log n < d \ll n$ ($d=256$ in practice), such that:
\begin{itemize}
\item A model in \BDHGPU$(n,d)$ has $(3+o(1)) nd$ parameters, and admits a precise interpretation as a state-space system following the local dynamics of a $n$-particle system in an interaction field subject to equations of state~\eqref{eq:bdh}. This system is described by $O(d)$ parameters per particle, whose interaction field has mean field interpretation, which in a computational view corresponds to a particle communication network realized by means of ``noisy radio broadcast''.
\item \BDHGPU is a special case of BDH in the sense that, for any \BDHGPU model with $n$ particles, there exists a BDH model with $n$ particles with the same inference behavior and the same size $O(nd)$ of trainable parameters, with the two models being formally equivalent  up to placement of Layer Norms (cf.~Claims~\ref{claim:graphs}~and~\ref{claim:att_equiv}).
\item The \BDHGPU architecture relies on a combination of two blocks: a specific kind of \emph{ReLU-lowrank} feed-forward network, and a \emph{linear attention} mechanism, which both operate in the same neuron dimension $n$, using positive activation vectors.
\item The mechanisms of \BDHGPU, considered at the macro-level of activation vectors in $R^n$, can be compared to those of the Transformer (cf.~Section~\ref{sec:bdhgpumacroexpattention}, Section~\ref{sec:macrocomparison-of-feedforward}). This justifies the applicability of the frameworks of approximate macro-expressiveness, based on RASP~\cite{weiss2012thinking,zhou2023algorithms,yang2024countingliketransformerscompiling} and designed for the Transformer, to \BDHGPU.
\item The micro-interpretation of \BDHGPU mechanisms as neuron-neuron interaction dynamics: (1) explains mechanisms of in-cluster communication of neurons and the spontaneous emergence of graph structure with high Newman modularity in the neuron-neuron communication network (cf.~Section~\ref{sec:modularity}), and (2) provides a strict correspondence between the macro-mechanism of in-context inference based on attention and the local representation of state on individual neuron-neuron pairs (synapses) with state update dynamics based on sporadic updates to synaptic edge weight (cf.~Section~\ref{sec:attention}).
\end{itemize}
\end{claim}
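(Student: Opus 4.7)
The plan is to establish the claim bullet-by-bullet, each by direct appeal to the \BDHGPU forward pass that will be specified in Section~\ref{sec:bdh_arch} together with the BDH local dynamics from Section~\ref{sec:equations_of_reasoning}. First, for the parameter count and the state-space interpretation, I would enumerate the trainable matrices of a \BDHGPU$(n,d)$ model — three rectangular encoder/decoder maps of shape $n\times d$ carrying the dominant cost, together with lower-order bookkeeping absorbed into the $o(1)$ — and read off the total $3nd(1+o(1))$. The state-space view then follows by rewriting the layer recursion in the synaptic variable $\sigma$ and verifying that each neuron update depends only on its own $O(d)$ parameters plus a single population-wide mean-field aggregate, which is exactly the ``noisy radio broadcast'' picture claimed.

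Second, for the special-case reduction from BDH to \BDHGPU, I would construct, given any \BDHGPU$(n,d)$ model, an explicit BDH model on the same $n$ particles whose $n\times n$ edge weights realize the low-rank factorization induced by the shared $d$-dimensional latent. Since \BDHGPU effectively computes inner products of the form $\langle E\xsparse, E\xysparse\rangle$ through the latent, unfolding these into an explicit edge-weighted graph is a routine matrix-factorization argument, and the inference trajectories can then be matched token by token. This is Claims~\ref{claim:graphs} and~\ref{claim:att_equiv} in the desired form. The block decomposition into a ReLU-lowrank feed-forward stage and a linear-attention stage, both operating on non-negative sparse activations in $\R^n$, then follows by inspection, reading off where the ReLU is applied and checking positivity and sparsity directly.

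Third, the macro-comparison to the Transformer is obtained by translating the scalar primitives of \BDHGPU — inner products, elementwise products, ReLU, and the linear-attention accumulation — into the RASP / C-RASP primitives already used to analyze Transformer expressiveness; what matters is exhibiting a compilation map in each direction for the relevant class of programs, after which the existing bounds lift. The micro-interpretation bullets — emergent high Newman modularity in the trained neuron-neuron graph, and the synapse-level correspondence between attention and sporadic Hebbian updates to $\sigma$ — are structural statements about trained models that I would back up by the Hebbian update rule structure alone, reserving the quantitative statements for Sections~\ref{sec:modularity} and~\ref{sec:attention}.

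The main obstacle is the BDH-to-\BDHGPU equivalence up to ``placement of Layer Norms.'' The two architectures localize state in fundamentally different places — per-edge in BDH versus per-neuron mean-field in \BDHGPU — and Layer Norm is a global, nonlinear operation that does not commute cleanly with mean-field aggregation. Making the equivalence rigorous therefore requires either showing that Layer Norms can be shifted past the broadcast stage without changing the inference output, or articulating a precise, slightly weaker notion of equivalence that both architectures satisfy and that suffices for the downstream claims. Once that caveat is pinned down, everything else in the statement reduces to direct algebraic verification and bookkeeping.
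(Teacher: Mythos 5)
Your outline matches the paper's overall strategy for most bullets: the parameter count is read off the three $n\times d$ matrices $(\encoder,\decoderx,\decodery)$ plus lower-order token-embedding terms; the mean-field/broadcast picture is exactly the paper's local-kernel description (compute $m_i\in R^d$, broadcast, receive $\bar m=\sum_j m_j$); and the modularity and synapse-level bullets are indeed deferred to the quantitative Sections~\ref{sec:modularity} and~\ref{sec:attention}, since this Claim is an overview. Your identification of LayerNorm as the main caveat is also on target, and the paper resolves it precisely by your second option: it introduces the intermediate architecture BDH-Normfree and scopes the formal equivalence to that variant, stating the result only ``up to placement of Layer Norms.''

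The genuine gap is in your treatment of the second bullet. You describe realizing $\decoderx\encoder$ and $\decodery\encoder$ as ``$n\times n$ edge weights'' via ``a routine matrix-factorization argument,'' but this fails the bullet's own requirement that the resulting BDH model have \emph{the same size $O(nd)$ of trainable parameters}: materializing the $n\times n$ matrix costs $n^2$ parameters, and BDH's parameters are, by definition, edge weights of graphs. Two further obstructions must be handled. First, BDH graphs carry \emph{non-negative} weights (this is what makes the spiking/token-dynamics interpretation possible), whereas $\decoder\encoder$ has entries of both signs; the paper's Claim~\ref{claim:graphs} resolves this by splitting into excitatory and inhibitory graphs $\graph\ee-\graph\ii=\decoder\encoder$, each represented as the square $H^2[V]$ of a sparse linear circuit $H$ with a hidden synaptic layer of $2d$ nodes and $O(nd)$ edges — this $\G^2(n,m)$ construction is the key idea, not a generic factorization. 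Second, the direct equivalence (Observation~\ref{obs:equivalence}) forces the attention graph $\graphs=\1^{n\times n}$, which is dense; compressing it to $O(nd)$ edges while preserving the attention output is the content of Claim~\ref{claim:att_equiv} and requires an additional linear preparation of attention values (embedding the encoder's connection structure into $\graphs$), not merely sparsification. Without these three ingredients the bullet's quantitative assertion about parameter size is not established. Separately, for the RASP bullet you propose a bidirectional compilation map; the paper does not attempt this and argues only an informal macro-level comparability of the feed-forward and attention blocks, so your plan there is more ambitious than what the text actually delivers, and you should not expect to find a formal compilation to lean on.
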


The above results are complemented by empirical findings.

\begin{finding}[informal overview of empirical results of \BDHGPU]
\BDHGPU is represented as a tensor-based architecture and can be trained with standard back-propagation methods (cf.~Section~\ref{sec:bdh_arch}).
\begin{itemize}
\item The \BDHGPU architecture is shown to follow scaling laws (parameters vs.\ loss) of optimized Transformers in the GPT architecture, at parameter scales between 10M to 1B, on all next token prediction tasks we tested, including tasks of language and translation reminiscent of those in the original benchmark set for the Transformer architecture (cf.~Section~\ref{sec:comparison_bdh_gpt_transformers}).
\item An emergent network reflecting the associated BDH graph dynamics can be read out directly from the parameter matrices of a trained \BDHGPU model, showing emergence of graph structure~(cf.~Section~\ref{sec:bdh_empiricalgraphs}).
\item The positive activations of \BDHGPU exhibit sparsity (at about 5\% level) in the $\xysparse$ vectors of its state space dynamics, with sparsity levels reflecting the amount of activity being performed by \BDHGPU for a given token~(cf.~Section~\ref{sec:sparsity}).
\item In-context state of \BDHGPU attention is shown to localize on the same synapses (neuron-neuron links) consistently across multiple prompts, allowing for some basic features, the interpretation of the current in-context state based on the reading of state of an individual synapse associated with such a feature (cf.~Section~\ref{sec:monosynapse}).
\end{itemize}
\end{finding}

A more detailed discussion of the training approach is provided in Appendix~\ref{sec:bdh_scaling_details}, while the code listing for \BDHGPU is provided in Appendix~\ref{sec:bdh_code_listing}. For the purposes of our experiments, we did not apply any specific training method which would be known to guide the system towards any of the observed emergent properties. (In particular, L1-regularization was disabled.) The observed emergent effects follow naturally from the design choices of the BDH and \BDHGPU architectures, and are largely attributable to the combination of: the choice of model dimensions with comparable model-to-state ratio, reliance on linear attention in high dimension, reliance on ReLU thresholds for ensuring that activation vectors are positive (trivially) and sparse (an effect empirically noted in~\cite{haziza2025acceleratingtransformerinferencetraining}).

We also remark that the \BDHGPU architecture allows for the uniform asymptotic scaling of the model in one dimension, $n$. For example, a composition of models, obtained by concatenation, is also model in the same architecture, with a larger value of $n$ (cf.~Section~\ref{sec:experiments_merging} for an empirical study of this effect for practical translation tasks).
Historically, a link has been established between infinitely wide feedforward networks and Gaussian Processes~\cite{neal2012bayesian,lee2017deep,yang2019wide}. BDH allows the study of limit behavior of reasoning models.

\begin{quotation}
\emph{With BDH and \BDHGPU, we show that Language Models can be amenable to a particle-based interpretation. In fact, two micro-foundations --- particle-based behavior and logic-programming behavior of a reasoning system --- fuse together in these architectures.}
\end{quotation}

\paragraph{The bridge between the Transformer and Brain models.}

The inference dynamics of BDH and \BDHGPU act as a natural bridge between Transformer, and neuromorphic models of the brain and its subsystems. We illustrate this in \figref{fig:bridge}.

\begin{figure}
  \centering
  \includegraphics[width=\textwidth]{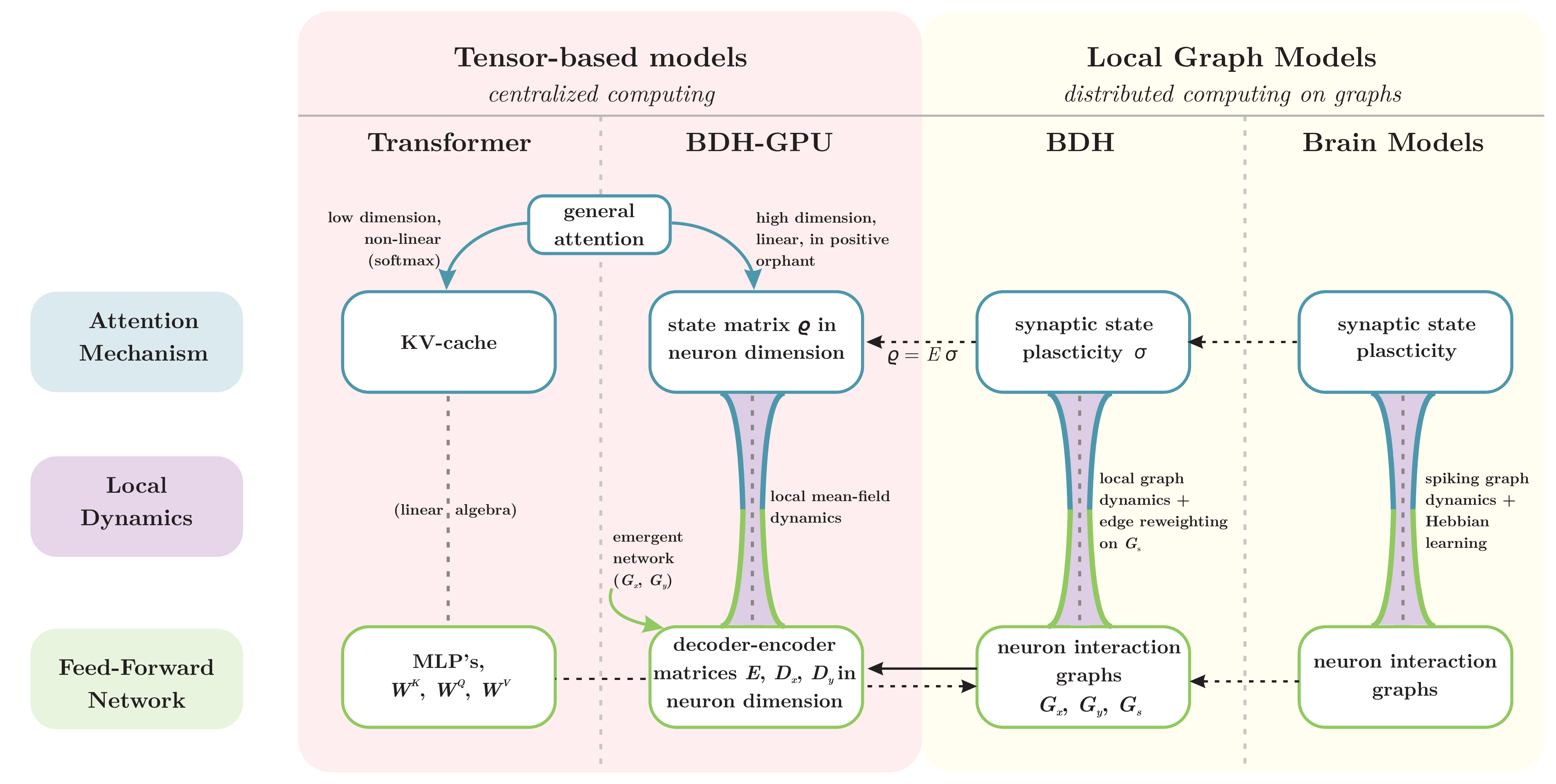}
  \caption{General overview of architectures and their relationships: the inference dynamics of BDH and \BDHGPU act as a natural bridge between Transformer and models of the brain. The two main inference mechanisms of a reasoning architecture, attention and the feed-forward network, are defined at a macro-level through tensor operations for the Transformer, and at the micro-level of neuron interactions through local graph dynamics for Brain models. The new \BDHGPU architecture is naturally defined both at the level of vectors and of particle dynamics of neurons and synapses, acting as a bridge between these two approaches. See also Table~\ref{tab:comparison} at the end of the paper for a more detailed comparison of architecture properties.}\label{fig:bridge}
\end{figure}

\paragraph{Implications for learning dynamics of natural lifelong inference systems.}

A lifelong learning system progresses in time, performing extremely rapid inference, combined with several training mechanisms at different time scales.

In this work, we provide and validate at scale a plausible explanation of what the \emph{predominant} dynamics of such a system could look like, taking the system from `split-second' scale, to the scale of inference during `minutes', considering the flow of time at the natural rate of thought and language for humans.

A complementary discussion of learning dynamics would aim to provide an explanation of how to take such a lifelong inference system from the scale of `minutes' into even longer timescales. This would concern the slower transfer of ``fast-weights''-like inference state to long-term memory, starting at the order of $10^3$---$10^4$ tokens, and taking into account feedback signals. In this work, we do not provide a direct answer as to how the brain actually handles this effect at longer timescales. However, a constructive way to resolve this problem seems to be less challenging, once the local inference dynamics of the brain are better understood (we come back to this in the Conclusions). The modeling approach provided in Section~\ref{sec:brain_models} is proposed as a suitable framework for such a study.

\subsection{Notation}\label{sec:notation}

\paragraph{State-space models.} For describing inference dynamics of any system, we will use state-space notation, and consider a state-space system composed of two parts: a set of \emph{model parameters} $M$ which does not change during inference, and a \emph{state} $\sigma(t)$ which changes during inference. The model performs inference following state-space equation $\sigma(t+1) := \A(M, \sigma(t), a_t)$, where $a_t$ is a possible external input to the system at time $t$ (such as a language token), $t=0,1,2,\ldots$, and $\A$ is referred to as the \emph{architecture} $\A$ which drives its progress. During inference without external input, usually autoregressive inference, we will shorten this to $\sigma(t) := \A^t(M, \sigma_0)$. %

\paragraph{Models as programs.} In settings that are of interest to us (inference with combining multiple facts, reasoning), we opt for terminology from computing. $M$ has the interpretation of a computer program code, $\A$ has the interpretation of a computational machine architecture which runs it, and $\sigma$ has the interpretation of the variable state of the program. We will use the terms `model $M$' and `program $M$' interchangeably.

\paragraph{Graphs and their dynamical systems interpretation.} For a square matrix with non-negative coefficients, $H \in (\R^+)^{n,n}$, $n \in \Nat$, we will consider two more equivalent representations. In one, we will treat $H$ as a graph defined on some nodeset $V$, with $V=|n|$. Formally, we can take $V = \{e_1,\ldots,e_n\}$, where $e_i = (0,\ldots,0,1,0\ldots,0) \in \R^{n\times 1}$ with $1$ on the $i$-th position, forming an orthonormal basis. Non-zero entries of $H$ are referred to as \emph{edges}. By an overloading of notation, we will write $H(i,j):= \bra{e_j} H \ket{e_i} \geq 0$, to represent the node affinity function, or \emph{edge weight}, from $i$ to $j$. We define the \emph{edge set} $E(H) := \{(i,j) \in V\times V: H(i,j)>0\}$.

In discussions of graph-based model architectures, we will take the standard interpretation of graphs from a linear dynamical systems perspective, applied to positive vectors. When $v \in (\R^+)^{n\times 1}$ is a non-negative vector, $Hv \in (\R^+)^{n}$ has the interpretation of a linear transformation of $v$. If $H$ satisfies the condition of stochasticity (column-normalization to $1$), then $v\mapsto Hv$ is a Markov chain transition, with $\|Hv\|_1 = \|v\|_1$. From a distributed systems perspective, transitions of stochastic matrices can be represented either through the direct simulation of (probabilities) of such a Markov chain, or described by the token dynamics of an extremely simple stochastic token distribution scheme in which a token located at node $e_i$ goes to node $e_j$ with probability $H(i, j)$. If $H$ is not stochastic, the operation $v\mapsto Hv$ additionally necessitates the suppression of a fraction of tokens, or the multiplication of tokens, at each step at each node, depending on the column-normalization of a given node.\footnote{We provide a graph distributed systems interpretation only for dynamics on graphs with non-negative matrix entries (positive-weight edges). Negative-weight edges are hard to represent using natural local dynamics based on token distribution or spiking models.}

For two graphs $H_1, H_2 \in \R^{n\times n}$, the graph $H = H_2 H_1$ is obtained through (linear algebraic) matrix multiplication, and in a distributed system, the corresponding transition $v \mapsto Hv$ is obtained with two steps of token dynamics, one following graph $H_1$, the next following graph $H_2$. 

Representing $m$ edge-weights of a sparse $n$-node graph with $b$ bits of numerical precision per parameter is possible with $O(m (b + \log n))$ bits of information, which corresponds to $O(m (1 + \frac{\log n}{b}))$ parameters. For the sake of simplicity, we will assume in asymptotics that the second term of the sum does not dominate (i.e., $\log n = O(b)$), and so we simply say that we represent the graph with $O(m)$ parameters.

\section{BDH: a language model architecture given by local distributed graph dynamics}\label{sec:bdhgraph}

\subsection{Formalism for local graph-based language models}\label{sec:formalism}

We consider model architectures $\A$ which correspond to models of graph-based distributed computing (cf.~\cite{Peleg,hirvonen2025distributed}). A specific model $M$ in architecture $\A$ corresponds to the weights and topology of the communication graph or graphs used by such a system.

\paragraph{Introduction to distributed graph systems.} The distributed system architecture $\A$, representing the model architecture, is defined through a \emph{scheduler}, and a local dynamics (\emph{kernel} $K(\A)$) describing the local computations to be performed at each node of the system, and, communication between pairs of nodes connected by edges of the graph representing a given model $M$.

We will generally accept that computations are performed only at $n$ neuron nodes (particles), whereas state variables of the system may appear both on nodes and edges. We will, for simplicity of analysis, consider systems governed by a \emph{synchronous scheduler}, which in successive rounds, acts in two sub-rounds:
\begin{enumerate}
  \item \textbf{Computation:} computations of the kernel of $\A$ are run at all neuron nodes independently.
  \item \textbf{Communication ``over wire''}: each neuron node sends specified `output variables' to specified `input variables' of its neighboring neurons.
\end{enumerate}
We expect the scheduler to follow the same communication pattern between neurons over time in a uniform way. In order to avoid artificial constructions of cyclic time-counters at nodes, we will define the architecture kernel through a short sequence of kernels, with the scheduler executing them in successive rounds in round-robin manner. Specifically, when $\A$ is BDH, we will have a sequence of four kernels, $K(\A)=(K_1(\A),K_2(\A),K_3(\A),K_4(\A))$, with $K_i(\A)$ being executed in every round $r$ such that $r \equiv i \textrm{\, mod \, } 4$.

\paragraph{Programmable rulesets and the interaction kernel.}

We recall from Section~\ref{sec:notation} that a model architecture $\A$ has the interpretation of a computational machine architecture, and models $M$ have the interpretation of programs in architecture $\A$.  We also recall that a graph-based model $M$ is defined through a set of parameters which represent the topology and weights of the communication graph of the system.

The above considerations lead directly to the following observation: \emph{The graph of the communication network, which is used for communication between sites by the distributed system architecture $\A$ during reasoning and language inference, has the interpretation of a (trainable, rule-based) program.} Consequently, we embed the subsequent definition of BDH in a kernel formalism, given through a form of  \emph{programmable rulesets}, using two-particle interaction rules on a graph.\footnote{We refer the reader to Appendix~\ref{sec:b} for a more principled background discussion, guiding the appropriate choice of formalism for rule-based local interaction.}

The rulesets which we will use to define BDH will closely resemble rulesets (protocols) known from evolutionary and population dynamics~\cite{HofbauerSigmund1998,DBLP:journals/dc/AngluinADFP06,Aspnes2009} and chemical reaction networks~\cite{chen2012deterministic,Feinberg2019}, however, they will be restricted to a special class of interactions.  

We start by presenting the more general form of this \emph{interaction kernel}. We then explain how such a kernel can be restricted, allowing it to be naturally implemented using a local graph-based distributed system (in particular, one relying spiking dynamics), while remaining sufficiently expressive to describe an attention-based language model. The resulting restriction will be called the \emph{edge-reweighting kernel}.

\begin{definition}[Interaction kernel, general form]\label{def:chemistry}
A system with $z$ species, $z\in\Nat$, and state $(q_1,\ldots,q_z) \in Q$, $q_i \in R^+$, performs the \emph{interaction kernel with a ruleset (protocol) $P$} given by a set of transition rates called \emph{rule weights}, $P=((r_{ijk} \in R^+)_{i,j,k\in \{1\ldots,z\}}, (d_k \in R^+)_{k\in \{1\ldots,z\}})$, producing the following transition from a state $(q_1,\ldots,q_z) \in Q$ to a state $(q'_1,\ldots,q'_z) \in Q$:
\begin{equation}\label{eq:chemistry}
q_k' := (1-d_k) q_k + \sum_{i,j} r_{ijk} q_i q_j
\end{equation}
We will describe such a ruleset $P$ using the notational form:
$$
P = (\{``q_i, q_j \xrightarrow{r_{ijk}} q_k"\}_{i,j,k\in \{1\ldots,z\}}, \{``q_k \downarrow_{d_k}\!\!"\}_{k\in \{1\ldots,z\}}).
$$
As a matter of convention, omitted rules correspond to $r_{ijk}=0$ (respectively, $d_k=0$), while rules with no rate value stated next the pointer correspond to $r_{ijk}=1$ (respectively, $d_k=1$). If $q_j$ is omitted from notation on the left-hand side, we assume $q_j=1$.
\end{definition}

\Eqeqref{eq:chemistry} captures the dynamics of the following differential equation: $\frac{d q_k}{dt} = -d_k q_k + \sum_{i,j} r_{ijk} q_i q_j$. Assuming $q_i, q_j, r_{ijk} \in [0,1]$, the expression $r_{ijk} q_i q_j$ has the interpretation of a population dynamics or chemical process of the form ``$i$ and $j$ give $k$'', with this processes happening at rate $r_{ijk}$, assuming $q_i, q_j, q_k$ have the interpretation of concentrations of species $i,j,k$. The formalism we use here assumes non-normalized state variables. %

We will subsequently use a restriction of the interaction kernel to graph-based systems, which we call the \emph{edge-reweighting kernel}, to describe BDH.

\paragraph{Restricting the interaction kernel to spiking signals and graph systems.}

First, we observe that rules of the form used in the interaction kernel from Definition~\ref{def:chemistry} are extremely easy to implement in systems which rely on stochastic 0/1-valued signals. When $\hat q_i$ and $\hat q_j$ are independent random variables in $\{0,1\}$, with $\Pr[\hat q_i = 1] = q_i$ and $\Pr[\hat q_j = 1] = q_j$, then $q_i, q_j \xrightarrow{} q_k$ is expressible as the ``AND gate'' of probability: the random variable $\delta \hat q_k := q_i q_j \in \{0,1\}$ gives the same expected contribution $\E\delta \hat q_k = q_i q_j$ as the considered rule.

We now consider the restriction of interaction kernels to the case of graph systems. In the general formalism, $k$ can be arbitrary with respect to $i$ and $j$. By contrast, consider graph systems, which describe binary relations between nodes, and not (directly) three-point relations. To resolve this, we will require that $i$, $j$, and $k$ have the interpretation of two nodes of a graph and an edge which is incident to them.

For an anchoring in the literature of dynamical systems, we note that already  systems following an interaction kernel with a strongly constrained $k$ of the form $k  \in \{i,j\}$, exhibit powerful nonlinearities: with such a restriction on $k$, \Eqeqref{eq:chemistry} describes the class of evolutionary systems following the  equations of \emph{replicator dynamics}~\cite{HofbauerSigmund1998}, also equivalently known as a non-normalized form of the fundamental Lotka-Volterra predator-prey dynamics. Replicator dynamics can naturally be represented as graph systems whose parameters are defined on \emph{on edges of the graph}, but whose state is updated on \emph{on nodes of the graph}. By contrast, when defining dynamics for reasoning in the current work, we will also need to capture a more powerful class of graph-based systems, where, crucially, state is larger than the number of neuron nodes, appearing on neuron-neuron edges (synapses). %

We are now ready to describe a restriction of the interaction kernel from Definition~\ref{def:chemistry} to the case of node-edge-node interaction rulesets in a graph: the \emph{edge-reweighting kernel}.

\paragraph{Definition of the edge-reweighting kernel.} We consider a graph system with $n$ \emph{nodes}, indexed $V=\{1,\ldots,n\}$. Additionally, a subset $E$ of pairs of indexes $(i,j)$, for $i,j\in\{1,\ldots,n\}$ forms the \emph{edges} of the system.

The system has state variables associated (uniformly) with nodes and edges, which we denote with capital letters, e.g., $X(i)$, for $i\in V$ or $Z(i,j)$, for $(i,j) \in E$.

\begin{definition}[edge-reweighting kernel]\label{def:edgereweighting}
A distributed system follows the \emph{edge-reweighting kernel} if its dynamics are given by the interaction kernel (Definition~\ref{def:chemistry}) with a set of non-negative state variables, defined on the set of nodes $V$ and set of edges $E$ of a graph, such that each local rule with non-zero rate is either a \emph{computational rule} involving only state variables on a single node $i \in V$, or a \emph{communication rule} for an edge $(i,j) \in E$, involving state variables from the nodes $i, j$ and edge $(i,j)$.
\end{definition}
For context, we remark that, in comparison to the strictly simpler dynamics of node-reweighting governed by graph-based replicator dynamics equations, dynamical systems based on the edge-reweighting kernel given by Definition~\ref{def:edgereweighting} are rather elusive to study. We credit the seminal work of Algorithms theory~\cite{madry2011}[Fig.~1, Thm~3.2] as the first rigorous study of local edge-reweighting graph dynamics, combining fast-paced linear kernels on nodes with a slower-paced edge-reweighting process, in order to refine (`focus') electrical flows on graphs towards a sharper form of cost optimality.\footnote{The graph dynamics used in this approach are naturally phrased in distributed computing parlance, see~\cite{natale2018,zou19}.} The BDH dynamics that we will introduce here rely on fundamentally different nonlinearities in the process, and will have the interpretation of guiding the system from premises defined at a subset of nodes, towards search targets at nodes representing a desired outcome, through reasoning inference rules with tunable weights set on edges. %

In the following Subsection, we will use the introduced formalism to define BDH as an edge-reweighting kernel on the union of edges of several graphs ($\graphx\ee, \graphx\ii, \graphy\ee, \graphy\ii, \graphs$) with the same set of $n$ nodes. %

\subsection{Definition of BDH as a local edge-reweighting process (equations of reasoning)}\label{sec:equations_of_reasoning}

Bearing in mind the discussion of graph dynamics suitable for the case of language inference, and specifically the definition of the edge-reweighting kernel (Definition~\ref{def:edgereweighting}), we are now ready to formalize the state-space dynamics of \Eqeqref{eq:bdhgraph} as a local graph dynamics.

\begin{definition}
The BDH model with $n$ neurons, with parameters expressed through graphs $\graphx\ee, \graphx\ii, \graphy\ee, \graphy\ii, \graphs$ is represented as the ruleset of the edge-reweighting kernel, with $O(n + |E(\graphs)|)$ state variables, with rule amplitudes given by ``the equations of reasoning'' in Table~\ref{tab:protocolx}.
\end{definition}

\paragraph{Inference dynamics of BDH.} The BDH dynamics rely on rapid pulse dynamics with state variables $X(i)$, $Y(i)$, $A(i)$, defined on the $n$ neuron sites of the system, and fast-weight-like state variables $\sigma(i,j)$, defined on a subset of edges of the system, $(i,j) \in E(\graphs)$. The full implementation of BDH shown in Table~\ref{tab:protocolx}(b) also includes auxiliary state variables $X\ee(i)$, $X\ii(i)$, $Y\ee(i)$, $Y\ii(i)$ which are used as temporary counters, for integration of excitatory and inhibitory signals received by neurons. The dynamics also rely on a set of damping hyperparameters on state, $u>0$, which may in full generality be defined separately as $u(i,j)$ for each edge $(i,j) \in E(\graphs)$.

Inference with BDH is performed as follows. For some parameter $L$ (e.g.\ $L=8$ in most of this paper), which would correspond to the number of layers in a Transformer-like system, the system scheduler proceeds through rules in round-robin manner, ingesting new tokens every $4L$ rounds and retrieving results $4L$ rounds later. During round $4l + k$, for $0\leq l < L$, the system performs rules from the $k$-th column of Table~\ref{tab:protocolx}, with each such round consisting of a communication step on edges and a local computation step on nodes. 

The state-space dynamics of BDH can be rewritten in vector-tensor form, equivalent to the local dynamics of the interaction kernel given in Table~\ref{tab:protocolx}. This representation is given by Equation~\eqref{eq:bdhgraph} in the following Section.

\begin{observation}\label{obs:protocol_equiv}
The BDH-Graph protocol for the interaction kernel, given for any time round $T=4Lt +(4l+k)$, $0\leq l < L$, $k=\{0,1,2,3\}$ by the ruleset in Table~\ref{tab:protocolx} is equivalent to the state-space dynamics over time $t$ and layers $l$, given by Equation~\eqref{eq:bdhgraph}.
\end{observation}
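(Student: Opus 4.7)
The plan is to prove Observation~\ref{obs:protocol_equiv} by directly unrolling the scheduler and matching, round by round, the local update rules of the interaction kernel (Definition~\ref{def:chemistry}) against the tensor-form state-space equations in \eqref{eq:bdhgraph}. Concretely, I would first note that the time indexing $T = 4Lt + (4l+k)$ induces a bijection between a global round $T$ and a triple $(t,l,k)$ with $t\in\mathbb{N}$, $0\leq l<L$, $k\in\{0,1,2,3\}$. Since the scheduler is synchronous and round-robin, the computation in round $T$ depends only on the state variables $X(i), Y(i), A(i), X\ee(i), X\ii(i), Y\ee(i), Y\ii(i)$ at nodes and $\sigma(i,j)$ at edges of $\graphs$ at the end of round $T-1$, and on the $k$-th column of Table~\ref{tab:protocolx}. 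This reduces the claim to four independent per-step equalities, for $k\in\{0,1,2,3\}$, between the edge-reweighting kernel output and the corresponding substep of \eqref{eq:bdhgraph}.

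For each $k$, I would use the definition of the edge-reweighting kernel to write out the update produced by a communication step over edges of $\graphx\ee, \graphx\ii, \graphy\ee, \graphy\ii$, or $\graphs$, followed by the local computation step at each node. Invoking the general form $q_k' = (1-d_k)q_k + \sum_{i,j} r_{ijk} q_i q_j$ of Definition~\ref{def:chemistry}, the sums over incident edges $(i,j)\in E(H)$ with rate $H(i,j)$ collapse precisely to the matrix-vector products $H\ket{v}$ used in \eqref{eq:bdhgraph}; the pointwise products of node-local variables translate into the Hadamard products $\odot$; the ReLU thresholds arise from the non-negativity convention on state variables in the interaction kernel; and the decay amplitudes $d_k$, $u$ map onto the decay/damping terms that precede the accumulation. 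The Hebbian update rule \eqref{eq:intuitive_hebb}, realized on edges $(i,j)\in E(\graphs)$, yields the outer-product contribution $Y(i)X(j)$ to $\sigma(i,j)$, which matches the fast-weight update in \eqref{eq:bdhgraph}.

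The final step is to chain the four substeps within a single layer to recover one full layer update, and then iterate over $l=0,\ldots,L-1$ to obtain the full per-token transformation: since each kernel $K_k(\A)$ only writes to the variables that the next kernel $K_{k+1}(\A)$ reads, the composition of the four substeps is well-defined and order-preserving, so no additional synchronization bookkeeping is needed. Ingesting a new token once every $4L$ rounds then corresponds exactly to advancing the outer index $t$, completing the equivalence.

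The main obstacle I anticipate is not conceptual but notational: one has to check carefully that the split between excitatory and inhibitory auxiliary variables $X\ee, X\ii, Y\ee, Y\ii$ in the local ruleset reassembles, after the ReLU-thresholding and the sign convention enforced by Definition~\ref{def:edgereweighting} (which disallows negative-weight edges), into the signed expressions appearing in \eqref{eq:bdhgraph}. This requires verifying that the gap $\relusm{X\ee(i)-X\ii(i)}$ produced at the computation substep reproduces the ReLU-of-difference structure of the tensor equation, rather than a difference-of-ReLUs. Once this sign/threshold bookkeeping is handled, the rest of the argument is a routine matching of sums and products.
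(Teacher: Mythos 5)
Your proposal follows essentially the same route as the paper's own proof in Appendix~\ref{sec:protocol_equiv}: a direct round-by-round correspondence identifying $X(i)$, $Y(i)$, $\sigma_l(i,j)$ with coordinates of $\xsparse_{t,l}$, $\xysparse_{t,l}$, $\corr_{t,l}$, the auxiliary $A(i)$ with the intermediate product $\corr_{t-1,l}\xsparse_{t,l}$, $u(i,j)$ with the (diagonal) matrix $\rope$, and the excitatory/inhibitory auxiliaries with the thresholding. The sign/threshold bookkeeping you flag at the end --- that $\relusm{X\ee(i)-X\ii(i)}$ must yield a ReLU-of-difference rather than a difference-of-ReLUs --- is precisely the point the paper's proof delegates to the auxiliary variables $X\ee, X\ii, Y\ee, Y\ii$, so your plan is complete.
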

\begin{proof}
For completeness, a detailed explanation of the equivalence is provided in Appendix~\ref{sec:protocol_equiv}.
\end{proof}

The variables $X(i)$, $Y(i)$, $A(i)$, defined for each of the $n$ nodes of the system, are updated in successive rounds. The state variables $\sigma$ defined on edges are assumed to be distinct over $l$ as $\sigma_l$, for $0\leq l < L$; this distinction serves to facilitate interpretation and to strike a balance between the number of parameters and the size of state of the system (assuming a single state matrix $\sigma$, uniform across $l$, does not fundamentally change the operation and scaling laws of the architecture).

\begin{table}
\small
\noindent
(a) Simple equations of reasoning\\[2mm]
\hspace*{-0.015\textwidth}
\begin{tabularx}{1.03\textwidth}{|X|X|X|X|}
\toprule
\centerline{Round $4l$} & 
\centerline{Round $4l+1$} & 
\centerline{Round $4l+2$} & 
\centerline{Round $4l+3$}\\[1mm]
\centerline{\emph{Inference from state}} &
\centerline{\emph{Reweighting of synapse state}} & 
\emph{\centerline{Neuron replicator dynamics +}\newline\centerline{inference from parameters}}& 
\centerline{\emph{Inference from parameters}}
\\
\midrule
\begin{align*}
\ruletwo{X}{i}{\sigma_l}{i,j}{}{A}{j}\\
\ruledown{\sigma_l}{i,j}{1-u(i,j)}
\end{align*}
&
\begin{align*}
\ruletwo{Y}{i}{X}{j}{G_s(i,j)}{\sigma_l}{i,j}\\
\ruledown{Y}{i}{}
\end{align*}
&
\begin{align*}
\ruletwo{A}{i}{X}{j}{G_y\ee(i,j)}{Y}{j}\\
\ruledown{A}{i}{}
\end{align*}
&
\begin{align*}
\ruleone{Y}{i}{G_x\ee(i,j)}{X}{j}
\end{align*}
\\
\bottomrule
\end{tabularx}\\[5mm]
(b) Complete equations of reasoning of BDH\\[2mm]
\hspace*{-0.015\textwidth}
\begin{tabularx}{1.03\textwidth}{|X|X|X|X|}
\toprule
\centerline{Round $4l$} & 
\centerline{Round $4l+1$} & 
\centerline{Round $4l+2$} & 
\centerline{Round $4l+3$}\\
\midrule
\multicolumn{4}{|c|}{Communication}
\\[1mm]
\begin{align*}
\ruletwo{X}{i}{\sigma_l}{i,j}{}{A}{j}
\end{align*}
&
\begin{align*}
\ruletwo{Y}{i}{X}{j}{G_s(i,j)}{\sigma_l}{i,j}
\end{align*}
&
\begin{align*}
\ruleone{A}{i}{G_y\ee(i,j)}{Y\ee}{j}\\
\ruleone{A}{i}{G_y\ii(i,j)}{Y\ii}{j}
\end{align*}
&
\begin{align*}
\ruleone{Y}{i}{G_x\ee(i,j)}{X\ee}{j}\\
\ruleone{Y}{i}{G_x\ii(i,j)}{X\ii}{j}
\end{align*}
\\
\multicolumn{4}{|c|}{$ $}\\[-3mm]
\multicolumn{4}{|c|}{Computation}
\\[1mm]
\begin{align*}
\ruledown{\sigma_l}{i,j}{1-u(i,j)}\\
\ruledown{X\ee}{i}{}\\
\ruledown{X\ii}{i}{}
\end{align*}
&
\begin{align*}
\ruledown{Y}{i}{}\\
\ruledown{Y\ee}{i}{}\\
\ruledown{Y\ii}{i}{}
\end{align*}
&
\begin{align*}
\rulethreedown{Y\ee}{i}{Y\ii}{i}{X}{i}{Y}{i}
\hspace*{-0.005\textwidth}\\
\ruledown{A}{i}{}
\end{align*}
&
\begin{align*}
\ruletwodown{X\ee}{i}{X\ii}{i}{X}{i}
\end{align*}
\\
\bottomrule
\end{tabularx}
\ \\
\caption{The ``equations of reasoning'': State-space dynamics of the BDH language model expressed through local graph dynamics with the edge reweighting kernel (Definition~\ref{def:edgereweighting}). The rules are executed for a distributed system of $n$ neurons performing steps of parallel computation and communication during inference. Model parameters are expressed through the weights of edges of graphs $\graphx\ee, \graphx\ii, \graphy\ee, \graphx\ii, \graphs$, and BDH model training is equivalent to defining rule probability amplitudes $\graphx\ee(i,j), \graphx\ii(i,j), \graphy\ee(i,j), \graphy\ii(i,j), \graphs(i,j) \geq 0$ for pairs of neurons $i,j \in \{1,\ldots,n\}$ connected by the edges of these graphs. State is encoded in variables $\sigma(i,j)$ at synapses, representing edges of graph $G_s$. The system proceeds in parallel rounds, with new tokens arriving into the system encoded through variables $X(i)$ at neurons and introduced every $4L$ rounds, where $L$ is a parameter of the model (e.g., $L=8$). The set of rules being executed (for each round modulo $4L$) is given in the table. The readout of the system also happens through variables $X(i)$ at the end of each $4L$ rounds.
(a) Set of rules for the simplified version of the BDH model with no neuron inhibitory circuits and no thresholding ($\graphx\ii=\graphy\ii=0$), capturing the general form of the communication structure and synaptic attention of the model. (b) Set of rules for the general case of BDH, including inhibitory circuits $\graphx\ii$, $\graphy\ii$. An execution of the provided rules is equivalent to the state-space dynamics given by \Eqeqref{eq:bdhgraph}.
}
\label{tab:protocolx}
\end{table}

In the representation in Table~\ref{tab:protocolx} we do not impose how the local thresholding operation within some neuron $i$, of the form $\ix{A}{i}\ ,\ix{B}{i}\dashrightarrow\relu{\ix{A}{i}-\ix{B}{i}}$, should be performed. We leave this as a computational primitive, which can be realized based on approximate counting or a comparator. The way natural neurons achieve thresholding to determine whether input signal excitation outweighs inhibition relies on time-integration of impulses. For realizations in other types of distributed systems and population protocols, we refer the reader to the literature on thresholding and Majority Protocols, cf.~e.g.~\cite{DBLP:conf/focs/DotyEGSUS21,
DBLP:journals/jcss/CzyzowiczGKKSU22}.%

The definition of the protocol does not specify how variable $X(i)$ should be reset when the scheduler passes from layer $L$ of one input token to layer $0$ for the next input token. As with the definition of state-space equations in Section~\ref{sec:BDHGPU}, we leave this open to allow the dynamics to work both with externally provided input (for next-token prediction), or in a self-feedback loop (for autoregressive operation).

\paragraph{Notes on training.} Direct training of the BDH model would be performed by selecting the edges of the considered graphs, and then setting rule weights $\graphx\ee(i,j), \graphx\ii(i,j), \graphy\ee(i,j), \graphy\ii(i,j), \graphs(i,j) \geq 0$ for pairs of neurons $i,j \in \{1,\ldots,n\}$ connected by the edges of these graphs. %

In what follows, we will train a tensor-friendly  special case of BDH, called \BDHGPU, relying on an implicit (generally more efficient) representation of the considered graph parameter weights, using a low-rank product representation for the matrices of these graphs. This representation is reminiscent of the hub-labeling graph representation technique, but is directly suitable for describing and evolving high-conductance scale-free networks. The appropriate architecture is introduced in Section~\ref{sec:BDHGPU}.

\subsection{Interpretation of attention as a micro-inductive bias of reasoning}\label{sec:interpretation_attention}

Rule weights in the edge-reweighting kernel have the interpretation of micro-programs, governed by rules of transformation of state variables of the form $A(i),B(j)\to \sigma(i,j)$ and $A(i),\sigma(i,j)\to C(j)$, defined on edges between nodes $i,j$ of some $n$-node graph.

This formalism can be seen as running an enormous circuit with a form of universal gates given by the transition rules, over a structure of computational elements at nodes, and memory elements on edges of a graph.

While the local rulesets have the form of a rule-based micro-assembly, we leave open the extent to which they should be considered to have an interpretation of programming in logic (as would be the case, e.g., for C-RASP~\cite{yang2024countingliketransformerscompiling}). The natural interpretation of $\sigma(i,j) >0$ is a positive bias associated with the neuron pair $(i,j)$, $i,j \in \{1,\ldots,n\}$, which follows from past context. This can be considered by phrasing the local rules of the system in a framework of logic inference; we do so informally, omitting discussion of layers.

\begin{quote}
If past context $(x_\tau : \tau < t)$ implies that implication $i \to j$ has weight $\sigma_{t-1}(i,j)$, and if the current state at time $t$ implies that $i$ follows from this state with weight $x_{t}(i)$, then the current state at time $t$ implies that $j$ follows from this state with weight $x_{t}(i) \sigma_{t-1}(i,j)$.
\end{quote}

The above is intentionally phrased to resemble the logical axiom $(X\to(i \to j))\to((X\to i) \to (X\to j))$, which is perhaps most prevalent across different formalizations of axiomatic logic, with an application of \textsl{modus ponens} as an inference rule. The inference system of the considered model uses state and model weights to devise its own heuristic for the order of evaluation, i.e., to consider which facts appear to be most plausible to be evaluated next, and to evaluate them in an order based on what follows most strongly from context. In a way consistent with what we expect from informal reasoning in language, the considered weights have a more direct interpretation of an increment of utility associated with a given inference.\footnote{Here, the term \emph{utility} is understood in the sense of evolutionary game theory, as applied to the population of neurons, considering the standard interpretation of replicator dynamics, as applied in the ruleset from Table~\ref{tab:protocolx}. Neurons which win in the natural selection process are added to the activation $Y$.} In the setting of argumentation, this utility-based approach could, for example, guide the inference process from a pair of known concepts in context, a source and a target, to an intermediate concept likely to be a common-neighbor shortcut lying on a logical path between this source and target (cf.~Section~\ref{sec:bdh_feedforwardpropagation} for a discussion of how this type of mechanism is enforced in the feed-forward network of \BDHGPU).

The considered micro-foundational interpretation of attention, defined at the level of individual neurons (or logical variables), does not contradict the way in which Transformer attention is often regarded at the coarser level of vectors through key-query lookup intuitions. At the same time, it highlights that an attention state entry $\sigma(i,j)$ (and similarly, a model edge weight leading from $i$ to $j$) does not have the interpretation of a logical value (i.e., something that is true or false), but an inductive bias associated with how likely the system is to consider the implication `$i\to j$' in its next steps of reasoning, when proposing its next conclusions or next ideas for consideration.

Chains of implications in BDH guide activations along paths in the system graphs $\graphx\ee, \graphy\ee, \corr$. For the latter, attention allows specific implications to enter into paths of thought once the corresponding synapses are open in state $\corr$.

\subsection{Interpretation of BDH as an oscillator network toy-model}\label{sec:toy}

Whereas the interpretation from Subsection~\ref{sec:interpretation_attention} focuses on properties which fallow from the computational function (purpose) of the system, here we outline an interpretation of the behavior of BDH considered purely as a dynamical system.

\paragraph{Definition of the toy-model.} We will consider the toy-model of an $n$-particle system shown in \figref{fig:toy} as an illustration of the general form of dynamics of the state-space equation~\eqref{eq:bdhgraph} of BDH. We draw the $n$ particles in a circle.\footnote{This is a direct tribute to the Kuromato coupled oscillators model; the crucial difference being that in BDH, the elements of state with an interpretation similar to oscillators appear on connections between nodes, not nodes.}

\begin{figure}
\centering
\begin{minipage}{0.40\textwidth}
\includegraphics[width=\textwidth]{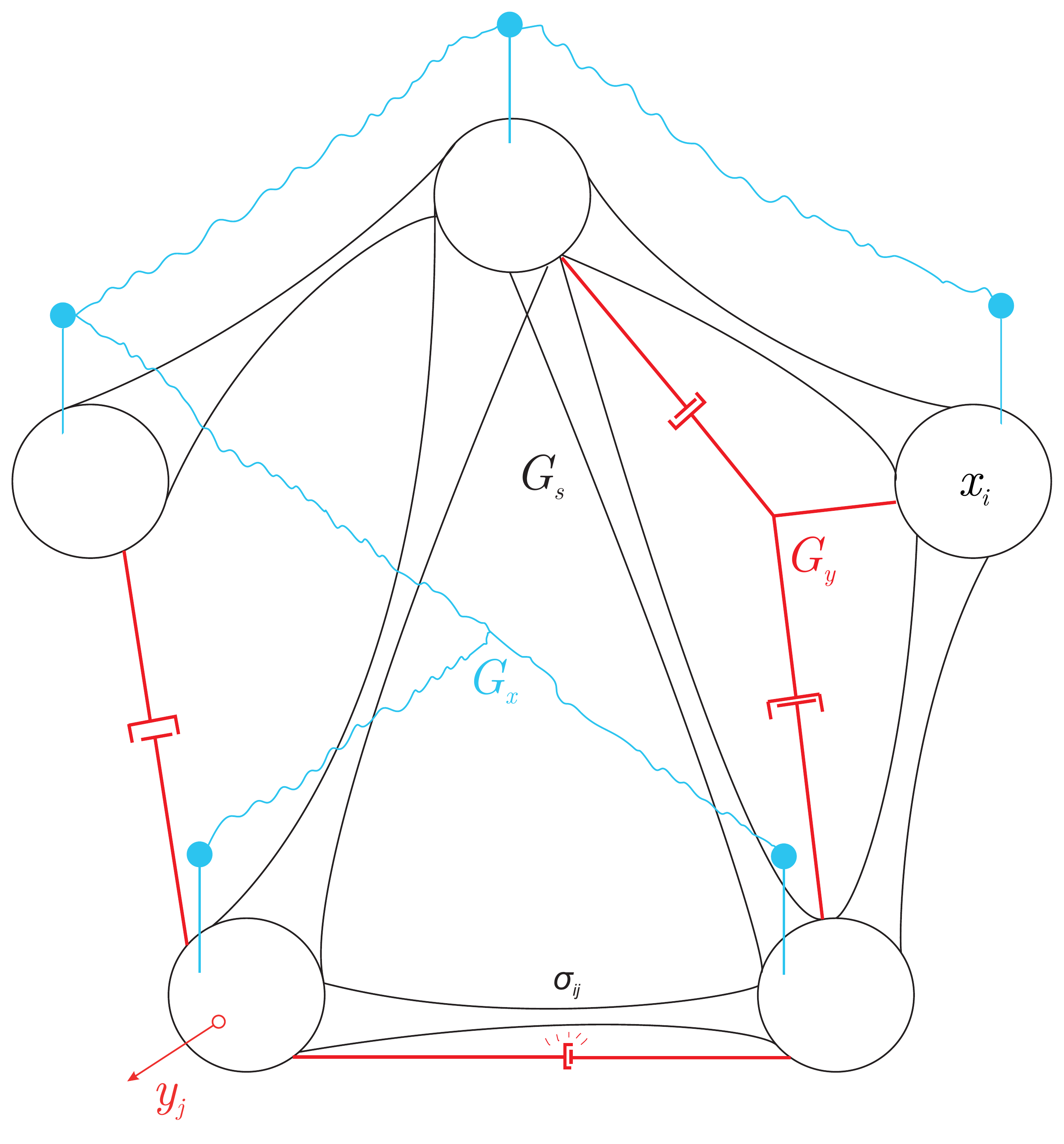}
\end{minipage}
\hspace*{0.01\textwidth}
\begin{minipage}{0.57\textwidth}
\setlength{\extrarowheight}{6pt}
\begin{tabularx}{\textwidth}{lYY}
\toprule
\emph{Symbol} & \multicolumn{2}{c}{\emph{Interpretation in:}}\\
 & Table~\ref{tab:protocolx}, \mbox{State~\Eqeqref{eq:bdhgraph}} & Oscillator Network Toy-Model\\
\midrule
$\graphx, \graphy, \graphs$ & graph parameters of model& wires, prods, and elastic connections\\
$\sigma$ & synaptic state of model & displacement of elastic connections\\
$\xsparse, \xysparse$ & activation vectors & pulses at nodes, state correction\\
\bottomrule
\end{tabularx}
\end{minipage}
\caption{The `physical system' representation of BDH as a physical graph toy-model.}
\label{fig:toy}
\end{figure}

The particles are connected with each other by state elements, represented in \figref{fig:toy} as elastic connectors. The topology of these pairwise connections is given by graph $\graphs$, and may in general be dense.

The signal displays dynamics of state $\state$ through tension on connectors, which evolves at a slower time scale, and a more pulse-like activation dynamics $\xsparse, \xysparse$ (on nodes), appearing and vanishing regularly, at a rapid time scale.

The slower state dynamics represent, in the first order, oscillation or relaxation of the system of elastic connectors. Once an elastic connector between particles $i$ and $j$ has had its endpoints displaced through state $\xsparse$ and $\xysparse$, respectively, a tension appears on this connector, which causes its displacement $\corr(i,j)$ that relaxes over time (damping variant, corresponding to ALiBi), and/or acts as a spring element (oscillator variant, a simplified illustration of RoPE). Initially, $\corr(i,j) = 0$.

The faster dynamics represent the node dynamics of particles. Over time, pulse displacements $x(i)$ happen at nodes, as a result of either previous behavior of the system, or perturbation by an external forcing field (in reality this field would be language input). A node $i$ with displacement $x(i)$ may, due to the aggregated action of tension of \emph{elastic connectors} $\corr(i,\cdot)$ adjacent to it, activate a system of \emph{prods} $\graphy$ adjacent to it, perturbing nodes it hits in this way. If another node $j$ is prodded sufficiently hard, it may cause it to activate a perturbation $y(j)$. The perturbation $y(j)$ of a node $j$ will, in the next step, propagate again to those other nodes $i'$, which are connected to $j$ by a system of \emph{wires} ($\graphx$). If the aggregated pull of wires on a node $i'$ is sufficiently strong, this modifies its pulse displacement $x(i')$. The pulse activation $y(j')$ of some node $j'$, directly followed by pulse activation $x(i')$ of node $i'$, results in an increase in the tension on the connector $(i,j)$, adding to the value of the tension $\corr(i',j')$. All pulse activations $y$ subside, and the pulses propagate, consequently altering the slow state $\corr$.

In general, $\corr(i',j')$ is triggered simply by the temporal connection between the pulse $y(j')$ activating, followed by the pulse $x(i')$ activating immediately afterwards, even if there was no direct causality between the two (although $y(j')$ contributed to pulse $x(i')$ happening if $(j',i') \in \graphx$). An appropriate correspondence of the graphs, $\graphs \subseteq \graphx$, would bring the system close to an observed causal effect on the activated synapse.

The above description of the pulse dynamics was given from the perspective of nodes. From the perspective of connectors, an existing tension on some connector $\corr(i,k)$ propagates through prods $\graphy$ to some nodes $j$, then through wires $\graphx$ to some nodes $i'$, and this finally contributes to tensions on other connectors $\corr(i',j')$. This propagation of state thus happens to 3-hop neighbors, through $i$, $j$, $i'$.

During training, the behavior of the system may, in even longer time scales, result in the propagation of changes of connection weight and structures to graphs $\graphx$ and $\graphy$, as well as (optionally) $\graphs$.

\paragraph{Effects captured by the toy-model.}
We have described a small local graph kernel, with 3-hop locality, capturing the two key effects of the local graph kernel.

The first effect is the graph form of communication pattern between nodes, and thresholding of updates. (We have omitted direct mention of inhibition from discussion of the toy-model, but it is direct to include.)

The second effect is the placement of attention state on node connections, its update patterns, and the dynamics of its relaxation over time.

We intentionally convey the interpretation of node pulses as a differential (gradient) of state on node connections. This interpretation is consistent with our empirical study from Section~\ref{sec:experiments}. It is worth considering once every how many steps of the operation of the toy-model, a single element of state $\corr(i,j)$ is updated. This depends directly on the sparsity of the pulse signals $y(i)$, $x(j)$; at least one of them is, generally, sparse. If the pulses where to happen very seldom for such a pair $(i,j)$, state updates are essentially a ``second-order'' correction effect. By adjusting the frequency of updates, the system can be made to operate exactly at the critical point where this pulse dynamics ceases to be a second-order correction of state $\corr(i,j)$, giving the random variable describing the time between updates of a connection pair $\corr(i,j)$ a heavy power-law-like tail distribution (possibly with different distribution parameters for different pairs $(i,j)$).

In the description of state dynamics, we noted the hop-distance of 3 in the forward propagation of changes to state. Bearing this in mind is helpful when considering how a gradient backpropagation mechanism would follow dependencies between changes of state if such a system were to have its graph weights altered through backpropagation.

Finally, let us clarify the specific choice of kernel we made for BDH. We found it to work well, and we knew how to train BDH models which implement it on GPU (which we will call \BDHGPU). This, with current hardware, made it $10^2-10^5$ times more cost- and time-effective to train models and analyze outcomes than kernels, for which we only knew how to train on CPU. Nonetheless, the question of finding optimal kernels according to different criteria (e.g.: minimality of kernel, best training rate per token, closeness to brain function based on known evidence from brain studies), is an extremely pertinent foundational problem. The problem can be phrased in a ``closed-ended'' way, leaving a finite number of possibilities to be checked, at least when considering small graph kernels. Some kernels may also prove to have superior learning capabilities to the Transformer (and BDH), and if this quality difference is overwhelming, they may eventually prove commercially viable.

In the following, we formalize the choice of kernel for BDH, and also provide a framework to describe other kernels capturing the same effects of graph communication and synaptic attention.

\subsection{Expressing BDH using brain models}\label{sec:brain_models}

The results we obtain for BDH provide direct corollaries on the expressiveness of brain models which are capable of emulating the local graph kernels of BDH. Specifically, a distributed system, which is able to efficiently emulate the local kernels of BDH, has sufficient expressiveness to perform language inference and reasoning at least to the same extent as BDH.

\begin{observation}\label{obs:brain}
The local ruleset of BDH (Table~\ref{tab:protocolx}) can be expressed through a combination of simple mechanisms: neuron activation with positive state variables, Hebbian learning, and communication through excitatory and inhibitory circuits with thresholding.
\qed
\end{observation}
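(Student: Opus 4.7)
The plan is to proceed rule-by-rule through the four columns of Table~\ref{tab:protocolx}(b), exhibiting for each rule either a Hebbian plasticity step, a directed excitatory/inhibitory synaptic transmission step along a fixed weighted graph, or an integrate-and-fire thresholding step at a single neuron, and then to observe that every state variable in the construction remains non-negative.

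First, I would partition the rules into three groups according to the primitive they exercise. The communication rules of the form $A(i) \xrightarrow{G(i,j)} B(j)$ appearing in rounds $4l+2$ and $4l+3$ simply fire a signal from $i$ along a fixed positive edge of $G$ into a target accumulator at $j$; splitting this into the two parallel graphs $G\ee$ and $G\ii$ realizes an excitatory and an inhibitory circuit, so that $Y\ee, X\ee$ accumulate depolarizing current and $Y\ii, X\ii$ accumulate hyperpolarizing current. The bilinear transmission rule $X(i), \sigma_l(i,j) \xrightarrow{} A(j)$ in round $4l$ is an instance of the same excitatory propagation, except that the effective edge weight is not a fixed model parameter but the current synaptic state $\sigma_l(i,j)$, i.e.\ the current efficacy of the synapse. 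Since all of $\graphx\ee, \graphx\ii, \graphy\ee, \graphy\ii, \graphs$ carry only non-negative weights and the accumulators are reset to zero between rounds by the associated $\downarrow$ rules, each of these quantities is a positive state variable throughout.

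Second, the rule $Y(i), X(j) \xrightarrow{G_s(i,j)} \sigma_l(i,j)$ in round $4l+1$ is precisely the classical Hebbian potentiation law: it increments the synapse from $i$ to $j$ by the product $Y(i)\cdot X(j)$ whenever pre- and post-synaptic activities co-occur, scaled by a genetic connectivity mask $G_s(i,j)$. Combined with the slow leak $\sigma_l(i,j)\downarrow_{1-u(i,j)}$ from round $4l$, this is a decaying Hebbian trace, which at the level of mechanisms needs only signals already present on the two endpoints of the synapse, matching the informal rule \eqref{eq:intuitive_hebb} of Section~\ref{sec:intuition}.

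Third, and this is where I expect the main obstacle, I need to realize the thresholded combinations $\relusm{Y\ee(i) - Y\ii(i)}, X(i) \xrightarrow{} Y(i)$ and $\relusm{X\ee(i) - X\ii(i)} \xrightarrow{} X(i)$, the only places in the ruleset where two state variables must be subtracted and clamped from below. The natural realization is an integrate-and-fire soma at $i$ whose membrane potential receives the excitatory aggregate and is shunted by the inhibitory aggregate, emitting an output of magnitude proportional to the difference only when that difference is positive; the subsequent modulation by $X(i)$ plays the role of a gating by persistent somatic activity. Because excitatory and inhibitory drives arrive on disjoint channels (the parallel rules associated with $G\ee$ and $G\ii$), the subtraction is performed as cancellation of currents on the soma rather than via a signed state variable, which preserves the invariant that every stored quantity is non-negative; equivalently, one can appeal to the Majority/approximate-counting population protocols cited just after Table~\ref{tab:protocolx}. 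The remaining damping rules $q\downarrow$ correspond to passive reset between spike cycles, and the round-robin scheduler reflects the standard phase separation between communication and somatic integration in spiking networks. Putting these ingredients together realizes every row of Table~\ref{tab:protocolx}(b) using only Hebbian plasticity, excitatory/inhibitory propagation on fixed graphs, and integrate-and-fire thresholding, with positivity of state preserved throughout; thresholding is the sole nonlinearity that requires care.
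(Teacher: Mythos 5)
Your proposal is correct and follows essentially the same route as the paper, whose own justification is the paragraph immediately after the Observation: it likewise identifies Rounds $4l+2$ and $4l+3$ with excitatory/inhibitory circuits and integrate-and-fire thresholding, Round $4l+1$ with Hebbian synaptic potentiation (realized as a stochastic AND-gate on spiking signals), and Round $4l$ with transmission through the strengthened synapse plus its decay. Your treatment is somewhat more explicit about positivity of state and the realization of the $\relusm{\cdot}$ subtraction, but the decomposition into the three primitives is the same.
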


We note that in the description of the rulesets in Table~\ref{tab:protocolx}, Round~($4l+2$) and ($4l+3$) directly describe the use of excitatory and inhibitory circuits with integrate-and-fire thresholding at neurons. Round~($4l+2$) additionally includes a form of competition effect between neurons, realized fully locally at a neurons using the multiplication effect of replicator dynamics. The communication rule of Round~($4l+1$) involves the potentiation of a synapse based on activations of neurons at its endpoints. As was discussed in Subsection~\ref{sec:formalism}, the natural mechanism for implementing increase in synaptic strength is through spiking dynamics, where the execution of the communication rule of Round~($4l+1$) is a stochastic AND-gate on signals. Finally, Round~($4l$) describes the long-term effects of using a strengthened synapse for transmission of signals, and its strength decrease.

We can use the framework of expressiveness, as captured in Observation~\ref{obs:brain}, to shed light on the capabilities of natural systems through their ability to emulate artificial ones. Specifically, if a natural system A can plausibly emulate some artificial system B by using the resources it has at its disposal, and artificial system B is able to solve a problem P, this can be used to explain: (1) why the natural system A is sufficiently powerful to solve problem P, and (2) plausibly, that the purpose for which system A is equipped with certain mechanisms includes solving problem P, if such mechanisms prove useful in the emulation of B.

The experimental validation of the performance of BDH architecture at Transformer level (Section~\ref{sec:comparison_bdh_gpt_transformers}) confirms that BDH is sufficient to provide language and reasoning function at scale. We can thus make the following statement.

\begin{finding}
The \emph{Hebbian learning mechanism} is plausibly needed, and in combination with neural circuits, sufficient, for performing the \emph{reasoning} function at the scale of the brain. This includes performing language function with attention, and performing thought processes, at a time scale of minutes.
\end{finding}

In view of our results, Hebbian learning can be seen as a form of unsupervised learning over time, expressed through graph edge reweighting, to perform reasoning and language inference using the attention mechanism. This type of result can be compared to an analogous interpretation for Hebbian learning in the context of vision, as pioneered in~\cite{brunel1996}. With the setting of language and chain-of-thought reasoning, we are able to directly capture effects of time in the brain.

Given the interpretation of neuron activations as carrying the necessary gradients of synaptic state (Section~\ref{sec:toy}), the problem of supervised learning (i.e., taking into account feedback signals) plausibly becomes deferred to a selective transfer and re-encoding of gradients from state into weights, at longer time scales. We return to a discussion of this point in the Conclusions, bearing in mind the fact that the general difficulty of the problem is now reduced through restrictions on the considered edge-reweighting kernel, and the relative rarity of synapse activation events.

Our work also suggests a framework for further discussion of reasoning function, with an anchoring point for this type of investigation in the time-scale of `split-seconds' to `minutes'. The question of shorter time scales is then one of designing more precise communication and computational primitives for spiking neurons and synaptic plasticity, which can be used to perform primitives for individual rules of graph kernels for the inference dynamics.\footnote{While we do not provide direct explanations for effects at shorter time scales and scheduler primitives, we note the type of kernels we rely on are well understood in terms of the ability to work with asynchronous schedulers, and obtaining emergence of synchronization.~\cite{DBLP:conf/podc/KosowskiU18,DBLP:conf/stoc/DudekK18}} The question of longer time scales, and the changes to model structure that follow in a learning process, naturally follows any explanation of unsupervised (Hebbian) learning from the shorter time scale that is considered here, as a mechanism of transfer from state to weights; we come back to this point in the Conclusions.

\section{\BDHGPU: a tensor-friendly version of the BDH architecture}\label{sec:bdh_arch}\label{sec:BDHGPU}

We will now introduce \BDHGPU, a variant of the BDH reasoning system, expressed in the language of tensor operations typical for Deep Learning models. \BDHGPU provides a GPU-compatible implementation of BDH. \BDHGPU can be easily implemented in PyTorch, a didactic code listing is provided in Appendix \ref{sec:bdh_code_listing}). Furthermore, \BDHGPU can be trained on large text datasets using error backpropagation, and has been shown experimentally to match performance of GPT-based LLMs.

The main steps towards the efficient implementation of \BDHGPU on GPU are:
\begin{enumerate}
    \item Express graphs $G_x$ and $G_y$ a low-rank factorizations of their transition matrices, followed by ReLU nonlinearities \cite{nair2010rectified} (we explore graph properties of these approximations in Section~\ref{sec:modularity}). We never materialize these matrices, but maintain instead a low dimensional state per each neuron.
    \item Never materialize the $\sigma$ state matrix, preferring instead to access it using a linear attention operation over low-rank representation of values (we explore the properties of this attention mechanism in Section~\ref{sec:bdh_lin_attention_monosemanticity}).
    \item Normalize all state variables using LayerNorm~\cite{ba2016layer}.
\end{enumerate}
We will refer to the architecture in the final intermediate step, before the introduction of LayerNorm, as BDH-Normfree.

\subsection{Notation for \BDHGPU} 

We consider the $\BDHGPU(n,d)$ architecture parameterized by positive integers $n, d$. The system scales in dimension $n$ --- the number of particles. In what follows, we will use the terms \emph{particle} and \emph{neuron} interchangeably. Dimension $d$ is a measure of the number of parameters per neuron required to represent the interaction of this neuron with the particle interaction field or interaction graph. For asymptotic analysis, we assume that $n \to +\infty$ is the basis for all asymptotics, and $n \gg d > C \log n$ holds for some sufficiently large constant $ C > 0$. For the tensor representation of the model, which is the primary one for implementation and empirical studies here in this paper, vectors in $R^{d}$ have an interpretation as (fuzzy) addresses of a virtual memory space of size $n$, hence the assumption $d = \Omega(\log n)$ cannot be dispensed with while using natural (linear-algebraic) arithmetic on real numbers. We later show how to avoid this assumption in graph-based models, by using uniform local graph kernels of smaller degree with a graph communication structure.

\subparagraph{Nonlinearities: ReLU and LayerNorm.} In what follows, we assume that a one-dimensional vector is denoted by a lower-case letter, e.g., $z$, with $z \in R^{n\times 1} \cong R^{n}$ unless otherwise stated. Vectors which appear in dimension $d$ are named with an asterisk, e.g., as $z^* \in R^{d\times 1}$. We denote the \emph{ReLU operation} $\relu{z} := \max_{i\in{1,\ldots,n}}\{0,z_i\}$.

We further define \emph{LayerNorm} of a vector $z^* \in R^{d \times 1}$ in a uniform non-parametric way, $\layernorm{z^*} = \frac{z^* - \mathbf{1}\E_d z^*}{\std_d z^*}$, where $\E_d$ and $\std_d$ are estimators of mean and standard deviation in dimension $d$, respectively. 

\subparagraph{Activation vectors and parameter matrices.} In vectors representing activations, each scalar element (element of $R$) of the activation vector has the interpretation of a `scalar' activation state of a single particle. Throughout this text, $R$ is generally assumed be the field of real numbers $R:=\R$, and scalars are represented by a fixed-precision floating point number in experiments.\footnote{When only asymptotic analysis is the object, it is sometimes convenient to consider $R := \R^k$ for some $k=2,3,\ldots$. Specifically, considering $R:=\R^2$ allows $SO(2)$ rotations on $\R^2$ to be expressed as `scalar' ones on $R$, thus making the $\R^{2n \times 2n}$ RoPE block-diagonal matrix of a diagonal matrix in $R^{n\times n}$ ~\cite{su2023roformerenhancedtransformerrotary}. This provides a consistent formalism for ALiBi~\cite{press2022trainshorttestlong}, RoPE, and extensions such as LieRE~\cite{ostmeier2025lierelierotationalpositional} as diagonal (communication-free) operations. In all cases, the application of ReLU $\relu{\cdot}$ to a scalar remains coordinate-wise in $\R$.}

By convention, in discussions of parameters, matrices denoted $\graphx, \graphy, \graphs \in R^{n \times n}$ will represent neuron-neuron interaction, encoders $\encoder \in R^{d \times n}$ reduce dimensionality of activation vectors (e.g., $a^* = \encoder z$ for $z \in R^n$), and decoders $\decoder \in R^{n \times d}$ lift them back into $R^n$ (e.g., $z' = \decoder a^*$).

Depending on the architecture variant considered, the state will either have the interpretation of a neuron-neuron correlation matrix $\corr \in R^{n \times n}$, or a compressed form with reduced dimensionality, $\state = E \corr \in R^{n \times d}$.

\subsection{Definition of BDH-GPU as a state-space system}

We now define the main architecture of this paper in its tensor flavor, called \BDHGPU.

\begin{definition}[inference dynamics of \BDHGPU]\label{def:bdh}
A \BDHGPU state-space system $\BDHGPU(n,d)$, given by three parameter matrices: $\encoder \in R^{d \times n}$ and $\decoderx, \decodery\in R^{n \times d}$, performs iteration over time $t=0,1,2\ldots$ and layers $l=1,2\ldots L$, governed for any time $t$ by the following recurrence:
\begin{align}
\begin{split}\label{eq:integral}
\ket{\xsparse_{t,l}} &:= \xsparse_{t,l-1} +
    \relu{\decoderx \ket{\vv_{t,l-1}}}\\
\ket{\yKV_{t,l}} &:= \sum_{\tau<t}
    \ket{\vv_{\tau,l-1}}
    \bra{\xsparse_{\tau,l}}
    \rope^{t-\tau}
    \ket{\xsparse_{t,l}}
    \\
\ket{\xysparse_{t,l}} &:=
    \relu{\decodery \layernorm{\ket{\yKV_{t,l}}}}
    \*
    \ket{\xsparse_{t,l}}\quad\\
\ket{\vv_{t,l}} &:=
    \layernorm{ \encoder \ket{\xysparse_{t,l}}}
\end{split}
\end{align}
where inputs to the system are provided through the boundary condition $\vv_{\tau,0}$ in layer $0$, for $\tau=0,1,2\ldots t$.

Here, $U \in R^{n\times n}$ is a diagonal or block-diagonal matrix representing local rotation or damping of state (such as ALiBi or RoPE), $L\in \Nat$ is the number of layers.%
\end{definition}

\paragraph{\BDHGPU as a language model.} \BDHGPU is intended to be used as a language model, processing one token per time step, in which case the input $\vv_{t,0}$, for $t \in \Nat$, is obtained using some (linear) encoding function from the token alphabet $\Omega$, $f_e : \Omega \to R^d$, as applied to the $t$-th input tokens. Similarly, the logits of the $t$-th output token are extracted using some decoding function applied to outputs of the $L$-th layer $\vv_{t,L}$, using a (linear) token decoder function $f_d : R^d \to \Omega$. The source of language tokens may be external, as is the case for next token prediction tasks, or auto-regressive.

For training, we assume that a model $M$ trained in the $\BDHGPU(n,d)$ architecture has the trainable parameter set $M = (\encoder, \decoderx, \decodery, f_e, f_d)$, with all parameters trained together. The model has $3 nd + 2 \Omega d = (3+o(1)) nd$ parameters, i.e., the scalable part of the model is concentrated in the total of $3nd$ parameters of the matrices $(\encoder, \decoderx, \decodery)$.

\paragraph{State-space representation.} The notation of Definition~\ref{def:bdh} is chosen so as to exhibit its direct applicability in a Transformer-like token-parallel training framework. Vector $\vv_{\tau,l-1}$ has the interpretation of attention `value' inputs at time $\tau$ in layer $l$. Vector $\yKV_{t,l}$ represents the result of a linear attention mechanism for time $t$ in layer $l$.

Denoting in~\eqref{eq:integral} the model's attention state as
\begin{equation}
\label{eq:kvstate}
\state_{t-1,l} =
    \sum_{\tau < t}
        \ket{\vv_{\tau,l-1}}
        \bra{\xsparse_{\tau,l}}
        \rope^{t-\tau}
\end{equation}
we obtain the equivalent but more compact form of representing the inference dynamics of \BDHGPU as a state-space model, presented in \figref{fig:ss}, \eqeqref{eq:bdh}.

\mathleft
\begin{figure}[ht!]
\centering
\fbox{\begin{minipage}[t][0.12\textheight]{0.47\textwidth}
BDH
\begin{align}
\begin{split}\label{eq:bdhgraph}
\corr_{t,l} &:= \lr{
    \corr_{t-1,l}
+
    \lr{
        \lr{
            \ket{\xysparse_{t,l-1}}
            \bra{\xsparse_{t,l}}
        }
        \odot
            \graphs
    }
}\rope\\
\ket{\xsparse_{t,l}} &:= \xsparse_{t,l-1} +
    \relu{\lr{\graphx\ee - \graphx\ii} \ket{\xysparse_{t,l-1}}}\\
\ket{\xysparse_{t,l}} &:=
    \relu{
        \lr{\graphy\ee - \graphy\ii}
            \corr_{t-1,l}
            \ket{\xsparse_{t,l}}
    }
    \*
    \ket{\xsparse_{t,l}}
\end{split}
\end{align}
\addtocounter{equation}{1}
\end{minipage}}
\hspace*{0.6\baselineskip}
\fbox{\begin{minipage}[t][0.12\textheight]{0.47\textwidth}
\BDHGPU
\begin{align}
\begin{split}\label{eq:bdh}
\state_{t,l} &:= \lr{
    \state_{t-1,l}
+
    \layernorm{
        \encoder \ket{\xysparse_{t,l-1}}
    }
    \bra{\xsparse_{t,l}}
}\rope\\
\ket{\xsparse_{t,l}} &:= \xsparse_{t,l-1} +
    \relu{\decoderx \layernorm{ \encoder \ket{\xysparse_{t,l-1}}}}\\
\ket{\xysparse_{t,l}} &:=
    \relu{
        \decodery
        \layernorm{
            \state_{t-1,l}
            \ket{\xsparse_{t,l}}
        }
    }
    \*
    \ket{\xsparse_{t,l}}
\end{split}
\end{align}
\addtocounter{equation}{-2}
\end{minipage}}
\\[0.005\textwidth]
$\searrow$\hspace*{0.3\textwidth}$\nearrow$
\\[0.005\textwidth]
\fbox{\begin{minipage}[t][0.14\textheight]{0.48\textwidth}
BDH-Normfree
\vspace*{-0.02cm}
\begin{align}
\begin{split}\label{eq:bdhnoln}
&\hspace*{-0.65cm}\begin{rcases}
\corr_{t,l} &:= \lr{
    \corr_{t-1,l}
+
    \ket{\xysparse_{t,l-1}}
    \bra{\xsparse_{t,l}}
}\rope\\
\state_{t,l} &:= \lr{
    \state_{t-1,l}
+
    (\encoder\ket{\xysparse_{t,l-1}})
    \bra{\xsparse_{t,l}}
}\rope
\end{rcases}{\,^{\textrm{alternative}}_{\textrm{representations}}}\hspace*{-2cm}\\
\ket{\xsparse_{t,l}} &:= \xsparse_{t,l-1} +
    \relu{\decoderx\encoder \ket{\xysparse_{t,l-1}}}\\
\ket{\xysparse_{t,l}} &:=
    \relusm{
        \decodery
        \underbrace{
            \encoder\corr_{t-1,l}
        }_{^{\state_{t-1,l}}}
            \ket{\xsparse_{t,l}}
    }
    \*
    \ket{\xsparse_{t,l}}
\end{split}
\end{align}
\addtocounter{equation}{1}
\end{minipage}}
\caption{State-space equations of the model architectures introduced in this paper.
All architectures refer to a set of $n$ interacting particles (neurons), with activation vectors $x_{t,l} \in (R^+)^{n}$. Vector $y_{t,l} \in (R^+)^{n}$, $y_{t,l}$ is (typically) sparse in the sense of $\|y_{t,l}\|_0$. Variables $\state_{t,l} \in R^{n \times d}$ or $\corr_{t,l} \in R^{n \times n}$ represent hidden state of the system.
$\diamond$ The graph-based BDH dynamics equation~\eqref{eq:bdhgraph}, equivalent to the ruleset from Table~\ref{tab:protocolx}, serves as a starting point for development of architectures represented as local graph kernels in a distributed computing system.
$\diamond$ The simplified BDH-Normfree equation~\eqref{eq:bdhnoln} is a special case of BDH. Up to lack of LayerNorms, it approximates the inference dynamics of \BDHGPU, with the correspondence $\state_{t,l}=\encoder\corr_{t,l}$.
$\diamond$ The tensor-based \BDHGPU architecture is described by equations~\eqref{eq:bdh} (mathematically equivalent to Definition~\ref{def:bdh}, \eqeqref{eq:integral}~and~\eqref{eq:kvstate}) and is the primary point of reference for all model training and all empirical results presented in this study. For a discussion of extensions to \BDHGPU such as heads, see Subsection~\ref{sec:layersheads}. A complete code listing for \BDHGPU is provided in Appendix~\ref{sec:bdh_code_listing}.
}
\label{fig:ss}
\end{figure}
\mathcenter

\begin{figure}[ht!]
\centering
\includegraphics[trim=6cm 4cm 0cm 4cm,width=0.5\textwidth]{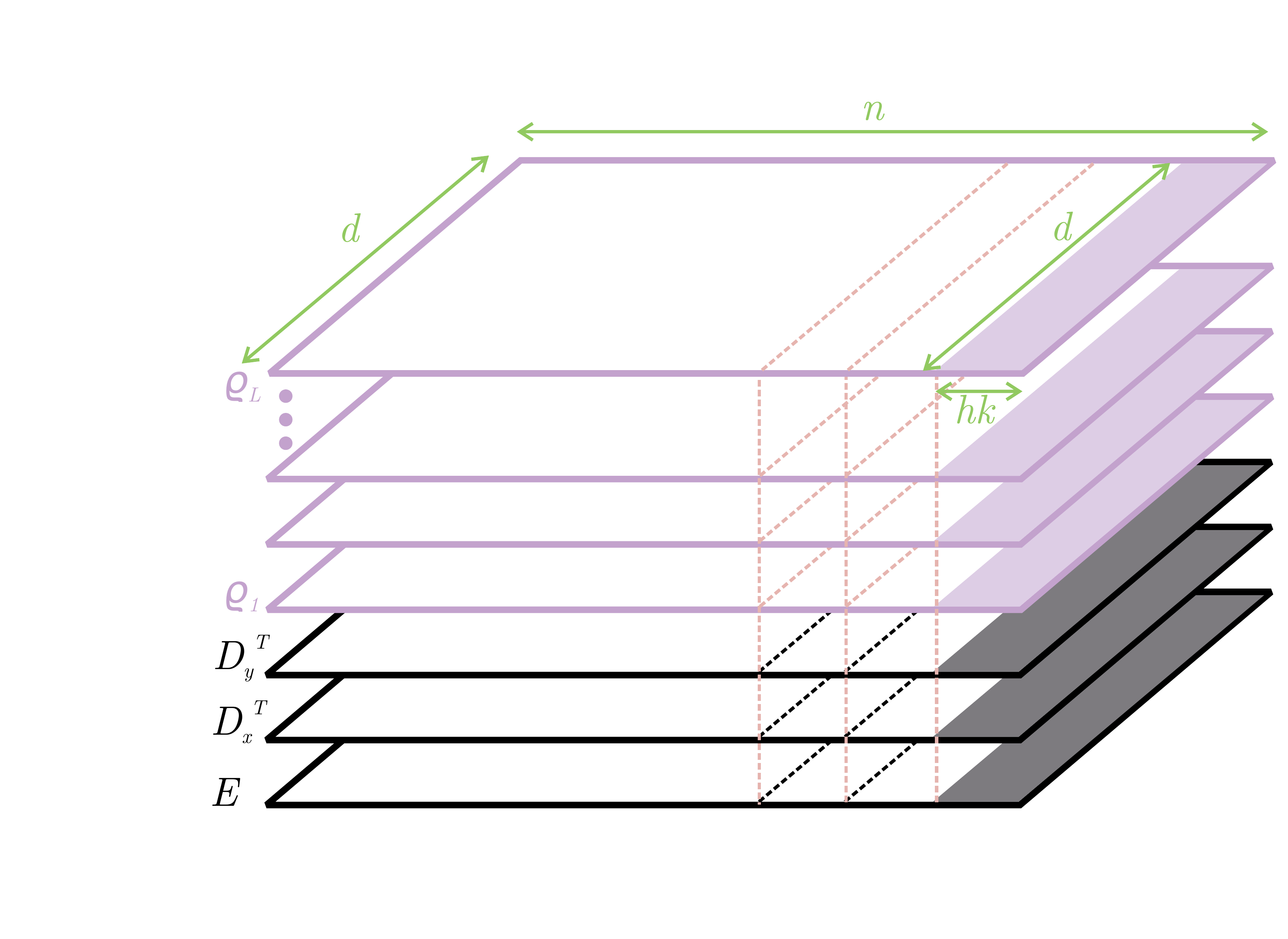}
\caption{Scaling of \BDHGPU architecture in dimension $n$. The other parameters can be considered fixed during scaling. For example, with choice of $d=256$ for low-rank dimension, $k=2$ for neuron pairing with RoPE, and $h=1$ for a single-head architecture, the model scales linearly in dimension $n$ in chunks of $dhk = 256\cdot2\cdot1=512$ parameters.}
\label{fig:stack}
\end{figure}

In what follows, we will perform analysis focusing on the state-space representation of the architecture given by \eqeqref{eq:bdh}.

\subsection{Interpretation of \BDHGPU as a local interacting particle system} The \BDHGPU dynamics equation~\eqref{eq:bdh} has the interpretation of a $n$-particle system, with the state $\state_{t}(i)$ of the $i$-th particle, $i=1,\ldots,n$, given at the end of time $t$ by a vector in $R^d$ for each layer:
$$
\state_i(t):=(\state_{t,l\ (i,\cdot)} : l \in (1, \ldots L)).
$$
Overall, as we will see directly, the way particle $i$ interacts with other particles at time $t$ is described by the following tuple $Z_i$:
$$
Z_i(t) :=(\state_i(t), \encoder_{(i,\cdot)}, \decoderx_{\,(\cdot,i)}, \decodery_{\,(\cdot,i)}).
$$
Here, $\state_i(t)$ represents the in-context state associated with particle $i$ (initialized as $\0$ at the start of inference), while the other three vectors of length $d$ associated with this particle are trainable, but do not change during inference.

The system scales in dimension $n$ and is completely uniform in this dimension, excepting following effect. Let $k$ denote the size of largest block in the block-diagonal matrix $U$; then particles, are bound by this effect into non-uniform $k$-tuples when $k>1$ ($k=1$ when $U$ is the ALiBi matrix, and $k=2$ when $U$ is the RoPE matrix). %
Thus the system, in general, scales in the dimension of $n$ uniformly, in chunks of $k$ particles (see Fig.~\ref{fig:stack}).

The interaction between particles is, intuitively, local. To be able to proceed with discussion with rigor and without complicating notation, we assume for the analysis that $k=1$. We also drop LayerNorms from the equations of inference dynamics. (Models generally do not train following \BDHGPU without any LayerNorm, but we observed empirically that there is some flexibility as to where these LayerNorms are placed; they can also be moved to the neuron dimension $n$, and they are parameter-free.) The dynamics without LayerNorm are represented under the name BDH-Normfree in Fig.~\ref{fig:ss}.

We have the following.

\begin{observation}[local particle interaction `by mean-field']
The BDH-Normfree dynamics have the interpretation of a mean-field interaction between particles, fully characterized at any time by $O(dL)$ parameters of particle in state, and $O(d)$ parameters in particle representation.
\end{observation}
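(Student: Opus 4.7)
The plan is to verify the two quantitative claims ($O(dL)$ state parameters per particle, $O(d)$ representation parameters per particle) together with the qualitative mean-field claim, by rewriting the BDH-Normfree recurrence \eqref{eq:bdhnoln} coordinate-by-coordinate and exhibiting the global quantities through which particles communicate. First, I would fix a particle index $i$ and read off the $i$-th column of the $\state$-update, as well as the $i$-th entry of the $\xsparse$- and $\xysparse$-updates, under the simplifying assumption $k=1$ so that $\rope$ is diagonal. Factoring $\encoder$ out of every appearance of $\xysparse_{t,l-1}$, one sees that this variable enters all downstream updates only through the $d$-dimensional aggregate
\[
\psi_{t,l-1} \;:=\; \encoder\,\ket{\xysparse_{t,l-1}} \;\in\; \R^d .
\]
Similarly, the attention readout that drives $\xysparse_{t,l}$ couples $\state_{t-1,l}$ and $\xsparse_{t,l}$ only through the product
\[
\phi_{t,l} \;:=\; \state_{t-1,l}\,\ket{\xsparse_{t,l}} \;\in\; \R^d .
\]
Both $\psi$ and $\phi$ are $d$-dimensional aggregates obtained by summing per-particle contributions, namely $\psi_{t,l-1} = \sum_i \xysparse_{t,l-1}(i)\,\encoder_{(\cdot,i)}$ and $\phi_{t,l} = \sum_i \xsparse_{t,l}(i)\,\state_{t-1,l\,(\cdot,i)}$, which is precisely the hallmark of a mean-field coupling.

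Second, I would derive the per-particle equations in closed form. The updates for particle $i$ at layer $l$ reduce to
\begin{align*}
\state_{t,l\,(\cdot,i)} &= \rope_{ii}\bigl(\state_{t-1,l\,(\cdot,i)} + \xsparse_{t,l}(i)\,\psi_{t,l-1}\bigr),\\
\xsparse_{t,l}(i) &= \xsparse_{t,l-1}(i) + \relu{(\decoderx\,\psi_{t,l-1})_i},\\
\xysparse_{t,l}(i) &= \relu{(\decodery\,\phi_{t,l})_i}\cdot \xsparse_{t,l}(i) .
\end{align*}
Each such update depends only on quantities local to particle $i$ --- its own previous state $\state_{t-1,l\,(\cdot,i)} \in \R^d$, its own activations $\xsparse_{t,l-1}(i)$ and $\xsparse_{t,l}(i)$, the fixed rows $\decoderx_{(i,\cdot)}, \decodery_{(i,\cdot)} \in \R^d$, and the scalar $\rope_{ii}$ --- together with the two global fields $\psi_{t,l-1}$ and $\phi_{t,l}$ that are broadcast to every particle. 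The column $\encoder_{(\cdot,i)}$ enters only in the aggregation step that forms $\psi$, where it is weighted by the scalar $\xysparse_{t,l-1}(i)$, confirming that all interparticle coupling is concentrated in the formation of the two $R^d$ fields.

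Third, I would count parameters. The in-context state of particle $i$ across all $L$ layers is the tuple $(\state_{t,l\,(\cdot,i)})_{l=1}^{L} \in (\R^d)^L$, giving $dL = O(dL)$ scalars per particle, as claimed. The fixed representation of particle $i$ is the triple $(\encoder_{(\cdot,i)}, \decoderx_{(i,\cdot)}, \decodery_{(i,\cdot)}) \in (\R^d)^3$, giving $3d = O(d)$ scalars, also as claimed. The fields $\psi$ and $\phi$ live in $\R^d$ and are shared across all $n$ particles, so they do not contribute to per-particle counts. The main obstacle is not deep; it is purely notational bookkeeping, ensuring that the column/row indexing is consistent and that every appearance of $\xsparse$ and $\xysparse$ in the equations is routed through one of the two global fields. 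A minor subtlety is the case $k>1$ for $\rope$ (e.g.\ RoPE), where one must bundle particles into $k$-tuples and absorb the $SO(2)$ block into a per-tuple state update, inflating per-particle state and representation by a constant factor without changing the $O(dL)$ and $O(d)$ asymptotics.
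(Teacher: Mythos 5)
Your proposal is correct and follows essentially the same route as the paper: the paper's own justification is the informal three-fold expansion (computing/physical/engineering) immediately after the observation, whose core is exactly your identification of the two $d$-dimensional broadcast aggregates $\psi_{t,l-1}=\encoder\,\xysparse_{t,l-1}$ and $\phi_{t,l}=\state_{t-1,l}\,\xsparse_{t,l}$ as sums of per-particle contributions, together with the counts $dL$ for the per-layer state columns and $3d$ for the fixed triple $(\encoder_{(\cdot,i)},\decoderx_{(i,\cdot)},\decodery_{(i,\cdot)})$. Your coordinate-wise derivation is in fact more explicit than the paper's prose, and correctly handles the minor points (use of $\state_{t-1,l}$ rather than $\state_{t,l}$ in the attention readout, and the $k>1$ bundling for RoPE) that the paper only mentions in passing.
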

This observation is essential for the subsequent discussion, and it can be expanded in three different ways.

In computing terms, at any time $t$ and in any layer $l$, the action of the system can be represented as an iterated application of the dynamics equations~\eqref{def:bdh}, with each of the particles realizing, for each equation in each layer (i.e., a total of $3L$ times), a form of micro-program, involving local computation and communication with other particles by broadcast. In a framework of local distributed computing (cf.~e.g.~\cite{Peleg}), it would be represented as a node performing the following form of local kernel as a part of a networked system:
\begin{enumerate}
\item compute some message vector $m_i \in R^d$ locally (without communication with other particles), based only on current activation $x_{t,l,\,i}$, $y_{t,l,\,i}$ and previous state $Z_i(t-1)$,
\item broadcast message $m_i \in R^d$ to other particles,
\item receive the mean-field message $\bar m = \sum_{j=1}^n m_j \in R^d$, identical for all particles,
\item update local activation variables for the next layer $l+1$, and update new state $\sigma_i(t) \subseteq Z_i(t)$, based on the received result $\bar m$ of the broadcast and local computation.
\end{enumerate}

In Physical terms, we observe that the interaction field of the particles, which realizes the broadcast, is localized, and can at any time $t$ be expressed as a sum of pairwise particle interaction terms between particles $i, j \in 1,\ldots,t$. These pairwise interactions depend only on parameters $Z_i(t-1)$ and $Z_j(t-1)$, and the activation variables of these particles, representing properties of these particles at time $t$ and expressible through $O(Ld)$ scalars. This interaction field evolves with time $t$ together with $Z_i$ and $Z_j$.\footnote{Note that $Z_i(t-1)$ depends only on $\state_{t-1,l}$, not $\state_{t,l}$. This is because of the stopping index of $\tau = t-1$ in the definition of attention $\yKV$ in Def.~\ref{def:bdh}, and is intentional.}

In Engineering terms, we observe that any transformation of a length-$n$ vector into another length-$n$ vector passes through an intermediary low-rank representation of dimension at most $d$. An example is the equation for $x_{t,l}$ in~\eqref{eq:bdhnoln}, which reduces length-$n$ vector $y_{t,l}$ to a length $d$-vector through application of the encoder matrix $\encoder$, before lifting the dimension back to $n$ by an application of the decoder $\decoderx$.

\subsection{Expressing \BDHGPU using BDH: preserving parameter and state size}

\BDHGPU and BDH both represent $n$-particle systems. For a special parameter choice (of BDH), they have equivalent patterns of communication and of computation (up to placement of layer norms).

\begin{observation}[BDH-Normfree is a special case of the BDH graph model]\label{obs:equivalence}
Models in the BDH-Normfree architecture (\eqeqref{eq:bdh}) and models in the BDH architecture (\eqeqref{eq:bdhgraph}) are formally equivalent (i.e., the same model) subject to the following choice of model parameters of BDH: \begin{equation}\label{eq:equivalence}
\graphx\ee-\graphx\ii= \decoderx\encoder,\quad \graphy\ee-\graphy\ii= \decodery\encoder,\quad \graphs = \1^{n \times n},
\end{equation}
where $\1^{n \times n}$ is the all-ones matrix.\qed
\end{observation}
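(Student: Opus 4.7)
The plan is to verify the three-line-by-three-line correspondence between the state-space equations~\eqref{eq:bdhgraph} (BDH) and~\eqref{eq:bdhnoln} (BDH-Normfree) under the substitutions in~\eqref{eq:equivalence}, treating $\corr_{t,l}$ (BDH) as the primary state variable and observing that the BDH-Normfree auxiliary $\state_{t,l}$ satisfies $\state_{t,l} = \encoder\,\corr_{t,l}$ throughout the run. The proof is essentially algebraic substitution; no convergence, limit, or training argument is needed, since equivalence is asserted for arbitrary fixed model weights and boundary conditions.

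First I would dispose of the easy equation, the update for $\xsparse_{t,l}$. Direct substitution of $\graphx\ee - \graphx\ii = \decoderx\encoder$ into the BDH line
\[
\ket{\xsparse_{t,l}} = \xsparse_{t,l-1} + \relu{(\graphx\ee - \graphx\ii)\ket{\xysparse_{t,l-1}}}
\]
produces exactly the BDH-Normfree line for $\xsparse_{t,l}$. Next I would handle the $\corr_{t,l}$ update. Using $\graphs = \1^{n\times n}$, the Hadamard product $(\ket{\xysparse_{t,l-1}}\bra{\xsparse_{t,l}})\odot\graphs$ collapses to $\ket{\xysparse_{t,l-1}}\bra{\xsparse_{t,l}}$, matching the BDH-Normfree $\corr_{t,l}$ update line verbatim.

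Next I would establish the bridge $\state_{t,l} = \encoder\,\corr_{t,l}$ by a short induction over $t$, at each fixed layer $l$. The base case $t=0$ is the shared initialization $\corr_{-1,l} = 0$, $\state_{-1,l} = 0$. For the inductive step, left-multiplying the BDH $\corr$ update by $\encoder$ and using linearity gives
\[
\encoder\,\corr_{t,l} = \bigl(\encoder\,\corr_{t-1,l} + (\encoder\,\ket{\xysparse_{t,l-1}})\bra{\xsparse_{t,l}}\bigr)\rope,
\]
where one uses that $\encoder$ acts by left multiplication while $\rope$ acts by right multiplication, so they commute through each other. By the inductive hypothesis $\encoder\,\corr_{t-1,l} = \state_{t-1,l}$, and the right-hand side is exactly the BDH-Normfree update for $\state_{t,l}$, closing the induction.

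Finally I would use this bridge to verify the $\xysparse_{t,l}$ equation. Substituting $\graphy\ee-\graphy\ii = \decodery\encoder$ into the BDH line for $\xysparse_{t,l}$ gives $\relu{\decodery\encoder\,\corr_{t-1,l}\ket{\xsparse_{t,l}}}\*\ket{\xsparse_{t,l}}$, and applying $\encoder\,\corr_{t-1,l} = \state_{t-1,l}$ from the bridge produces the BDH-Normfree expression $\relu{\decodery\,\state_{t-1,l}\ket{\xsparse_{t,l}}}\*\ket{\xsparse_{t,l}}$. Since the three update lines coincide under the chosen parameter correspondence, and the boundary/initial conditions are shared, the two systems generate identical sequences $(\xsparse_{t,l},\xysparse_{t,l})$ for all $t,l$, which is the claimed formal equivalence. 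The only place where even minor care is needed is the left/right-multiplication commutation of $\encoder$ and $\rope$ in the induction step; everything else is immediate from the definitions.
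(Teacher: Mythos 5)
Your proof is correct and matches the paper's intent: the paper marks this Observation with a \qed and offers no written proof, treating it as immediate from comparing \eqeqref{eq:bdhgraph} with \eqeqref{eq:bdhnoln}, where the bridge $\state_{t,l}=\encoder\corr_{t,l}$ is already annotated in \figref{fig:ss}. Your line-by-line substitution, together with the short induction establishing $\state_{t,l}=\encoder\corr_{t,l}$ (which is just associativity of matrix multiplication applied to the left action of $\encoder$ and the right action of $\rope$), is exactly the verification the paper leaves implicit.
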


We discuss in more details below how BDH compares to BDH-Normfree in terms of size of state and number of parameters needed for one architecture to approximate the inference dynamics of the other. In general, BDH is not less expressive than its tensor-based counterpart.

For \BDHGPU parameters and state are naturally expressed using tensors of $O(nd)$ model parameters. In this section, we discuss how to express model parameters and state of BDH, in such a way as to maintain comparable size of parameter and model space.

\subsubsection{Expressing matrices \texorpdfstring{$\decoderx,\decodery,\encoder$}{Dx, Dy, E} as graphs \texorpdfstring{$\graphx, \graphy$}{Gx, Gy}}

We start by taking care of the first correspondence, that of parameter spaces of \BDHGPU and BDH. Asymptotically, BDH is strictly more expressive at the same number $O(nd)$ parameters. Recall from \eqeqref{eq:bdh} that the parameter space of \BDHGPU consists of three matrices $\decodery\decoderx\in R^{n \times d}$, $\encoder\in R^{d \times n}$, and (up to shifting of LayerNorms), their role is to encode the pairs of matrices $\decodery\encoder, \decoderx\encoder, \in R^{n \times n}$, as used in \eqeqref{eq:bdhnoln}. In the Claim below, we capture the correct encoding of one such matrix pair in the form of a graph of $O(nd)$ parameters.

Consider a (directed, weighted) graph $\graph \in R+^{n \times n}$ on a set of vertices $V = \{1,\ldots,n\}$. We will consider a graph which need be directly a sparse graph, but can be represented as a square of a graph with few edges. Formally, we will say that $\graph \in \G^2 (n,m)$, for some $m \in \Nat$, if there exists a graph $H \in R^{(n+s) \times (n+s)}$, with vertex set $V \cup S$, where $|S|=s$, such that $G = H^2 [V]$, i.e., $G$ is the induced subgraph of $H^2$ restricted to vertex set $V$, and $H$ has at most $m$ (strictly positive) edges.

For a $\graph \in \G^2 (n,m)$, we can consider an interpretation of a hidden layer $S$ between input layer $V$ and output layer $V$. %
All matrix weights coefficients are restricted to be non-negative, and the two linear layers are sparse with a total of at most $m$ non-negative connections.

Graphs in $\G^2 (n,m)$ are naturally expressed through the edges of the previously defined graph $H$, using $O(n \log n+m)$ parameters. The class $\G^2 (n,m)$  is strictly more expressive than the class of sparse ($m$-edge) graphs on vertex set $V$.\footnote{The formal expression in the definition of the class of weighted graphs $\G^2 (n,m)$ can be compared to that of the class of graph distance matrices admitting sparse hub labeling representation~\cite{hubs2011} (or closely related landmark labeling). In our case, vertices in the hidden layer $S$ also have a natural interpretation of landmarks on directed paths connecting nodes of $V$.}
We will refer to the middle layer of vertices $S$ that makes such a representation possible as the \emph{sparse synaptic layer}, to the graph $H$ on vertex set $V \cup S$ as the \emph{sparse linear circuit}, and the graph $H^2[V] \in \G^2 (n,m)$ as the \emph{neuron-neuron interaction graph}.

The role of the constructed graphs is to serve for propagating linear dynamics of the form $v \mapsto \graph v$, $v\in (R^+)^{n\times 1}$, for graph-based local models. We have the following Observation.
\begin{observation}
Let $G \in \G^2 (n,m)$ be a neuron-neuron interaction graph, with $G = H^2[V]$, where $H$ is the sparse linear circuit on vertex set $V \cup S$, which has $m$ edges. Then, the linear dynamics on graph $G$, $v \mapsto Gv$, can be efficiently expressed through two steps of linear dynamics on graph $H$, $v\mapsto H^2 v$, for $v \in (R^+)^{n \times 1}$. This representation requires $O(m)$ parameters.
\qed
\end{observation}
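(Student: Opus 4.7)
The plan is to exhibit an explicit two-hop simulation on $H$ that reproduces $v \mapsto \graph v$ on the original neuron set $V$. First, lift the input: define $\tilde v \in (\R^+)^{(n+s)\times 1}$ by $\tilde v_i = v_i$ for $i \in V$ and $\tilde v_i = 0$ for $i \in S$. Then perform two successive applications of the local dynamics of $H$: compute $w := H\tilde v$, and then $H^2\tilde v = H w$. Each multiplication is a single synchronous round of the graph-based token-distribution dynamics described in Section~\ref{sec:notation}, propagating signals from $V$ onto the sparse synaptic layer $S$ after the first step and back to $V$ after the second. Finally, restrict $H^2\tilde v$ to coordinates indexed by $V$ to read off the answer.

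Correctness is a one-line calculation once the definitions are unpacked. For any $i\in V$,
\[
(H^2\tilde v)_i = \sum_{j\in V\cup S}(H^2)_{ij}\,\tilde v_j = \sum_{j\in V}(H^2)_{ij}\,v_j = \sum_{j\in V}\graph_{ij}\,v_j = (\graph v)_i,
\]
where the second equality uses $\tilde v|_S = 0$ and the third invokes the definition $\graph = H^2[V]$ of the induced submatrix. Non-negativity is preserved throughout because $H \geq 0$ and $\tilde v \geq 0$, so the simulation respects the token-distribution semantics of Section~\ref{sec:notation} without any sign-handling modification; this is the reason the paper restricts attention to non-negative inputs in the statement.

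The parameter count is then immediate: the simulation stores only the sparse factor $H$, never the potentially dense neuron-neuron matrix $\graph$. Since $H$ has at most $m$ weighted edges, and since under the precision convention $\log n = O(b)$ adopted in Section~\ref{sec:notation} each edge contributes $O(1)$ parameters, the overall representation uses $O(m)$ parameters. There is no real obstacle in this argument: the observation is essentially a restatement of the factorization $\graph = H^2[V]$ as a two-round local dynamics, rather than as a matrix identity. The only substantive point worth emphasizing is that the sparse synaptic layer $S$ is simulated by the system itself as one extra round of communication — this is precisely what allows graph-based BDH models to match the low-rank tensor expressiveness of \BDHGPU at $O(nd)$ parameters when one takes $m = O(nd)$.
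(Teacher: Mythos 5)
Your proposal is correct and is exactly the argument the paper intends: the Observation is stated with an immediate \qed (no written proof), and your zero-padded lift, two applications of $H$, restriction to $V$, and appeal to the $\log n = O(b)$ parameter convention from Section~\ref{sec:notation} simply spell out that immediate argument. Nothing is missing and nothing diverges from the paper's route.
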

In the above, thee exact number of parameters needed to represent a graph of $m$ edges follows from conventions introduced in the Notation (Section~\ref{sec:notation}). In what follows, we will assume that BDH represents its parameter matrices through appropriate sparse linear circuit graphs $H$, which it uses to realize the linear neuron-neuron interaction dynamics $G$. We illustrate the correspondence between graphs $G$ and $H$ in \figref{fig:g_h_interpretation}.
\begin{figure}
\centering
\includegraphics[width=0.75\textwidth]{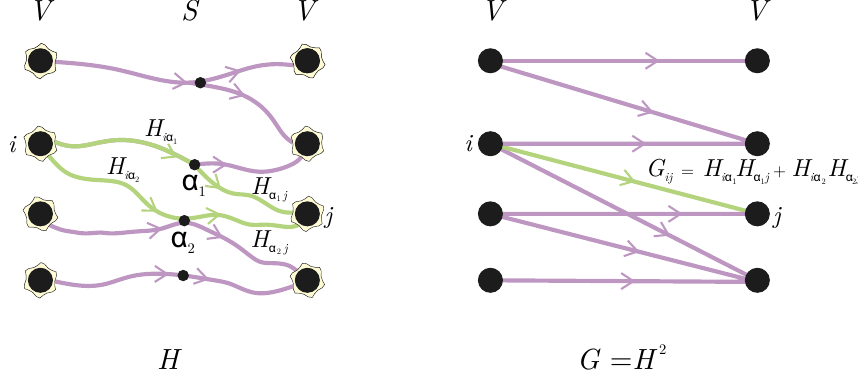}
\caption{Neuron-neuron communication using graphs $G \in \G^2(n,m)$:  correspondence between graph $H$ with $m$ edges (left), and neuron-neuron interaction graph $G = H^2$ (right). The approach allows to express linear signal propagation on a broad class of graphs $\G^2(n,m)$ using two steps of linear dynamics on a sparse circuit $H$, i.e., $Gz = H^2z$ for $z \in (R^+)^n$.}
\label{fig:g_h_interpretation}
\end{figure}

We observe that BDH can express \BDHGPU parameter matrices with the same asymptotic number of parameters. The claim below applies to pairs of matrices $\decoder\encoder$, for $\decoder=\decodery$ and $\decoder=\decoderx$.

\begin{claim}\label{claim:graphs}\label{obs:expressivegraph}
For any matrices $\decoder\in R^{n,d}$, $\encoder\in R^{d,n}$, there exist neuron-neuron interaction graphs $\graph\ee, \graph\ii \in \G^2(n,m)$, such that $\graph\ee-\graph\ii= \decoder\encoder$, with $m=O(nd)$. 
In consequence, for the same asymptotic number of parameters $O(nd)$, graph-based feed-forward mechanisms of BDH are strictly more expressive than corresponding mechanisms in the tensor-based implementation, BDH-Normfree.
\end{claim}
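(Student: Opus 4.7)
The plan is to construct $\graph\ee$ and $\graph\ii$ explicitly, by decomposing the entries of $\decoder$ and $\encoder$ according to sign and then realising each of the resulting non-negative matrix products as the square of a sparse bipartite linear circuit, with one extra layer of hub vertices playing the role of the inner index $k$ of the product.

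\emph{Step 1 (sign-splitting).} First I would write $\decoder = \decoder^+ - \decoder^-$ and $\encoder = \encoder^+ - \encoder^-$, where the $\pm$ superscripts denote the entry-wise positive and negative parts; all four matrices are then entry-wise non-negative. Expanding gives
\[
\decoder\encoder \;=\; \bigl(\decoder^+\encoder^+ + \decoder^-\encoder^-\bigr) \;-\; \bigl(\decoder^+\encoder^- + \decoder^-\encoder^+\bigr),
\]
so I set $\graph\ee := \decoder^+\encoder^+ + \decoder^-\encoder^-$ and $\graph\ii := \decoder^+\encoder^- + \decoder^-\encoder^+$. Each is a non-negative $n\times n$ matrix and by construction $\graph\ee - \graph\ii = \decoder\encoder$.

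\emph{Step 2 (hub-layer circuit and edge count).} To show $\graph\ee \in \G^2(n, O(nd))$, I introduce $2d$ synaptic hub vertices $S = \{s_1^+,\ldots,s_d^+,s_1^-,\ldots,s_d^-\}$ and define a sparse linear circuit $H$ on $V\cup S$ by placing, for each $i,j \in V$ and $k\in\{1,\ldots,d\}$, a directed edge of weight $\encoder^{\pm}_{ki}$ from $i$ to $s_k^{\pm}$, and a directed edge of weight $\decoder^{\pm}_{jk}$ from $s_k^{\pm}$ to $j$; no $V$-$V$ or $S$-$S$ edges are present. A direct computation then yields
\[
H^2[V](j,i) \;=\; \sum_{k=1}^{d} \decoder^+_{jk}\encoder^+_{ki} \;+\; \sum_{k=1}^{d} \decoder^-_{jk}\encoder^-_{ki},
\]
which is exactly the $(j,i)$ entry of $\graph\ee$. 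The number of edges of $H$ is at most $4nd$, so $\graph\ee \in \G^2(n, 4nd)$. An identical construction with $\encoder^+$ and $\encoder^-$ interchanged yields $\graph\ii \in \G^2(n, 4nd)$. Together with the convention from Section~\ref{sec:notation} that a graph of $m$ edges is represented in $O(m)$ parameters, this establishes the quantitative bound $m = O(nd)$.

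\emph{Step 3 (strict expressiveness corollary).} For the ``strictly more expressive'' claim, I would observe that any product $\decoder\encoder$ with $\decoder \in R^{n\times d}$ and $\encoder \in R^{d\times n}$ has rank at most $d$. By contrast, the identity matrix $I_n$ is realised as $H^2[V]$ using a single hub $s_i$ per neuron, together with two weight-one edges $i \to s_i$ and $s_i \to i$; this uses only $2n$ edges, so $I_n \in \G^2(n, 2n) \subseteq \G^2(n, O(nd))$ for every $d\geq 1$. Taking $\graph\ee = I_n$ and $\graph\ii = 0$ therefore realises the full-rank matrix $I_n$ in the BDH class with $O(nd)$ parameters, while $I_n$ cannot be written as $\decoder\encoder$ for any $\decoder,\encoder$ with inner dimension $d < n$. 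Hence the class of matrices expressible as $\graph\ee - \graph\ii$ in $\G^2(n, O(nd))$ strictly contains the class of rank-$d$ products $\decoder\encoder$.

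\emph{Main obstacle.} The only delicate point is notational: the paper's convention $H(i,j) := \bra{e_j}H\ket{e_i}$ means that the edge ``from $i$ to $j$'' sits at the $(j,i)$ matrix entry, so the block structure must be fixed so that $S$ is indexed by the contracted index $k$ of $\decoder\encoder$, and the orientations of the two ``halves'' of $H$ (one carrying entries of $\encoder$, the other of $\decoder$) must agree with this convention. Once the convention is pinned down, $H^2[V] = \decoder\encoder$ is essentially the definition of matrix multiplication, and the remainder of the argument reduces to the sign-splitting bookkeeping and the edge-count for the hub layer.
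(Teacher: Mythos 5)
Your construction is correct and essentially identical to the paper's: the paper likewise splits by sign into a non-negative $2d$-node hidden layer (it folds the sign-split of $\encoder$ into a single shared $\encoder'\in(R^+)^{2d\times n}$ and redistributes $\decoder$'s entries between $\decoder\ee$ and $\decoder\ii$, which yields exactly your $\graph\ee=\decoder^+\encoder^+ +\decoder^-\encoder^-$ and $\graph\ii=\decoder^+\encoder^- +\decoder^-\encoder^+$ with the same $O(nd)$ edge count). Your explicit full-rank witness $I_n\in\G^2(n,2n)$ for the strictness part is a welcome concretization of what the paper only asserts via the remark that graphs in $\G^2(n,m)$ need not admit exact rank-$d$ factorizations.
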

\begin{proof}
The short proof of the Claim is deferred to Appendix~\ref{apx:proofgraphs}.
\end{proof}

We note that the converse implication does not hold: an arbitrary graph $\graph\ee \in \G^2(n,m)$ does not admit an exact low-rank decomposition $\graph\ee = \decoder\encoder$. Indeed, in general any low-rank decomposition introduces a form of noise whose implications we discussed in Section~\ref{sec:lowrank}: if $\graph\ee$ has a modular (cluster) structure, the low-rank approximation $\graph\ee \approx \decoder\encoder$ still allows a form of in-cluster propagation dynamics.

\subsubsection{Expressing \BDHGPU attention on graphs: sparsification and trainability of \texorpdfstring{$\graphs$}{Gs}}

We recall that by Observation~\ref{obs:equivalence}, the equivalence between the attention state $\corr_{t,l}$ in BDH and in tensor-based implementation holds for the case of the complete directed graph, $\graphs = \1^{n \times n}$.

This means two things: first, in BDH, graph $\graphs$ can be trainable, while in \BDHGPU it is not. This acts to the potential advantage of BDH for expressiveness.

Second, in BDH, the graph $\graphs$ obtained through the direct correspondence is dense: with $n$ neurons, BDH would need $n^2$ synapses to precisely reflect BDH-Normfree. This aspect is more of a technical nuisance than an actual difference: the expressiveness of the attention mechanism of BDH, equipped with a sparse graph $\graphs$, is sufficient to represent the attention operation as used in BDH-Normfree. Indeed, in the tensor-based \BDHGPU dynamics, the attention operation is immediately followed by a low-rank operation, $\state_{t,l} = \encoder \corr_{t,l}$, where $\state_{t,l}$ has $nd$ parameters. Graph models can instead rely on a sparse graph $\graphs$ to achieve the same form of state compression through sparsification.
\begin{claim}
\label{claim:att_equiv}\label{claim:attention}
The attention block of BDH-Normfree can be expressed using the attention block of BDH with a graph $\graphs$ having $O(nd)$ edges, subject to a natural preparation of attention values entering the attention block of BDH (directly before this attention block).
\end{claim}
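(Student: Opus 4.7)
The plan is to exploit the fact that, in BDH-Normfree, the attention state $\corr_{t,l} \in R^{n\times n}$ is only ever consumed through the low-rank encoder $E$: both its recursive update and its readout can be rewritten in terms of $\state_{t,l} := E\corr_{t,l} \in R^{d\times n}$, which already has the target parameter count $O(nd)$. The idea is to implement this compressed state \emph{inside} BDH, on a sparse subgraph $\graphs$ of $K_n$ that has exactly $O(nd)$ edges, by reserving a small set of landmark neurons to carry the $d$ rows of $\state_{t,l}$.

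Concretely, I would fix $d$ landmark ``hub'' neurons $L = \{\ell_1,\ldots,\ell_d\} \subset V$ (for instance, the same landmark vertices used by the sparse circuits representing $G_y^{\mathfrak{e}},G_y^{\mathfrak{i}}$ via Claim~\ref{claim:graphs}) and take $\graphs$ to be the bipartite graph with edge set $L \times V$, so $|E(\graphs)| = dn = O(nd)$. The ``natural preparation'' directly before the attention block is a single feed-forward-style step, implementable with $O(nd)$ graph parameters in BDH (by the very same mechanism as in Claim~\ref{claim:graphs}), that converts the incoming activation $y_{t,l-1} \in R^n_+$ into its sparsified counterpart $\tilde y_{t,l-1} \in R^n_+$ supported on $L$, with $\tilde y_{t,l-1}(\ell_i) = (E\,y_{t,l-1})_i$ for $i = 1,\ldots,d$ and $\tilde y_{t,l-1}(k) = 0$ for $k \notin L$. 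Because $\tilde y_{t,l-1} \otimes x_{t,l}$ is already supported on $L \times V$, the Hadamard product with the indicator of $\graphs$ in the update rule \eqref{eq:bdhgraph} for $\corr_{t,l}$ is the identity, so the BDH update acts losslessly and $\corr_{t,l}$ stays supported on the $O(nd)$ edges of $\graphs$.

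Next I would verify, by induction on $t$ and $l$, that on these edges $\corr_{t,l}(\ell_i, j) = \state_{t,l}(i,j)$, using the linearity of the update in $\tilde y_{t,l-1}$ and the fact that $U$ acts diagonally (or block-diagonally) in the neuron dimension, so the recurrence for $\corr$ restricted to $L\times V$ reproduces the recurrence for $\state = E\corr$ in BDH-Normfree. For the readout, the vector $\corr_{t-1,l}\,x_{t,l}$ computed inside BDH is then supported on $L$, with $(\corr_{t-1,l}\,x_{t,l})_{\ell_i} = (\state_{t-1,l}\,x_{t,l})_i$; composing with the downstream graph $G_y = G_y^{\mathfrak{e}} - G_y^{\mathfrak{i}}$, chosen via Claim~\ref{claim:graphs} so that its sparse circuit reads off exactly the landmark neurons $L$ and applies the decoder $D_y$, recovers the BDH-Normfree quantity $D_y E\,\corr_{t-1,l}\,x_{t,l}$. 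The total edge count across $\graphs$ and the preparation subgraph is $O(nd)$, as required.

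I expect the main obstacle to be the bookkeeping around nonlinearities and normalization: rigorously matching the ReLUs and LayerNorms of BDH-Normfree to the graph-level thresholding mechanisms of BDH while keeping state strictly supported on $L \times V$. The cleanest way out, in line with the discussion leading up to the claim, is to carry out the equivalence at the level of BDH-Normfree (which is already LayerNorm-free) and treat the placement of LayerNorms as a separate, parameter-neutral modification, as is done throughout Section~\ref{sec:BDHGPU}. A secondary technical point is to ensure the preparation map $y \mapsto \tilde y$ is itself realizable by an edge-reweighting graph of size $O(nd)$; this follows from Claim~\ref{claim:graphs} applied to the rank-$d$ map $y \mapsto \iota_L\,E\,y$, where $\iota_L : R^d \to R^n$ is the canonical embedding onto the landmark coordinates.
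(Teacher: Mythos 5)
Your construction is essentially the paper's own proof: reserve a small set of landmark neurons to carry the rows of the compressed state $\state=\encoder\corr$, take $\graphs$ to be the bipartite all-ones support between those landmarks and $V$ (hence $O(nd)$ edges), prepare attention values by pushing them through the encoder onto the landmarks, and recover $\decodery\encoder\corr_{t-1,l}\,x_{t,l}$ by a downstream graph that reads off the landmark coordinates. The one repair needed is that your prepared values $\tilde y(\ell_i)=(\encoder y)_i$ can be negative, whereas the edge-reweighting kernel (Definition~\ref{def:edgereweighting}) requires non-negative state variables; the paper handles exactly this by using $2d$ landmarks with the positive/negative split $\encoder'$ of $\encoder$ from the proof of Claim~\ref{claim:graphs} (and correspondingly split decoders $\decodery\ee,\decodery\ii$), which keeps the construction in the positive orthant without changing the $O(nd)$ edge count.
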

\begin{proof}
The formal statement of the Claim and its proof are deferred to Appendix~\ref{apx:proofattention}.
\end{proof}
Going beyond the formal equivalence between BDH and \BDHGPU from Observation~\ref{obs:equivalence}, the above Claim, combined with Claim~\ref{obs:expressivegraph}, shows that BDH has at least the same expressiveness as \BDHGPU even for the same number of parameters $O(nd)$ and the same size of state $O(nd)$ per layer.

Independent of graph-based models, in the subsequent analysis of the feed-forward network and attention mechanisms of \BDHGPU, we will show that the matrices $\decoderx\encoder, \decodery\encoder, \sigma \in R^{n \times n}$ of \BDHGPU admit a natural interpretation as $n$-node directed graphs (once appropriate threshold functions are applied). For example, the visualizations in \figref{fig:powerlaw1} and \figref{fig:powerlaw2} correspond to graph representations of matrices $\corr$ and $\graphx := \decoderx\encoder$ of \BDHGPU, respectively, after thresholding. This graph interpretation of matrices in \BDHGPU \emph{also} defines the neuron-neuron communication graph of the underlying BDH model, given by the equivalence from \eqeqref{eq:equivalence}.

\section{Implementation and scaling laws}

A code framework for \BDHGPU, representing the architecture from Definition~\ref{def:bdh}, is made available in the Appendix~\ref{sec:bdh_code_listing}. In this Section, we present some guidelines on choice of hyperparameters, and an empirical study of models implemented in the \BDHGPU architecture, as well as a comparison to the Transformer and other language model architectures.

\subsection{Implementation characteristics of \BDHGPU}\label{sec:layersheads}

\begin{figure}
\includegraphics[trim=0 6.5cm 0 0,clip,width=\textwidth]{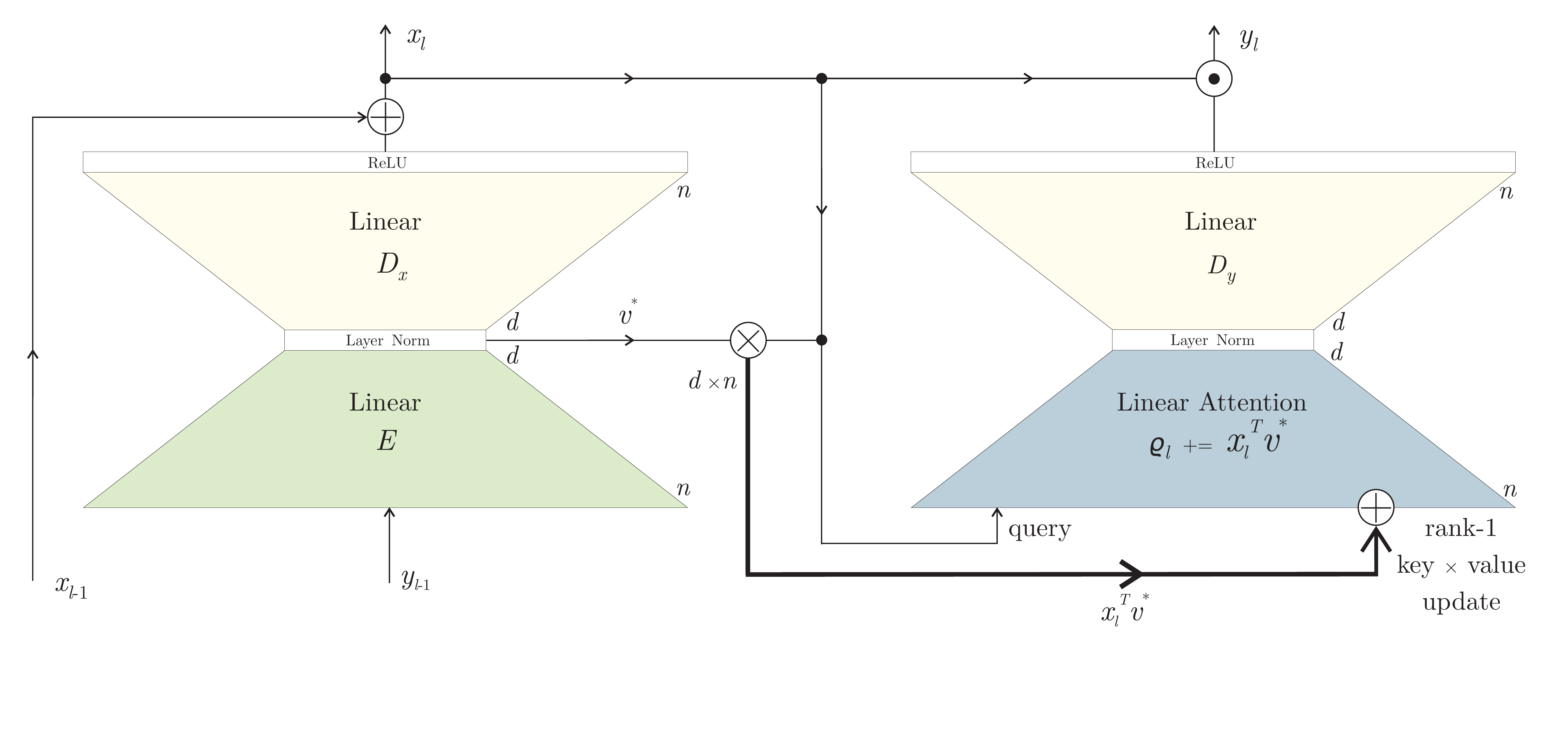}
\caption{Diagram of one layer of the \BDHGPU architecture, following \eqeqref{eq:bdh}. Layer inputs are $x_{l-1}, y_{l-1} \in R^n$, layer outputs are $x_l, y_l \in R^n$. Model parameters are contained in the $\encoder \in R^{n\times d}$ and $\decoderx, \decodery \in R^{d\times n}$, and shared across all layers. Each layer has a state $\state_l \in R^{n\times d}$ which is used in the Linear Attention block and persisted over time. PyTorch code implementing the model is provided in Appendix~\ref{sec:bdh_code_listing}}\label{fig:architecture}
\end{figure}

\paragraph{Model scaling in neuron dimension $n$.} The architecture $\BDHGPU(n,d)$ has two main dimensions, $n$ which is the dimension of its concept (\emph{neuronal}) space, and $d\ll n$, which is its low-rank (\emph{synaptic}) dimension. The model scales primarily with the number of neurons $n$. Almost all of the weights of the model are contained in three $n\times d$ parameter matrices called $\encoder, \decoderx, \decodery$; thus, the number of parameters is precisely $(3+o(1)) n d$.
The ratio between the dimensions $n$ and $d$ increases rapidly (``asymptotically'') with model size; already for a 25M-parameter model, a sound choice of dimensions is: $d=256$, $n=32768$, read as $32768$ neurons, each characterized by a total of $3d=3\cdot 256 = 768$ scalar parameters.

\paragraph{Layers and heads.}
The architecture has $L$ layers (e.g., $L=10$). As in the Universal Transformer~\cite{dehghani2018universal}, all layers use the same set of weights for each of the parameter matrices.

The architecture may be equipped with several heads $h$, subdividing dimension $n$. The role of heads is limited to a single parameter-free LayerNorm, normalizing outcomes of linear attention separately for each head. The optimal number of heads is typically smaller than in the Transformer (e.g., $h=4$).

\paragraph{Linear attention with state aligned to neurons.}
The state space of the model is fixed and large. It has the macro-interpretation of associative memory (like KV-cache, but organized differently), and is used to perform linear attention. For each layer, the state space is independent and has a fixed dimension $n\times d$, the same as the model weight matrices. Thus, a portion of $d$ parameters of a state is directly associated with each of the $n$ neurons. With each token processed, a fraction of the model's state space is updated. Sharing of state between the $L$ layers is not performed in the vanilla architecture. As usual with SSM's, there is no notion of a context window.

Similarly to BDH, \BDHGPU maintains a large recurrent state comparable in size with its total number of parameters (c.f. \figref{fig:stack}). This stems from the fact that both the recurrent state matrix, and parameter matrices are expressed as low rank $d$ factorizations of $n \times n$ graph transition matrices. We believe that this helps the model with generalization with respect to RNNs which have $O(N^2)$ parameters which manipulate a state of size $O(N)$.

\paragraph{Sparse positive activation.}
The architecture relies on a length-$n$ vector $x_{t,l}$ passed to the $l$-th layer for the $t$-th token processed, which can be assimilated to the vector giving rise to keys, values, and queries in the Transformer, but operating in higher dimension. As a crucial design assumption, this vector has all non-negative elements ($x_{t,l}\geq 0$).

An empirically observed fact is that the activation pattern of $x_t$ rapidly becomes sparse (in a typical training run, only $\rho \approx 5\%$ of the $n$ entries of vector $x_t$ are non-zero). This corresponds to the fraction of the state space read and updated for each token.

\subsection{Comparison of \BDHGPU to GPT2-like Transformers}\label{sec:comparison_bdh_gpt_transformers}

\paragraph{Architecture differences.} \BDHGPU in its vanilla form can be compared to the GPT2 architecture~\cite{radford2019language} with RoPE attention. In this comparison, \BDHGPU retains or strengthens the key advantages of the Transformer (parallel trainability, attention mechanism, scaling laws for loss versus parameter count, learning rate per token) on tests and benchmarks at the model scales we tested (1B parameters), across tasks such as language and translation.

The architecture of a single layer of \BDHGPU is presented in \figref{fig:architecture}. The most evident architecture differences between \BDHGPU and the Transformer include the following:
\begin{itemize}
  \item[$-$] \BDHGPU has fewer parameter matrices, allowing for more compact interpretation and analysis.
  \item[$-$] \BDHGPU scales for parameters (and context length) almost exclusively in a single neuronal dimension, $n$.
  \item[$-$] Key-value state and parameter matrices have matching dimensions and are highly localized together with state, with portions of these matrices attributable to individual neurons.
  \item[$-$] There is no notion of context length in \BDHGPU, and consequently no hard bound on it.
  \item[$-$] Attention of \BDHGPU is linear, but happens in the model's large neuronal dimension.
  \item[$-$] Activation vectors $x, y$ of \BDHGPU are positive (after passing through ReLU gates), and vectors $y$ are observed to be extremely sparse in practice.
\end{itemize}

\paragraph{Transformer-like scaling laws.} We have experimentally validated the scaling laws of \BDHGPU, expressing loss as a function of parameter count, for next-token-prediction tasks. At the same parameter scale, \BDHGPU generally compares favorably to the Transformer even on relatively short-context tasks requiring use of attention, such as translation, \figref{fig:translation}. In general, on next-token prediction tasks, \BDHGPU appears to show improvement of loss reduction per token of data than the Transformer, i.e., \emph{learns faster per data token}, both for the natural tasks we tested (see e.g.~\figref{fig:translation}) and on synthetic puzzles.

\begin{figure}
\centering
\includegraphics[width=.7\textwidth]{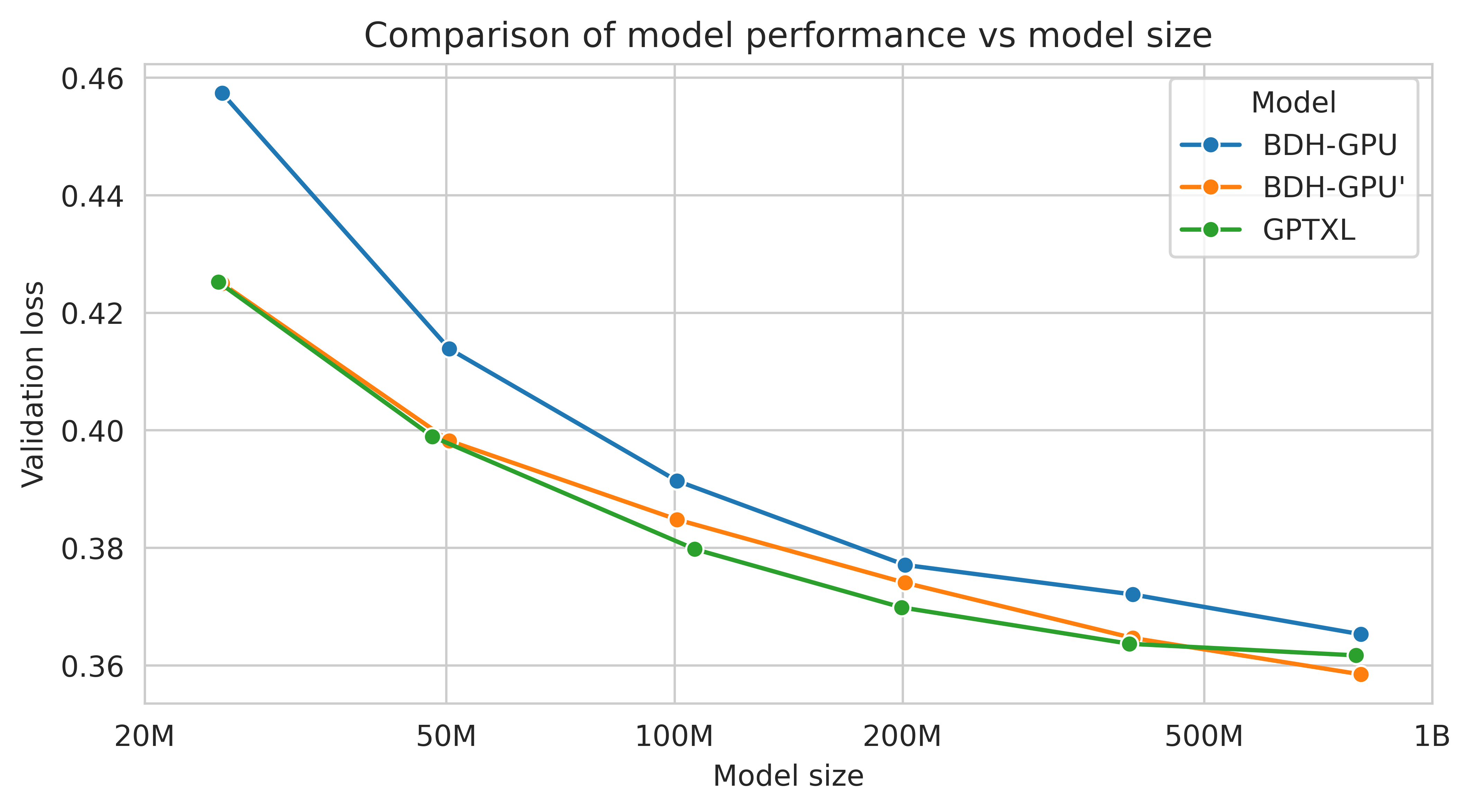}
\caption{Performance of \BDHGPU and GPTXL versus model size on a translation task. We have tested all models under the same training and evaluation regimes. All models show improved performance with scale. \BDHGPU uses exactly the formulation provided in Appendix~\ref{sec:bdh_code_listing}, while BDH-GPU' extends conditional gating of states and logits. All models are trained with truncated backpropagation through time on sequences 2048 characters long, and carry their state ($\state$ matrix for BDH models and a buffer of last 4096 KV-Cache entries~\cite{dai2019transformerxlattentivelanguagemodels} for GPTXL) between minibatches. BDH models are scaled only by varying the number of neurons $n$ and keep all other hyperparameters fixed, making them easy to scale. On the other hand, GPTXL were scaled in both the embedding dimension and the number of layers and required Dropout~\cite{srivastava2014dropout} tuning for optimal performance. We observe that \BDHGPU' matches the GPT Transformer at all model sizes we have evaluated.\\Details on model hyperparameters and training setup are provided in Appendix~\ref{sec:bdh_scaling_details}}
\label{fig:translation}
\end{figure}

The \BDHGPU architecture appears to be a preferred choice for training setups where: (1) models need to learn from scarce data, or (2) training workloads need to be optimized for makespan. For the first setting, the training rate per token is the decisive factor. For the second setting, \BDHGPU can be used differently than the Transformer in distributed training and distributed inference setups because of the way it scales its dimensions.

\paragraph{FLOPS counts.} The theoretical count of arithmetic operations per token of \BDHGPU during inference is bounded by $O(ndL)$. Each parameter is accessed $O(L)$ times per token (with the typical sufficient number of layers being smaller than in the Transformer), and each element of state is accessed $O(1)$ times per token, with small hidden constants. These are rough bounds for a simple implementation, and do not take advantage of activation sparsity.

For short contexts \BDHGPU is amenable to parallel training with a causal self-attention kernel. The simple code template provided in the Appendix \ref{sec:bdh_code_listing} is sufficient to reproduce the empirical results presented in this paper on a single GPU node. For longer contexts (typically above 4096 tokens for $d=256$), a state-space kernel for linear attention is faster and more space-efficient.

\subsection{Comparison of \BDHGPU to other sequence processing architectures}

\paragraph{Transformers with Linear Attention.} Linear attention works well when used in high dimension, subject to appropriate preparation of key vectors (as we discuss in~Subsection~\ref{sec:bdhgpumacroexpattention}). An elegant way to eliminate non-linearity of attention, by applying preparation of key vectors through tensor product, was proposed in~\cite{buckman2024}. We use a completely different approach to achieve attention in high dimension.

A much broader line of work on linear attention for the Transformer, initiated by~\cite{katharopoulos2020transformers} concerns applying linear attention in low dimension after appropriate preparation of keys and values. This is effectively a technique for SSM state compression, and it is not clear whether it relates favorably to other SSM state compression techniques. An empirical study of the amount of information recoverable from SSM's with compressed state can be found in~\cite{benkish2025overflowprevention,liu-etal-2025-scaling}.

A general theoretical framework for analyzing the expressiveness of Linear Attention models with attention working with positive vectors can be found in the context of the FAVOR+ framework of the Performer~\cite{choromanski}. Finally, a general state-space formalism for Transformer models admitting Linear Attention was considered in~\cite{sun2024retentive,liu-etal-2025-scaling}.

\paragraph{Other types of Transformers.}
Variants of the Transformer with identical parameters in all layers, like the Universal Transformer~\cite{dehghani2018universal}, have a number of desirable features, notably in terms of explainability and ease of defining metrics. The downside of sharing parameters between layers in the Universal Transformer is a slight time overhead for the feed-forward network operations, when measured in FLOPS per parameter. The situation is similar in \BDHGPU.

\BDHGPU has sufficient expressiveness to prepare keys and queries for the attention operation, so that the outcome of attention captures a similarity measure between keys and queries corresponding to the outcome of a class of Locality Sensitive Hashing (LSH) functions with a very large number of buckets (cf.~Subsection~\ref{sec:bdhgpumacroexpattention}). The study of LSH-based KV-cache for the Transformer was initiated with the Reformer~\cite{kitaev2020reformerefficienttransformer}, and the LSH~Transformer architecture introduced in the same work. Generally, the LSH~Transformer is shown to rapidly approach Transformer baseline behavior in practice as the number of buckets increases. The class of LSH functions considered is not the same, but some intuitions gained in the study of LSH attention may carry over to \BDHGPU.

Finally, several lines of work have been devoted to making the Transformer work with longer context windows. Two distinct approaches, which work notably well, are the soft-rolling context window of the TransformerXL~\cite{dai2019transformerxlattentivelanguagemodels}, and hierarchical attention~\cite{yang2016hierarchical}. The \BDHGPU architecture is, generally, amenable to some of these extensions to the Transformer's attention mechanism, while also providing new ways to extend context length in a more uniform manner.

\paragraph{Networks with sparse activation.} The use of the ReLU gate as a systematic way to achieve sparse activation was, to our knowledge, first exhibited in~\cite{haziza2025acceleratingtransformerinferencetraining}.

A recent variant of the Transformer called Spark Transformer~\cite{you2025sparktransformerreactivatingsparsity} relies on a combination of top-k operations and soft thresholding to provide a reduction in both attention and feed forward network activations compared to the Transformer, achieving neuron sparse activation of 8\%. Compared to our work, the method used therein to achieve activation sparsity effects is completely different (and rather involved). Beyond the question of sparsity, \BDHGPU is not more similar to the Spark Transformer than to the Transformer.

\paragraph{Oscillatory SSM's.} BDH admits an interpretation at the micro-level as an oscillatory state-space network, as we outlined in Subsection~\ref{sec:toy}. The concept of Oscillatory State Space Models has recently been applied to time series analysis~\cite{rusch2025oscillatory}, with the LinOSS model showing encouraging performance relative to other SSM's (such as Mamba and LSTM's) on tasks of long-horizon forecasting and time-series classification. Other than this, the use of SSM's with the form of an oscillator network has been limited to smaller scale studies. We are not aware of any successful application of oscillatory SSM's to the area of language models and reasoning in language, nor of oscillator network SSM's at scale whatsoever, prior to BDH.

BDH unifies multiple lines of intuition found across existing models, offering a coherent framework in which the components naturally align. The result is a biologically plausible reasoning model with an interpretable structure and state-of-the-art performance that has been experimentally verified.

\section{Analysis: emergence of modularity and scale-free structure}\label{sec:modularity}

Large-scale reasoning systems appear to benefit from hierarchical structuring into sub-modules. In Machine Learning, the usual approach has been to design such a modular structure, by way of assigning roles and scales to different sub-modules explicitly. Many works have postulated modules capable of representing hierarchical relationships between features of objects, e.g., capsule networks~\cite{sabour2017dynamic}. Some models have attempted to capture intelligence by recreating elements of structure recognized in brain study, going so far as to try to map functional sub-networks of the brain with empirically identified function into specific sub-modules in the design of a larger ML system, cf.~\cite{lecun2022path}. %

In this work, we propose a learnable system which ends up with modularity. We show how scale-free modular structure emerges naturally when the model is implemented by a network with local graph dynamics. 
In this Section, we discuss the emergence of the structure of inter-neuron connections of BDH during training, while in Section~\ref{sec:attention} we look at its temporal activation patterns during reasoning inference.

The rest of this section is organized as follows. In Subsection~\ref{sec:backgroundmodularity}, we introduce basic concepts related to modularity and scale-free behavior of networks. We then look at the expressiveness of feedforward networks of \BDHGPU and their usefulness as a signal propagation dynamics in Subsections~\ref{sec:macrocomparison-of-feedforward} and~\ref{sec:bdh_feedforwardpropagation}. In Subsection~\ref{sec:modularity_detailed}, we show theoretically how modular graph structure, with appropriate community voting mechanisms, emerges as a plausibly necessary element for the feed-forward networks $\decoderx\encoder$ and $\decodery\encoder$ to function correctly. In Subsection~\ref{sec:bdh_empiricalgraphs}, we look at the corresponding empirical properties of these matrices, and the scale-free and modularity properties of the corresponding graphs $\graphx\ee$ and $\graphy\ee$ of the underlying BDH graph dynamics.

\subsection{Background: modularity and scale-free property of systems}\label{sec:backgroundmodularity}

\paragraph{Importance of modularity for information propagation.} Graph systems serving a function related to information propagation tend to achieve modular graph structure, and rely on it to obtain the most desirable tradeoff between efficiency and accuracy of the system dynamics. Such emergence of ``hidden structure'' may be observed e.g.~through topic specialization of system regions, or through the coordinated voting behavior among nodes which organize themselves into communities, admitting higher local density. This type of graph community self-organization has two main advantages over a system with an explicit partition into subsystems. First, it allows nodes to belong to multiple communities, and to act as bridges between them. Second, it allows the scale and relationship between communities to evolve over time, as their relative importance changes or new connections emerge. 

Historically, the crucial role of emergent modular structure for systems tasked with efficient knowledge retrieval at scale was first observed in the context of the World Wide Web before the year 2000, notably in the transition from catalogue-based systems (DMOZ Open Directory Project, craigslist, etc.) to naturally evolving systems based on webs of knowledge (Wikipedia, etc.), interlinked topic-based communities (reddit, etc.), and reliance on evolving network link structure for assigning expert weights to nodes in a voting process (Google PageRank, etc.). Formalization of modular properties followed soon after, with the mostly commonly used definition of modularity being proposed by Newman in~\citeyear{doi:10.1073/pnas.0601602103}. The main theoretical reference for studies of modularity is the Stochastic Block Model~\cite{HOLLAND1983109} and its later generalizations, e.g., to hierarchical settings. While the definition of Newman modularity is not (efficiently) constructive, it can in practice be closely approximated by greedy algorithms~\cite{Blondel_2008} or spectral approaches~\cite{community2014}.

\paragraph{Scale-free property.} The scale-free property of natural systems dealing with information processing is generally accepted as a manifestation of their operation at criticality. This refers to operation within a regime where they are both sufficiently stable to enable efficient information retrieval in the short-term, and sufficiently adaptable to be able change their behavior abruptly as new knowledge inputs become available, invalidating previous paths of reasoning or knowledge retrieval. The generally accepted definition of scale-free behavior of such a dynamical system assumes that the likelihood of a new piece of information (or other localized innovation to the system) to affect $n'$ nodes of the system, for any $n'<n$, should by polynomially large in $1/n'$. For most information propagation dynamics, under certain uniformity assumptions, e.g., that the new piece of information arrives at a uniformly random node of the system, a usual necessary (not sufficient) condition for scale-free property is for the distribution of node degrees to follow a power-law distribution. 

In the practice of applied sciences studying real-world network phenomena, and in the absence of the possibility to perform more in-depth analysis, power-law degree distributions are sometimes equated with scale-free behavior. One notable research application involves modeling of extreme events: understanding scale-free behavior allows researchers to make predictions about rare, large events from data on smaller, more common ones.

For systems with known local graph dynamics, like those considered here, more refined analysis of scale-free properties are possible. We nonetheless also report heavy-tailed degree behavior as the most obvious lithmus test indicator of scale-free operation of the system.

\subsection{\BDHGPU feed-forward network with the `ReLU-lowrank' block}\label{sec:macrocomparison-of-feedforward}

Low-rank matrices have been considered in multiple contexts of Machine Learning, from preference vectors to Internet latency estimation. In the setting of the Transformer, low-rank matrices form the basis of weight-matrix approximations such as LoRA~\cite{hu2021loralowrankadaptationlarge}.

The ReLU-lowrank block of \BDHGPU captures different properties than the above settings. Its most important effects for \BDHGPU are related to noise reduction, and faithful representation of a certain class of affinity functions on sparse positive vectors. This makes it suitable for use in combination with Linear Attention blocks. We discuss this point further in this Section.

\paragraph{Definition of ReLU-lowrank.} The parameters of \BDHGPU are concentrated in three matrices $\encoder, \decoderx, \decodery$. The encoder matrix $\encoder$ transforms length-$n$ vectors in the neuronal layer into length-$d$ vectors in the hidden layer. The two decoder matrices $\decoder \in \{\decoderx, \decodery\}$ transform length-$d$ vectors in the hidden layer back to the neuronal layer.

We consider the \emph{ReLU-lowrank} operation of passing through the encoder, one of the decoders, and a ReLU gate (cf.\ \eqeqref{eq:bdhnoln}), mapping vectors $z\in R^n$ into $f_{\decoder\encoder}(z) \in R^n$ as follows:
\begin{equation}\label{eq:relulowrank}
    z \mapsto f_{\decoder\encoder}(z) := \relu{\decoder \encoder z}.
\end{equation}
We note that the output $f_{\decoder\encoder}(z) \in (R^+)^n$ always, and that in \BDHGPU we also always have $z \in (R^+)^n$.

\paragraph{Expressiveness of ReLU-lowrank in \BDHGPU and MLP in the Transformer.} %
A single ReLU-lowrank block can be compared to a single MLP block of the Transformer. %
A different comparison provides closer matching of dimensions and structure of nonlinearities, by considering a single ReLU-lowrank with respect to a portion of the Transformer corresponding to the second MLP layer in an MLP block, i.e., starting with the hidden layer of neurons of the MLP in some layer $l$, skipping attention blocks, and followed by the `first' linear layer of the MLP in layer $l+1$, finally followed by the non-linearity (typically GeLU) applied in the hidden layer of neurons in layer $l+1$. Either approach to expressiveness is valid to the extent where we analyze similarities between one architecture with $L$ layers and the other with ``$O(L)$'' layers.

In the spirit of universal approximation theorem frameworks, a (deep) layer-$L$ stacking of Transformer's MLP block with ReLU activation, for Transformer latent dimension $D$ and MLP hidden layer dimension $cD$ (e.g., for $c=4$), is eventually (i.e., for $L \to +\infty$) a universal approximator for all vector functions up to dimension $D-O(1)$~\cite{SHEN2022101}. A similar universal approximation result eventually (i.e.,  for $L \to +\infty$) holds up to function dimension $n$ for residual ReLU-lowrank networks~\cite{lin2018resnet}, however the convergence rate in $L$ is slower due to the smaller size of the hidden layer. These results translate directly to \BDHGPU architecture which also relies on ReLU with residual connections between layers. To summarize informally, for a Transformer with latent dimension $D$ and \BDHGPU with hidden dimension $d$, we expect their feed-forward networks to be comparably expressive (though usually without strict mathematical equivalence) as function approximators for functions up to some dimension $d', d < d' < D$, between $d'$ and $D$ the Transformer can express a richer class of functions, and between $D$ and $n$, \BDHGPU can approximate some functions, whereas the Transformer does not use such high dimension in its vector representations.

We remark that in all cases, regardless of expressiveness of feed-forward mechanisms, \BDHGPU is set up so that it is only using inputs and producing outputs within the positive orthant, $(R^+)^n \mapsto (R^+)^n$.

The main point to consider is: \emph{what classes of useful high-dimensional functions in the positive orthant does ReLU-lowrank naturally express?}

\subsection{ReLU-lowrank as a signal propagation dynamics}
\label{sec:bdh_feedforwardpropagation}
\label{sec:lowrank}
\label{sec:dynamics}

\paragraph{Error of low-rank approximation (without ReLU).} Consider $R^n$ as a space spanned by a fixed set of $n$ orthogonal unit basis vectors $V = \{v_1, \ldots, v_n\}$, called \emph{nodes}.

The low-rank operation can be used to approximate affinities between pairs of nodes, in the following sense. For a given matrix $G' \in R^{n\times n}$, consider low-rank matrices $\decoder \in R^{n\times d}, \encoder \in R^{d\times n}$, such that $G := \decoder\encoder$ approximates $G'$ pointwise. \footnote{Elements of $G$ can be computed pointwise by each pair of nodes: $V \times V \ni (v_1, v_2) \mapsto G:=\bra{v_1} \decoder\encoder \ket{v_2} \in R$.}

Assume $\|G'\|_{1,\infty} \leq 1$. An application of the \JL lemma shows that the following bound holds in the infinity norm: $\|G' - G\|_{\max} = O(\sqrt{\log n\ /\ d})$ (cf.\ e.g.\ \cite{udell2018bigdatamatricesapproximately,budzinskiy2025bigdataactuallylowrank}). Then, for $z \in R^n=R^{|V|}$ with $\|z\|_1 \leq 1$, we have:
\begin{equation}\label{eq:lowrank}
\|G'z - Gz\|_{+\infty} = O(\sqrt{\log n\ /\ d})
\end{equation}
However, no similar bound holds for $\|G'z - Gz\|_2$. Even for `simple' scenarios like the identity transformation $G'=I_n$, the best low-rank approximation admits O(1) additive error in the L2-norm for almost all inputs, and even greater distortion (approaching $\sqrt{n}$) may appear in the L1-norm.

This makes the low-rank operation useful for determining affinity of pairs of coordinates in dimension $n$, but more problematic as a vector transformation function. However, the ReLU-lowrank mechanism (\eqeqref{eq:relulowrank} is able to suppress a part of the noise of the linear low-rank map, allowing to approximate a sufficiently broad class of non-linear operations.

\paragraph{Expressiveness of ReLU-lowrank for Markov chain propagation.}

We will consider positive inputs $z \in R^{+n}$, focusing on sparse vectors.

One important case concerns approximating a Markov chain transformation $z \mapsto G'z$, for some $G' \in R+^{n\times n}$. For such a transformation in the positive orthant, adding the ReLU operation to the linear map does not change anything directly, $G'z = \relu{G'z}$. However, when considering a low-rank matrix $G$, the non-linear transformation $\relu{Gz}$ can provide a closer approximation of $G'z$ for some classes of input vectors $z$, than the low-rank linear operation $Gz$.

We start with the following illustrative example.

\begin{claim}[propagating a Markov chain]\label{obs:rw}
Let $G'$ be the random walk matrix of a directed graph with out-degree $r$ (i.e., a stochastic matrix with $r$ non-zero entries of $1/r$ in each row), and let $v\in V$ be a node (basis vector), $\|v\|_1=\|v\|_2=1$. Then, for any $\eps > 0$, there exists $d = O(r^3 \log n / \eps)$ such that for some matrices $\decoder \in R^{n\times d}, \encoder \in R^{d\times n}$, we have $\|G'v - f_{\decoder\encoder}(v)\|_1 = O(\eps)$.
\end{claim}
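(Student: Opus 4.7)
The plan is a randomized sketching construction: take $\encoder \in R^{d \times n}$ to be a random matrix with i.i.d.\ $\pm 1/\sqrt{d}$ (or Gaussian) entries, and set $\decoder := G' \encoder^T$, so that $\decoder \encoder = G' \encoder^T \encoder$. The product $\encoder^T \encoder$ is close to the identity coordinate-wise: each off-diagonal entry has mean zero, and by sub-Gaussian concentration plus a union bound over the $n^2$ entries, its magnitude is $O(\sqrt{\log n / d})$ w.h.p. Propagating this through the row-sparse $G'$ (with $r$ nonzero entries of value $1/r$ per row), each coordinate of $\decoder\encoder e_i$ equals $G'_{ji}$ plus a mean-zero noise $\xi_j$ of standard deviation $O(1/\sqrt{rd})$, uniformly bounded by $O(\sqrt{\log n/(rd)})$ with high probability.

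I would then split the $\ell_1$ error $\|G' e_i - \relu{\decoder \encoder e_i}\|_1$ into contributions from the $r$ support coordinates of $G' e_i$ (true value $1/r$, on which the ReLU is transparent as soon as the noise is below $1/r$) and from the $n-r$ off-support coordinates (true value $0$, on which the ReLU contributes $\max(0,\xi_j)$). The support part cleanly contributes $O(r) \cdot O(\sqrt{\log n/(rd)}) = O(\sqrt{r\log n/d})$. The off-support part is the crux: naively, summing positive parts of $n$ mean-zero sub-Gaussian terms of standard deviation $O(1/\sqrt{rd})$ yields $\ell_1$ noise of order $n/\sqrt{rd}$, which scales in $n$ and fails the claimed bound.

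To resolve this I would introduce an explicit uniform negative bias of magnitude $\tau$ inside the linear map itself, by earmarking a single auxiliary coordinate of $\encoder$ whose action is a pure constant (legitimate because inputs $v = e_i$ satisfy $\|v\|_1 = 1$ and therefore provide a reliable unit signal usable by $\decoder$ to subtract $\tau\mathbf{1}$). Choosing $\tau = \Theta(\sqrt{\log n/(rd)})$ large enough to dominate all off-support $|\xi_j|$ uniformly, via a $\Theta(\log n)$-factor sub-Gaussian tail bound and a union bound over the $n$ coordinates, forces every off-support ReLU output to be exactly zero with high probability. The $r$ support entries absorb at most an additional $r\tau = O(\sqrt{r\log n/d})$ of $\ell_1$ error, matching the earlier term, so the total is $O(\sqrt{r\log n/d})$. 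Setting this to $\eps$ gives the plain bound $d = \Theta(r\log n/\eps^2)$; the stated $d = O(r^3\log n/\eps)$ then follows by inflating the signal-to-noise margin so that $\tau$ sits comfortably below the support value $1/r$ (needed so that the ReLU does not clip the legitimate signal) and tightening the joint union bound across the two error regimes, which introduces two further factors of $r$.

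The main obstacle is precisely the off-support accumulation of ReLU-ed noise: without either the implicit-bias trick above or an equivalent randomized sparse construction (e.g.\ a Count-Sketch-style encoding with $O(r^2)$ buckets repeated $O(\log n /\eps)$ times, so that the row-sparsity of $G'$ confines collisions to at most $r$ relevant neighbors per bucket), the natural analysis produces an error scaling with $n$ rather than with $r$. The delicate step is ensuring that the \emph{same} choice of $\tau$, or equivalently the same sketch, works simultaneously for all $n$ output coordinates --- this is where the $\log n$ factor is spent and where the construction must be calibrated to keep the signal on the support from being inadvertently thresholded away.
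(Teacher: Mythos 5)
Your proposal is correct and lands on essentially the same construction as the paper: a JL-type low-rank approximation of $G'$ controlled in the infinity norm, combined with a uniform negative bias smuggled in through one dedicated coordinate of the hidden dimension (exploiting $\|v\|_1=1$), so that the ReLU zeroes every off-support coordinate exactly and leaves each of the $r$ support coordinates within $O(\eps^*)$, giving total $\ell_1$ error $O(\eps^* r)$. Your variance accounting that exploits the row-sparsity of $G'$ is marginally sharper than the paper's generic $O(\sqrt{\log n/d})$ bound, and your final calibration of $d$ against the stated $O(r^3\log n/\eps)$ is about as loose as the paper's own sketch, but the mechanism is identical.
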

\begin{proof}[Proof (sketch)]
Let $\decoder^* \in R^{n\times (d-1)}$, $\encoder^* \in R^{(d-1) \times n}$ denote matrices $\decoder$, $\encoder$ restricted to all but the last coordinate in dimension $d$. Pick $\decoder, \encoder$ so that $\|G'v - \decoder^*\encoder^* v\|_{\infty} < \eps^*$, where $\eps^* = \eps/r$, following \eqeqref{eq:lowrank} (we have $\|G'\|_{1,\infty} \leq 1$ by stochasticity of $G'$). Further, set a fixed bias, placing $1$ on all entries of the last coordinate in dimension $d$ of $\decoder$, and $-\eps^*$ on all entries of the corresponding last coordinate in dimension $d$ of $\encoder$. Taking into account this bias, we now have
$\|(G'v-\eps^* \mathbf{1}) - \decoder\encoder v\|_{\infty} < \eps^*$.

For all coordinates $v_j\in V$ such that $\bra{v_j}G'v = 0$, we now have $\bra{v_j}\decoder\encoder v < 0$, hence also $\bra {v_j}f_{\decoder\encoder}(v) = 0$. For all other coordinates $v_j$, we have $\bra{v_j}G'v = 1/r$, and $1/r - 2\eps^* < \bra {v_j}f_{\decoder\encoder}(v) \leq 1/r$. Thus, $\|G'v - f_{\decoder\encoder}(v)\|_1 \leq 2 \eps^* r$, and the claim follows.\footnote{As a point of elegance, we note that in this proof, $\bra {v_j}f_{\decoder\encoder}(v) \leq 1/r$, so $f_{\decoder\encoder}(v)$ was not an \emph{unbiased} estimator of $G'v$. This is easily fixed in the first-order by introducing a global multiplicative bias of $(1+\eps^*)$ to the approximation, for example, substituting: $(1+\eps^*)\decoder \mapsto \decoder$.}
\end{proof}

The above observation shows how ReLU-lowrank deals with one specific class of graph affinity functions (random walks of adjacency of sparse graphs), for transformations of vectors which are nodes in our distinguished basis. We use this example as it is the simplest case which exhibits the benefit of threshold non-linearity: for basis vectors, the operation $f_{\decoder\encoder}$ captures a basic propagation effect which is well known (in general) to require a \emph{full-rank} matrix $G' \in R^{n \times n}$ if relying only on linear operations.

\paragraph{Propagation and reinforcement of signal.}
The same thresholding approach, as discussed for Markov chains, turns out to be applicable to a wider class of signal propagation dynamics. It consists in first obtaining a positive-valued signal with heavy random noise, then applying a negative bias, and finally using the ReLU gate to act as a noise threshold.

Any linear function $G'$ can be represented with a hidden layer of $s \leq n^2$ nodes, through two matrices $\decoder' \in R^{+ n \times s}$ and $\encoder' \in R^{+ n \times s}$, such that:
$$
G' = \decoder' \encoder'.
$$
The above holds in general, and we will refer to such a representation of $G'$ as having a sparse hidden (synaptic) layer. We will consider now the question of expressing non-negative functions, $G' \in (R^+)^{n\times n}$.  An example of a valid representation of $G'$ is given through a node-edge incidence representation, $\decoder'_{i,(i-1)n+j} = \encoder'_{(i-1)n+j,j} = \sqrt{G'_{ij}}$, but usually this representation is not optimal in terms of the number of non-zero entries of $\decoder'$ and $\encoder'$.

In general, any low-rank approximation of $G$ can be equivalently expressed as $G \approx \decoder' P_D P_E^T \encoder'$, for some two matrices, $P_D, P_E \in R^{s \times d}$. We will consider the most common class of low-rank approximations obtained by taking $P_D = P_E = P \sim \gauss(0,1)^{s\times d}/\sqrt d$. Consider a vector $z$ passing through the ReLU-lowrank operation, and the following vectors $u \in R^s$, $w \in R^n$:
\begin{align*}
u &:= \encoder' z\\
w &:= \decoder' P P^T u
\end{align*}
If $v_i^T z$ has the interpretation of a signal being sent by node $v_i$, then $u$ is the encoded message being passed through the hidden layer of the network, and $v_j^T w$ is the message received by node $v_j$.

\subsection{Modularity in \BDHGPU signal propagation}\label{sec:modularity_detailed}

We are now ready to capture the essence of the signal propagation and reinforcement capability of the ReLU-lowrank system. To describe the conditions under which a neuron is able to decide whether it should, or should not activate. By a standard analysis of independent Gaussians, we have the following probabilistic statement, under random choice of $P$.

\begin{claim}[selective neuron activation]\label{obs:fscore}
Suppose that the signal of $u$ is uniformly concentrated on a set of nodes $A$ of the hidden layer, i.e., for some subset $A$ of indexes of the hidden layer, we have $u_{\bar\alpha}=0$ for $\bar \alpha \not\in A$, and $u_{\bar \alpha} \in [\frac{1-\kappa}{\sqrt{|A|}},\frac{1+\kappa}{\sqrt{|A|}}]$ for $\bar \alpha \in A$, so that $\|u\|_2 \in [1-\kappa,1+\kappa]$ for some small constant $\kappa \ge 0$. Suppose each node $v_j \in V$ is connected in $\decoder'$ to some set of nodes $B_j$ in the hidden layer, $B_j = \{\bar \beta : \decoder'_{j, \bar \beta} \neq 0\}$, and let these connections weight be drawn uniformly $\decoder'_{j, \bar \beta} \in [\frac{1-\kappa}{\sqrt{|B_j|}}, \frac{1+\kappa}{\sqrt{|B_j|}}]$ for $\bar\beta \in B_j$. Let $C_j = A \cap B_j$. Define the ratio:
\begin{equation*}
\varrho :=
\left.
\sqrt{\frac{|C_j|}{|A|} \cdot \frac{|C_j|}{|B_j|}}.
\right.
\end{equation*}
Then, there exists an absolute constant $c>0$, such that for any value of $w_j$ (where we recall that $w := \decoder' P P^T u$), we have:
\begin{equation}\label{eq:fscore}
\Pr\left[w_j \ge (1-\kappa)^2 \varrho  - c \sqrt{\log n\ /\ d} \right] = 1 - O(1/n) \quad\textrm{and}\quad\Pr\left[w_j \le (1+\kappa)^2 \varrho + c \sqrt{\log n\ /\ d}\right] = 1 - O(1/n).
\end{equation}
Thus, the value of $w_j$ can be used by a neuron to obtain an estimation of $\varrho$, and apply a threshold to activate accordingly.
\qed
\end{claim}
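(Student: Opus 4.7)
The plan is to split $w_j$ into a deterministic ``clean'' inner product $d_j^{T} u$ plus a random fluctuation coming from $PP^{T} - I_s$, and then bound each piece separately. Setting $d_j := (\decoder')^{T} e_j \in R^{s}$ (i.e., the $j$-th row of $\decoder'$, viewed as a column vector), we have $w_j = d_j^{T} P P^{T} u$, and this is the object to be analyzed.

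First I would pin down the mean $d_j^{T} u$ using the support and weight hypotheses. Only coordinates in $C_j = A \cap B_j$ give nonzero contributions, and for each $\bar\beta \in C_j$ the product $d_{j,\bar\beta}\, u_{\bar\beta}$ lies in the interval $[(1-\kappa)^{2}/\sqrt{|A||B_j|},\,(1+\kappa)^{2}/\sqrt{|A||B_j|}]$. Summing over the $|C_j|$ contributing terms yields
$$
d_j^{T} u \;\in\; \big[(1-\kappa)^{2} \varrho,\;(1+\kappa)^{2} \varrho\big],
$$
with $\varrho=|C_j|/\sqrt{|A|\,|B_j|}$ matching the definition in the statement. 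An identical direct sum over the supports shows $\|u\|_{2},\,\|d_j\|_{2} \in [1-\kappa,\,1+\kappa]$.

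Next I would control the fluctuation using \JL-type concentration of inner products under the Gaussian projection $P\in R^{s\times d}$ with i.i.d.\ $\gauss(0,1/d)$ entries (so that $\E[PP^{T}] = I_s$). The standard norm-concentration bound states: for any fixed $x \in R^{s}$,
$$
\Pr\!\big[\,|\|P^{T} x\|_{2}^{2} - \|x\|_{2}^{2}| > \epsilon\,\|x\|_{2}^{2}\,\big] \;\le\; 2\,e^{-d\epsilon^{2}/8}.
$$
Applying this to $x = d_j + u$ and $x = d_j - u$ and invoking the polarization identity $d_j^{T} P P^{T} u - d_j^{T} u = \tfrac{1}{4}\big[(\|P^{T}(d_j+u)\|_{2}^{2} - \|d_j+u\|_{2}^{2}) - (\|P^{T}(d_j-u)\|_{2}^{2} - \|d_j-u\|_{2}^{2})\big]$, a two-event union bound gives
$$
\big|d_j^{T} P P^{T} u - d_j^{T} u\big| \;\le\; \tfrac{\epsilon}{2}\big(\|d_j\|_{2}^{2}+\|u\|_{2}^{2}\big) \;\le\; \epsilon(1+\kappa)^{2}
$$
with probability at least $1 - 4e^{-d\epsilon^{2}/8}$. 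Choosing $\epsilon = c'\sqrt{\log n / d}$ for a sufficiently large absolute constant $c'$ drives the failure probability to $O(1/n)$, and bounds the error by $c\sqrt{\log n / d}$ for an absolute $c$ that absorbs the $(1+\kappa)^{2}$ factor.

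Combining the two steps on this good event yields both inequalities in \eqref{eq:fscore} simultaneously. The step I expect to require the most care is the bookkeeping of the constant $c$: one must fix $c'$ large enough that the exponent $d\epsilon^{2}/8$ exceeds $\log n$ by a sufficient margin, while keeping the final error bounded by $c\sqrt{\log n/d}$ after the polarization loses a factor of $4$ and the norm slack of $(1+\kappa)^{2}$ is absorbed. Since $\kappa$ is assumed to be a small absolute constant and only two norm-preservation events must be controlled, this is a routine calculation rather than a genuine obstacle --- the result is essentially an instance of \JL inner-product preservation specialized to a pair of structured sparse positive vectors.
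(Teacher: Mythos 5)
Your proof is correct and follows essentially the same route as the paper's: both decompose $w_j = d_j^{T}PP^{T}u$ around the clean inner product $d_j^{T}u \in [(1-\kappa)^2\varrho,(1+\kappa)^2\varrho]$ and control the deviation by Johnson--Lindenstrauss inner-product concentration at scale $c\sqrt{\log n/d}$ with failure probability $O(1/n)$. The only difference is that the paper cites the inner-product version of \JL as standard, whereas you derive it explicitly from norm concentration via polarization --- a correct instantiation of the same lemma.
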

\begin{proof}
Observe that $w_j = \braket{(P^T D'_{j,\cdot})}{(P^T u)}$. As $\|D'_{j,\cdot}\|_2\in [1-\kappa,1+\kappa]$, a standard application of \JL to vector inner products gives
$\Pr\left[|w_j - \braket{D'_{j,\cdot}}{u}| \le c \sqrt{\log n\ /\ d}\right] = 1 - O(1/n)$ for $c$ large enough. Since $\braket{D'_{j,\cdot}}{u} \in [(1-\kappa)^2 \rho, (1+\kappa)^2 \rho]$, the claim follows.
\end{proof}
\begin{figure}
\includegraphics[trim=0 1cm 0 0,clip,width=\textwidth]{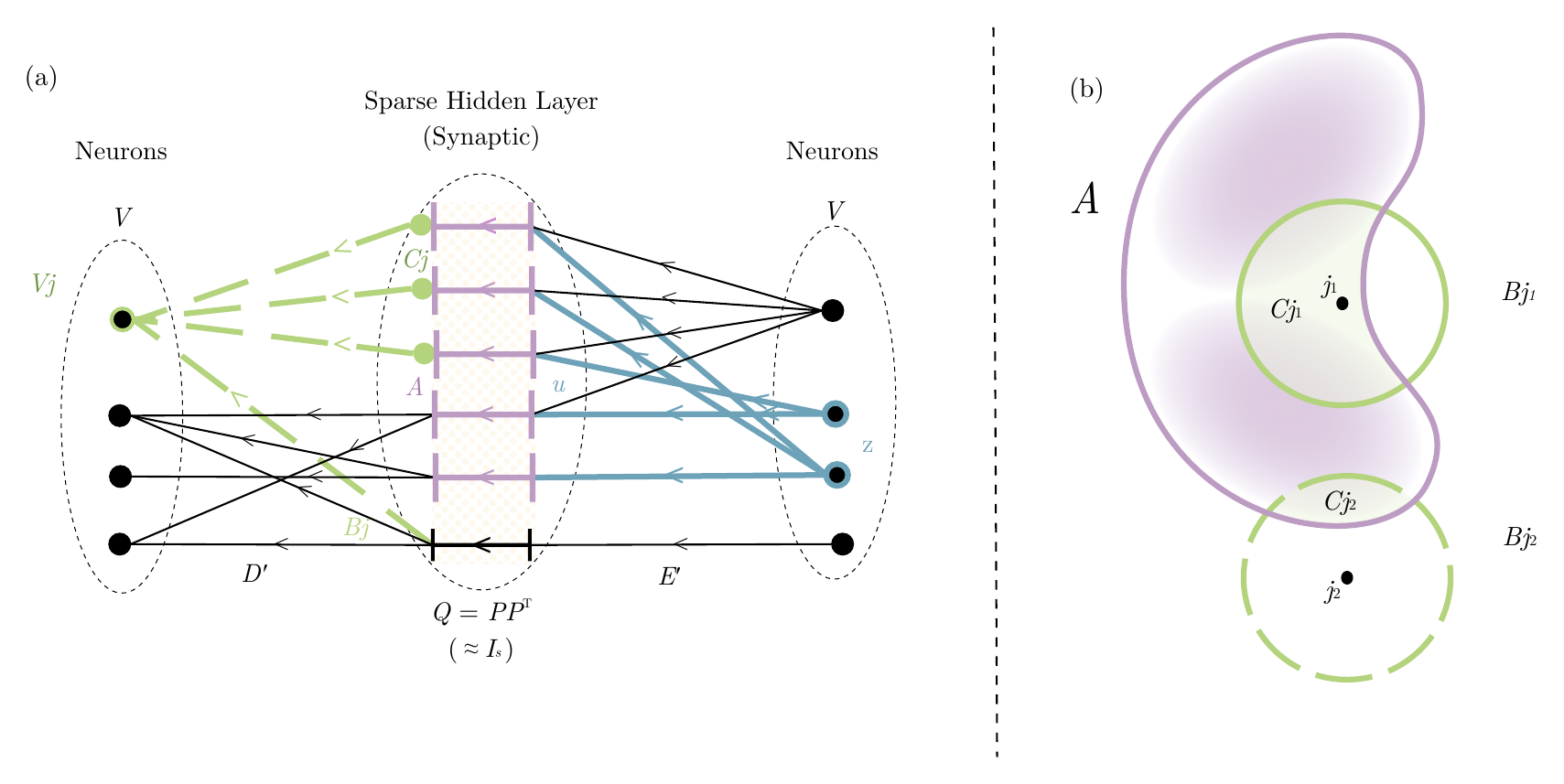}
\caption{The ReLU-lowrank feedforward network of \BDHGPU allows neurons to activate when triggered by activation signals in its own community. (a) Illustration of the selective neuron activation pattern in the proof of Claim~\ref{obs:fscore}, showing the activation decision of node $v_j$ (left) based on active set $A$ in the sparse hidden layer. (b) Illustration of \eqeqref{eq:fscore} showing the relationship between sizes of sets in the sparse hidden layer: active set $A$, set $B_j$ connected to neuron $v_j$, and the intersection $C_j = A\cap B_j$: neuron $v_{j_1}$ becomes active, but neuron $v_{j_2}$ does not.}
\label{fig:fscore}
\end{figure}
The ReLU-lowrank operation $f_{\decoder\encoder}$, after adding appropriate negative bias, can thus be used to propagate positive affinity functions $G'$ on input vectors, performing the following form of thresholding: neurons $j$ in the output layer individually compute a form of local ``F-score'' $\varrho$ given by~\eqeqref{eq:fscore} of the activation of the positive sparse hidden layer, and decide based on it whether they are good match for the output activation; if the threshold condition on $\varrho$ is not met, the neuron $j$ does not activate in the output vector (see \figref{fig:fscore} for an illustration).

\Eqeqref{eq:fscore} naturally coincides with a pattern of communication within network graphs $G'$ admitting positive Newman modularity~\cite{doi:10.1073/pnas.0601602103}, allowing nodes $v_j$ to correctly receive messages $u$ which in the hidden layer primarily reached a denser cluster of $G'$ containing $v_j$. For a specific illustration, let $H$ be an undirected $k$-block stochastic block model (SBM) network~\cite{sbm2011} with $k\in\Nat$ blocks of $n/k$ nodes each, in-block edge density $p$ and out-of-block edge density $q<p$. We put $G' := \decoder'\encoder'=H^2$, i.e., the first connection layer of $G'$ is $\encoder'=H$ and the second connection layer is also $\decoder'=H$. Suppose that $H$ is a random SBM graph with positive Newman modularity separated from $0$, i.e., let $\mu = \frac{k-1}{k} \frac{p-q}{p+(k-1)q} > 0$. Following Claim~\eqref{obs:fscore} with $\kappa = 0$, we can find a ReLU-lowrank representation $G$ to achieve a communication scheme on $G'$, such that a message sent from one node $z = v_i$ activates a node $v_j$ when $i$ and $j$ are in the same block with probability $1-O(1/n)$, and with probability $O(1/n)$ otherwise, when $\mu > \frac{1}{p}\sqrt{\log n\ /\ d}$.

We thus make the following intuitive observation.
\begin{observation}[in-cluster signal reinforcement]\label{obs:sbm}
The ReLU-lowrank representation of $\BDHGPU(n,d)$ is sufficient to represent in-cluster information spreading dynamics in models of graphs with constant in-cluster density and arbitrarily small positive modularity (such as the $k$-cluster Stochastic Block Model) when $d / \log n = \omega(1)$ is an arbitrarily slowly growing function.
\end{observation}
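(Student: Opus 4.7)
The plan is to instantiate the general framework of Claim~\ref{obs:fscore} (selective neuron activation) on the concrete $k$-block Stochastic Block Model already set up in the paragraph preceding the Observation, and to verify that the same-block vs.\ cross-block activation gap exceeds the noise floor $c\sqrt{\log n/d}$ produced by that claim. Taking $H$ to be a random SBM with $k$ blocks of $n/k$ nodes, in-block Bernoulli density $p$ and out-of-block density $q$, I would form the sparse synaptic representation $\encoder' = \decoder' = H$ with rows and columns normalized by $1/\sqrt{\deg}$ so that the $\kappa$-regularity hypothesis of Claim~\ref{obs:fscore} is met with a vanishing $\kappa = o(1)$. When the input is a basis vector $z = v_i$, the activation $u = \encoder' z$ is supported on $A = N_H(i)$, the test direction $\decoder'_{j,\cdot}$ is supported on $B_j = N_H(j)$, and the discriminant $\varrho$ of Claim~\ref{obs:fscore} becomes a function of the common-neighbor count $|C_j| = |N_H(i)\cap N_H(j)|$.

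The second step is the combinatorial computation. Linearity of expectation on the independent Bernoulli edges gives $\E|A| = \E|B_j| = (n/k)(p+(k-1)q)(1+o(1))$, uniformly in $i,j$, and two cases for $\E|C_j|$:
\begin{align*}
\E|C_j| &= (n/k)\bigl(p^{2} + (k-1)q^{2}\bigr)(1+o(1)) \quad \text{if } i,j \text{ share a block,}\\
\E|C_j| &= (n/k)\bigl(2pq + (k-2)q^{2}\bigr)(1+o(1)) \quad \text{otherwise.}
\end{align*}
Substituting into $\varrho = |C_j|/\sqrt{|A|\,|B_j|}$ and simplifying, the gap between the same-block value $\varrho_1$ and the cross-block value $\varrho_2$ equals $(p-q)^{2}/(p+(k-1)q)$ up to a $(1+o(1))$ factor. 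Using the identity $\mu = \tfrac{k-1}{k}(p-q)/(p+(k-1)q)$, this gap rewrites as $\tfrac{k}{k-1}(p-q)\,\mu$, which is $\Omega(p\mu^{2})$ when $q$ is bounded away from $p$; this recovers precisely the threshold $\mu > \tfrac{1}{p}\sqrt{\log n/d}$ quoted in the text just before the Observation.

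The third step is probabilistic. Chernoff bounds on the $\Theta(n)$ independent Bernoulli indicators constituting each of $|A|, |B_j|, |C_j|$ yield concentration within a $(1\pm n^{-1/3})$ multiplicative factor of the means with probability $1 - n^{-\omega(1)}$; after a union bound over the $n$ output nodes $j$ this allows invoking Claim~\ref{obs:fscore} with $\kappa = o(1)$, giving $|w_j - \varrho| \le c\sqrt{\log n/d}$ for every $j$ with probability $1 - O(1/n)$. Because $d/\log n = \omega(1)$ we have $\sqrt{\log n/d} = o(1)$, so whenever $\mu$ exceeds the quoted threshold, simply thresholding $w_j$ at the midpoint $(\varrho_1 + \varrho_2)/2$ correctly discriminates in-block from out-of-block targets; a further union bound over the choice of target yields the claimed $1 - O(1/n)$ success rate.

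The main obstacle, in my view, is bookkeeping the approximation errors cleanly so that the value of $\kappa$ produced by concentration is still small enough to be absorbed by Claim~\ref{obs:fscore} without eroding the positive gap $\varrho_1 - \varrho_2$. A secondary delicate point is the normalization of $\encoder'$ and $\decoder'$: the raw SBM adjacency has entries in $\{0,1\}$, but Claim~\ref{obs:fscore} demands $L_2$-normalized hidden-layer coordinates, so one must either normalize inside $\encoder',\decoder'$ by (random) row sums (introducing mild correlations requiring replacement with deterministic degree proxies in high probability) or absorb a constant global bias into the ReLU threshold, analogously to the bias trick used in Claim~\ref{obs:rw}. Either path works, but the accounting must remain uniform over the $n$ potential receivers $v_j$ and compatible with the $O(1/n)$ failure-probability budget.
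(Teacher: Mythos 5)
Your proposal is correct and follows essentially the same route as the paper: the paper's justification of Observation~\ref{obs:sbm} is precisely the paragraph preceding it, which instantiates Claim~\ref{obs:fscore} on the $k$-block SBM with $\encoder'=\decoder'=H$ and $G'=H^2$ and compares the same-block versus cross-block gap in $\varrho$ to the noise floor $c\sqrt{\log n\ /\ d}$, and you merely make explicit the common-neighbor computation and the degree-concentration step that the paper compresses into the idealization $\kappa=0$. The only small slip is internal to your write-up: the lower bound $\Omega(p\mu^{2})$ on the gap would give a threshold of order $p^{-1/2}(\log n/d)^{1/4}$, whereas the quoted threshold $\mu>\frac{1}{p}\sqrt{\log n\ /\ d}$ follows from the sharper form $\Theta((p-q)\mu)$ of the gap when $q$ is bounded away from $p$ --- in either case the conclusion for $d/\log n=\omega(1)$ is unaffected.
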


While Claim~\ref{obs:fscore} and Observation~\ref{obs:sbm} are made with reference to an almost-uniform distribution of signal $u$ on the set of nodes of the middle layer, $u$ can have (and in practice does have) a distribution of density which is non-uniform, e.g., going across $a=O(\log n)$ different clustering scales, with a $(1/a)$-fraction of the signal represented at each scale. This allows neurons in the output layer to combine a smaller number of strong signals in its local cluster, with a larger number of weaker ones spread more globally. Such an approach coincides with the observed structure of the graph $\decoder'\encoder'$, discussed in Subsection~\ref{sec:bdh_empiricalgraphs}.

\paragraph{Supermodularity on input perturbation.}
We clarify how the properties of function $f_{\decoder\encoder} :  (R^+)^n \to (R^+)^n$ relate to the previously discussed ability to make an input signal resonate ``within a module'' in a graph with hidden modular structure. First, note that $f_{\decoder\encoder}$ \emph{is a subadditive function, but is not submodular in general} with respect to the set of $n$ coordinates of its input vector. In some of the regimes in which it appears to be operating, locally $f_{\decoder\encoder}$ exhibits a form of behavior opposite to submodularity, referred to as `supermodularity', or `increasing returns' of adding new coordinates to the input vector. This is already implicitly captured by Claim~\ref{obs:fscore}, but we can consider a simpler example.

Take a variant of the setting from Observation~\ref{obs:rw} with the same choice of $G'$, and let $z \in (R^+)^n$ and biases of $\decoder$ be chosen so that all coordinates of $\decoder\encoder z$ are approximately equal to $-1.5/r \pm o(1)$ (this can be done by choosing e.g. $z_j = 1/n$). Then, $f_{\decoder\encoder}(z)=0$, and for any $v_i, v_j \in V$, $f_{\decoder\encoder}(z+v_i) = 0$ a.s., $f_{\decoder\encoder}(z+v_j)= 0$ a.s., but $f_{\decoder\encoder}(z+v_i+v_j)$ has non-zero coordinates a.s. with values approximately $1/{2r}$, for all nodes $v_k$ which are common out-neighbor nodes of $v_i$ and $v_j$, i.e., for all $k$ such that $G'(v_i, v_k) = G'(v_j,v_k) = 1/r$. This mechanism generalizes to finding common neighborhoods which have many connections to two given subsets of nodes, $V_a$ and $V_b$. In a setting where the considered affinity $G'$ is bi-directional (e.g., a symmetric matrix), this corresponds to finding shortcut nodes, allowing to go from $V_a$ to $V_b$.

It follows that the neighborhood-reinforcing nature of the threshold dynamics of \BDHGPU, which plausibly follows from the logic of its role in inference and from the needs for an efficient computational process, is starkly different from the more often studied submodular behavior of threshold and cascade dynamics on real-world networks~\cite{kempe2003maximizing}, and plausibly, much less smooth when considered as a dynamical process.

\subsection{Empirical findings: parameter distribution in ReLU-lowrank matrix products}\label{sec:bdh_empiricalgraphs}

We consider the $\decoder$ matrices (in the same way $\decodery$ and $\decoderx$) and $\encoder$ matrix obtained after training of \BDHGPU models, and used in the ReLU-lowrank operation \eqeqref{eq:relulowrank}, $f_{\decoder\encoder}(z) = \relu{\decoder\encoder z}$.

\paragraph{Choice of prior of matrix parameter distributions.} Following the discussion in Section~\ref{sec:modularity_detailed}, we expect matrix $G := \decoder\encoder$ to reflect the clustering (modularity) structure of the neuron-neuron communication graph. Any plausible parameter distribution of matrix $G$ must therefore allow heavy-tailed distribution of entries. At the same time, a Gaussian noise term is inherent to low-rank matrix representation, and needs to be taken into account together with this heavy-tailed distribution.

We now provide a somewhat more fine-grained explanation, which leads to the prior on the structure of matrix $G$ as given by \eqeqref{eq:epsilon}. Consider a training set-up in which the ReLU-lowrank operation described by matrix $G$ is treated as an approximation of the same operation, governed by a high-rank matrix $G'$, with $f'(z) := \relu{G'z}$. Considering this block in isolation from the rest of the training system, the training of matrices $\decoder$, $\encoder$ goal corresponds to learning an approximation  of $f'$, with $\decoder \in  R^{n,d}, \encoder \in  R^{n,d}$, such that $f(z) \approx f'(z)$ holds for some class of vectors $z$.

For the rest of this analysis, we will consider the function $f'$ as a ground truth reference for the intended operation of the ReLU-lowrank block. This type of analysis can be seen as plausible over short time spans in later phases of training of a \BDHGPU model, i.e., once individual neurons in $R^n$ have started to admit semantic or functional meaning, and so when function $\decoder'\encoder'$ describes a property of the problem being solved in a (frozen) concept space, and not a co-learning process between the representation of the concept space in $R^n$ and the functions applied to it.

We can represent $G' := \decoder'\encoder'$, where $\decoder' \in  R^{n,s}$, $\encoder' \in R^{s,n}$, with $s = O(n^2)$, are in general matrices of rank $n$; we have $f'(z) := \relu{\decoder'\encoder'z}$. Without loss of generality, we can choose from among the possible representations one with the following distribution of positive and negative elements: $\decoder' \in (R^+)^{n,s}$, $\encoder' = {\encoder'}\ee -  {\encoder'}\ii$, with ${\encoder'}\ee, {\encoder'}\ii \in (R^+)^{s,n}$. We will write: $G' = {G'}\ee - {G'}\ii$, where ${G'}\ee  = \decoder'{\encoder'}\ee$, and ${G'}\ii  = \decoder'{\encoder'}\ii$. The main purpose of the chosen representation $G' = \decoder'\encoder'=\decoder'({\encoder'}\ee - {\encoder'}\ii)$ is to have matrices $\decoder'$, ${\encoder'}\ii$, ${\encoder'}\ee$ with much smaller outlying elements compared to matrix $G'$, which leads to more justified conclusions about the uniform nature of the noise introduced by the low-rank decomposition.\footnote{For a specific example, one very broad class of matrices $G'$ is given by the product of sparse matrices $\decoder'$, ${\encoder'}$, in which each $s$-element row of $\decoder'$ (column of ${\encoder'}$) has at most $\Delta \ll n$ non-zero elements, each with value bounded by $O(1/\sqrt{\Delta})$, and all remaining $s-\Delta$ elements of these matrices are equal to $0$. The resulting elements, ${G'}_ij = \sum_{\alpha} \decoder'_{i,\alpha}\{\encoder'\}_{\alpha,j}$, may be much less uniform, only satisfying ${G'}_ij=O(1)$. This type of scenario captures the expressiveness of set intersection for ``bag-of-words'' models for language, or expressiveness of ``hub label'' representations for a measure of node proximity in a directed graph.}

Assume now that we learn to approximate function $f'$ with $f_{\decoder\encoder}$ by trainable matrices $\decoder, \encoder$ through the following low-rank scheme:
$$
G = \decoder \encoder := (B_D + \decoder' P) (P^T \encoder' + B_E^T),
$$
where $P \in R^{s, d}$ is \emph{non-parametric} and the result of random sampling an almost-orthonormal random projection so that $PP^T \approx I_s$ (e.g.\ $P \sim \gauss(0,1/\sqrt{d})^{s, d}$), and $B_D, B_E \in R^{n,d}$ represent \emph{trainable} additional terms for compensating error or introducing bias, with the goal of minimizing some loss function $\Loss(f',f_{\decoder\encoder})$. The terms $B_D, B_E$ compensate the error introduced by the approximation $PP^T \approx I_s$, after the ReLU operation.

Let $Q := P P^T = I_s + \delta_I + \delta_Q$, where $\delta_I \in R^{s\times s}$ is a diagonal error matrix, and $\delta_Q \in R^{s\times s}$ is a non-diagonal (hollow) matrix. We have:
\begin{equation*}
G = \decoder \encoder = ({G'}\ee - {G'}\ii) + \decoder' \delta_I ({\encoder'}\ee-{\encoder'}\ii) + \underbrace{\decoder' \delta_Q \encoder'}_{\eps_Q} + \underbrace{(B_D\encoder + \decoder B_E^T)}_{\eps_B}.
\end{equation*}
Since all elements of $\decoder', {\encoder'}\ee, {\encoder'}\ii$ are non-negative and $I_\delta$ is diagonal, we can represent elements $G_{ij}$, for $i,j \in 1,\ldots, n$, as follows:
\begin{equation}\label{eq:epsilon}
G_{ij} = (1+\eps_{\delta\,ij}\ee){G'}\ee_{ij} - (1+\eps_{\delta\,ij}\ii) {G'}\ii_{ij} + \eps_{Q\,ij} + \eps_{B\,ij}
\end{equation}
where $|\eps_{\delta\,ij}\ee| = O(\sqrt{\log n\ /\ d})$ and $|\eps_{\delta\,ij}\ii| = O(\sqrt{\log n\ /\ d})$ have the interpretation of small multiplicative distortion.

Following~\eqref{eq:epsilon}, we expect the elements of $G$ to be distributed as the sum of four different distributions.
The term ${G'}\ee_{ij}$ has the interpretation of positive ground truth elements of $G'$. The term $-{G'}\ii_{ij}$ has the interpretation of negative ground truth elements of $G'$; its use in combination with the ReLU mechanism can be interpreted as inhibitory action. Both of these terms are subject to slight multiplicative distortion.

The term $\eps_{Q\,ij}$ has the interpretation of non-trainable noise (which depends only on $\decoder'$, $\encoder'$ and the random choice of $P$). Under reasonable assumptions on outlying elements of $\decoder', \encoder'$, it is a form of almost-Gaussian symmetric noise inherent to the considered class of low-rank projections, $\eps_{Q\,ij} \rightarrow N(0,\sigma_Q)$, for some $\sigma_Q \in R^+$, and the expected value of this noise is typically very close to $0$, even when considering the expectation of $\eps_{Q\,ij}$ conditioned on known values of $\eps_{Q\,i'j'}$ for a small number of indexes $(i',j')$ in the matrix.

Finally, $\eps_{B\,ij}$ is a trainable term, whose norm tends to $0$ as $d$ increases. We expect it to have the interpretation of bias used to offset the low-rank Gaussian noise and perform denoising in the ReLU-gate, as previously discussed in Section~\ref{sec:dynamics}. Because of the action of the ReLU gate, we plausibly expect the distribution of $\eps_{B\,ij}$ to be skewed towards negative numbers, with $0 > \E\eps_{B\,ij} \gg \sigma_Q$.

From the above discussion of the four terms of the sum in \eqeqref{eq:epsilon}, we see that only one of these terms, ${G'}\ee_{ij}$, is expected to take values much larger than $\sigma_Q$ with non-negligible probability. We reach the conclusion that a part of the relevant signal of $G'$ is concentrated in the right tail of large positive matrix entries of $G$.

\begin{hypothesis}[right tail contains signal]\label{hyp:righttail}
Consider the interpretation that the ReLU-lowrank transformation $z \mapsto \relu{G z}$, with $G=\decoder\encoder$, has learned to act as an approximation of some other operation $z \mapsto \relu{G' z}$, where $G'$ has no low-rank constraint imposed on it. Then the right tail of the distribution of matrix elements of $G$ corresponds to the right tail of the distribution of elements of $G'$, starting from some positive threshold value $\sigma_Q$, associated with the noise of the low-rank decomposition. Formally, for almost all pairs of indices $i,j\in 1,\ldots,n$ such that $G_{ij} \gg \sigma_Q$, we also have ${G'}\ee_{ij} \gg \sigma_Q$.
\end{hypothesis}
The converse implication, that ${G'}\ee_{ij} \gg \sigma_Q$ implies ${G}_{ij} \gg \sigma_Q$, also plausibly holds under some stronger assumptions on the form of biases $\eps_{B\,ij}$ which may follow from minimizing training error for the specific inference task considered.

This direct method of decoding $G$ from $G'$ does not extend from the right tail towards the center of the distribution. For the choices of $n, d$ we make, we expect the term dominating most elements of matrix $G$ to be $\eps_{Q\,ij}$. For example, when $G'$ is a stochastic matrix, we expect to have $\sigma_Q = O(1/\sqrt{d})$ (cf.\ \eqeqref{eq:lowrank} for the corresponding infinity-norm bound, $|\eps_{Q\,ij}| = O(\sqrt{\log n\ /\ d})$). With $\sum_{i,j} |{G'}\ee_{i,j}| = n$ for a stochastic matrix,
we expect the right heavy tail of the element distribution of $G$ to have $\Omega(n \sqrt d)$ elements (out of the $n^2$ matrix elements of $G$) which are clearly separated from the Gaussian noise.

We confirm empirically that the right tail of $G$, defined as above with respect to threshold $\sigma_Q$, turns out to contain a non-negligible portion of the parameter capacity of matrices $\decoder$, $\encoder$, even for very small models (10M to 100M parameters).

\paragraph{Experimental setup.}

We prepared parameter matrices of a 24M-parameter \BDHGPU model configured with $h=4$ heads and $L=8$ layers, $n=h\cdot 2^{13}= 2^{15}$ neurons, and hidden low-rank dimension $d=256$. We considered the weighted neuron-neuron interaction graph, having the encoder-decoder matrix pair $G = \decoderx \encoder$ as its node adjacency matrix on the set of neurons $V = 1, \ldots, n$. For uniformity, we subsampled $G$ by picking node subsets $V^{(a)}$, $a\in \{1,2,3,4\}$, associated with each head, and considered the weighted subgraphs $G^{(ab)} = \{V, \{(u, v, G_{uv}): u \in V^{(a)}, v\in V^{(b)}\} \}$, with $G^{(ab)} \in R^{n^*\times n^*}$ where $n^* = n/h = 2^{13}$, each having $(n^*)^2 = 2^{26}$ weighted edges.

We repeated the experiment $5$ times using models pretrained with different random seeds.

\paragraph{Findings.} For all of the $5$ models we pretrained for this purpose, exactly $3$ out of the $4$ encoder heads and all decoder heads adhered to the prior on parameter distribution given by~\eqeqref{eq:epsilon}, showing a good correspondence for $12$ out of $16$ of their parameter sub-matrices $G^{(ab)}$.

We continue the discussion in this Section for one specific matrix $G^{(ab)}$ of one specific pretrained models, which was chosen as representative. The example we choose has $a=b$; and so the matrix $G^{(ab)}$ has an interpretation as $G[V_a]$, i.e., the subgraph of $G$ induced by vertex set $V_a$, which enables us to visualize the graph $G^{(aa)}$ more easily on its vertex set $V_a$.

We refer to the representative object of our study, i.e., to the matrix $G^{(aa)}$ of the selected model, as $G^*$. For any matrix $A$ and $\beta\geq0$, we denote by $A_{\geq \beta}$ the matrix $A$ cut off at threshold $\beta$, i.e., ${A_{\geq \beta}}_{ij}=A_{ij}$ if $A_{ij}\geq \beta$, and ${A_{\geq \beta}}_{ij}=0$ otherwise.

The distribution of elements $G^*_{i,j}$ is presented in \figref{fig:prettydistro}~(a).
\begin{figure}
\includegraphics[width=\textwidth]{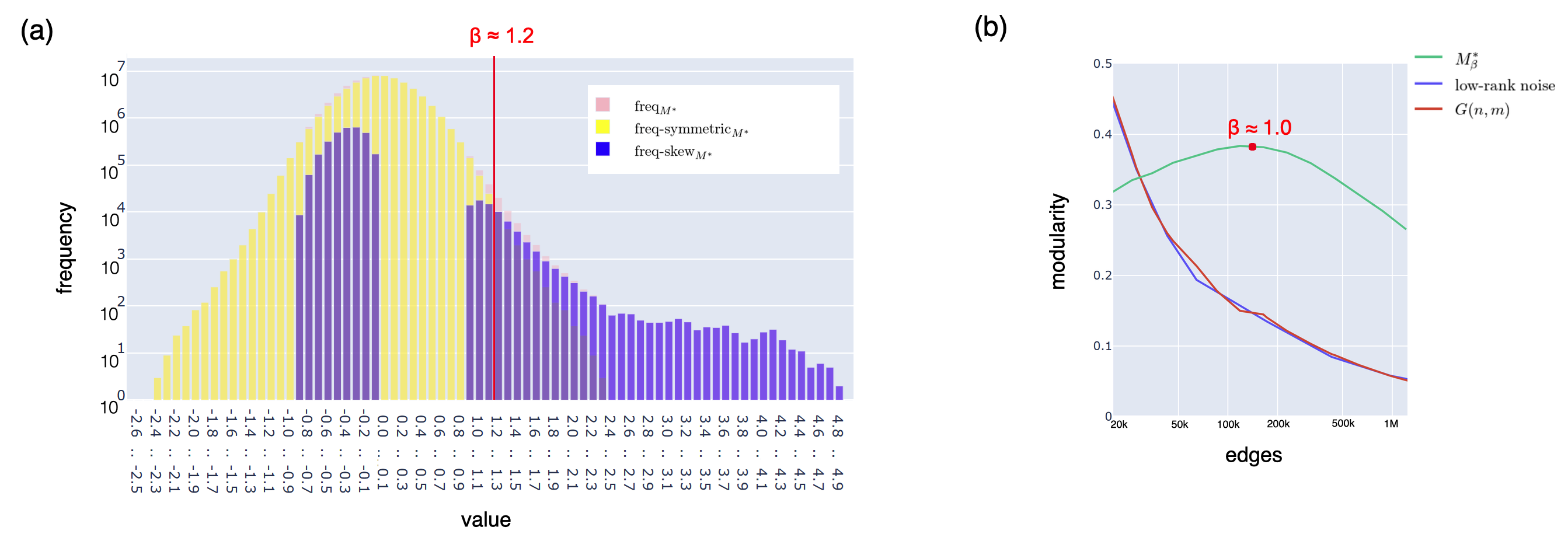}
\caption{(a) Heavy-tailed element distribution and modularity analysis of the excitatory neuron-neuron connection graph contained the encoder-decoder matrix $G^*$. Distribution of elements of the encoder-decoder matrix $G^* \in R^{n^* \times n^*}$ of a \BDHGPU model with $n^*=8192$ neurons and $d=256$: histogram $\mathrm{freq_{G^*}}(x)$, its symmetric part $\mathrm{freq-symmetric_{G^*}}(x) := \min\{\mathrm{freq_{G^*}}(x), \mathrm{freq_{G^*}}(-x)\}$, and distribution skew $\mathrm{freq-skew_{G^*}}(x) := \mathrm{freq_{G^*}}(x)-\mathrm{freq-symmetric_{G^*}}(x)$. $\diamond$ (b) Estimate (lower bound) of Newman modularity of matrix $G^*_{\geq \beta}$ for different values of $\beta$, plotted as a function of the number of non-zero elements (edges) of $G^*_{\geq \beta}$. Modularity of random graph baselines are provided for reference, for the $G(n^*,m)$ model with the same number of edges as $G^*_{\geq \beta}$, and for a matrix $(P_1P_2^T)_{\geq \beta'}$ with the same number of edges as $G^*_{\geq \beta}$, where $P_1, P_2 \sim \gauss(0,1)^{n^*\times d}$. The modularity estimates were obtained using the community structures returned by the Louvain algorithm, in the best of 5 clustering runs with different random seeds.}
\label{fig:prettydistro}
\end{figure}
We find that the observed distribution corresponds well to the prior expected of it by~\eqeqref{eq:epsilon}. We determine the threshold value $\beta \geq 0$ at which we expect to capture signal, ${G^*_{\geq \beta}} \approx {G'_{\geq \beta}}$, following Hypothesis~\ref{hyp:righttail}. We find (from \figref{fig:prettydistro}(a)) that the separation from noise happens for this specific matrix $G^*$ at $\beta_1 \approx 1.2$, at which point the right heavy tail begins to dominate. However, already for much smaller values of $\beta$ we find that ${G^*_{\geq \beta}}$ has high modularity, and this actually increases as more non-zero values are added to ${G^*_{\geq \beta}}$ for smaller $\beta$, up to a maximum at $\beta_2 \approx 1.0$ (\figref{fig:prettydistro}(b)). Even for much smaller values of $\beta$, the modularity of ${G^*_{\geq \beta}}$ remains almost constant up to well above $2^{20}$ non-zero matrix entries on the $n^*=2^{13}$ nodes considered. The modularity of the baselines, of random graphs or random low-rank matrix products, quickly drops to 0 in this regime. This should be compared to the total number of parameters of the matrices $\decoderx, \encoder$ corresponding to $G^*$, i.e., $2\cdot 2^{13} \cdot 2^8 = 2^{22}$ parameters. A complementary analysis of the inhibitory signal, for a similarly defined matrix $|G^*_{\leq -\beta}|$, also finds that this structure has high modularity.

In auxiliary experiments, we looked at basic graph parameters of matrix $G^*_{\geq \beta}$, treated as a directed graph on its set of nodes. We set $\beta=1.2$, obtaining $m=46820$ non-zero entries (edges) in $G^*_{\geq \beta}$.
We found that $G^*$ has a heavy-tailed, power-law-like degree distribution, with generally more concentrated out-degree than in-degree (\figref{fig:powerlaw}(a)).
\begin{figure}[ht]
\noindent%
(a)\\
\includegraphics[width=\textwidth]{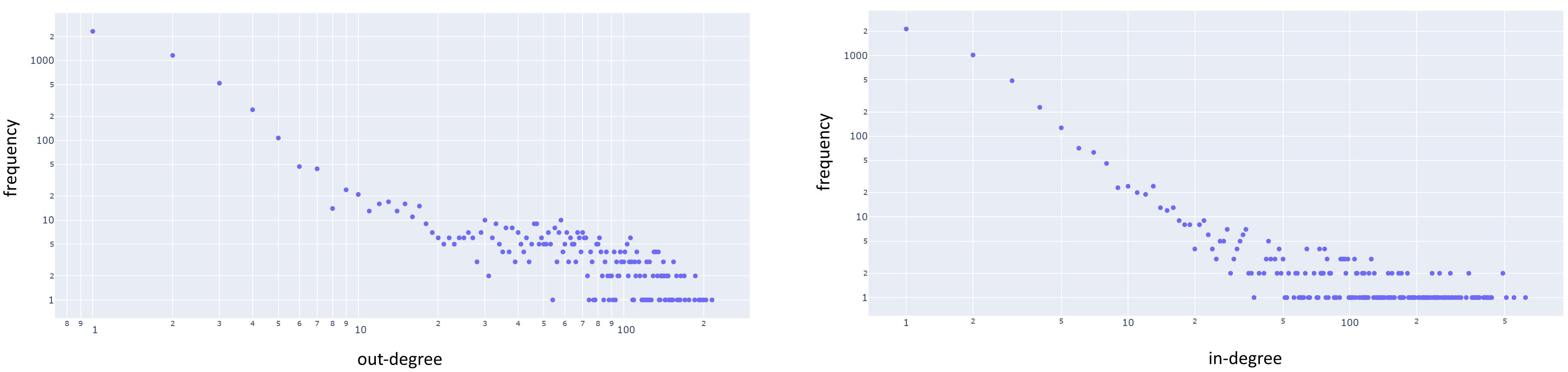}\\
\noindent%
(b)\\
\hspace*{0.1\textwidth}\includegraphics[width=0.8\textwidth]{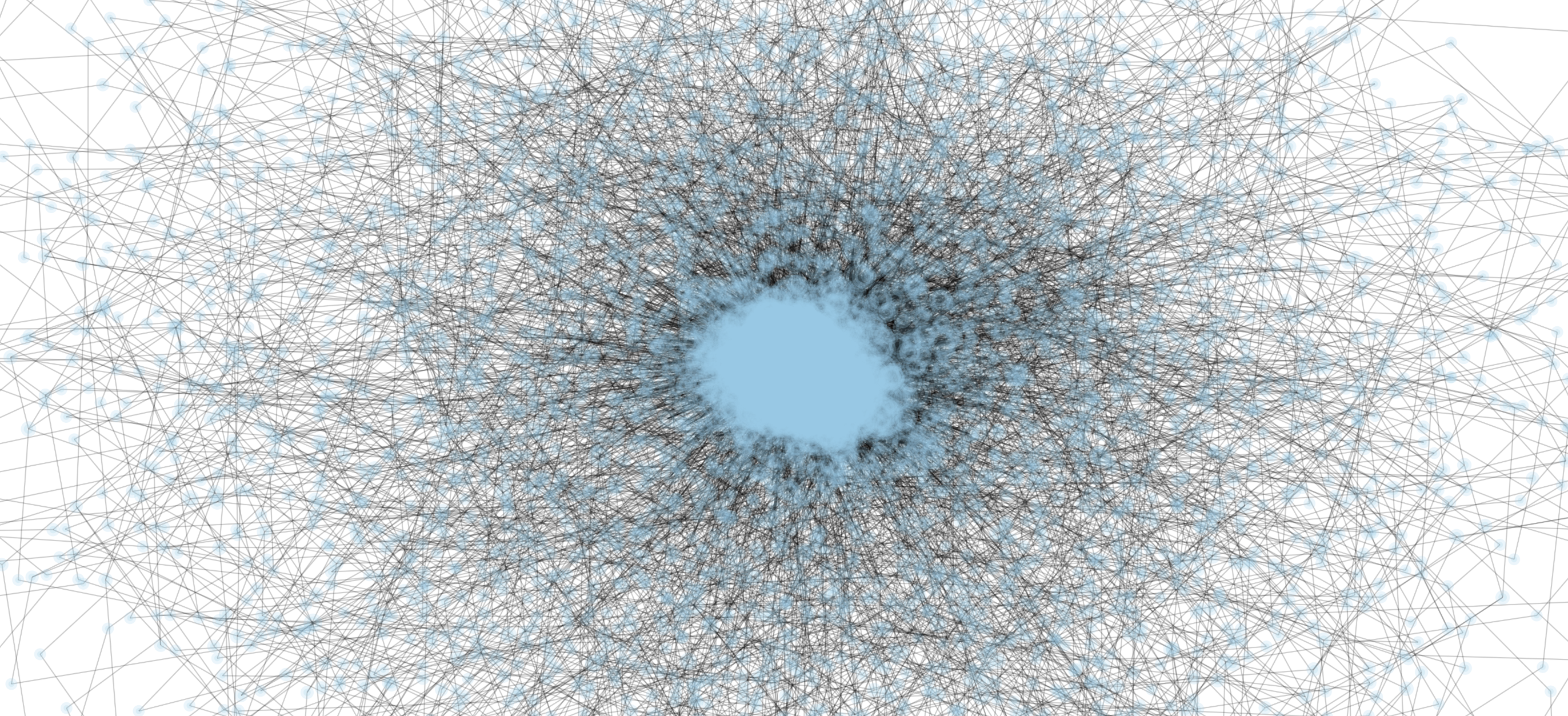}
\caption{(a) Unweighted in-degree and out-degree distribution for the $n^*=8192$ neuron nodes and $m=46820$ edges of matrix $G^*_{\geq \beta}$ with $\beta=1.2$. The distributions exhibit power law distributions, with different exponents, the out-degree distribution being more concentrated. (b) Visualization of graph $G^*_{\geq \beta}$, hinting at its core-periphery structure.}
\label{fig:powerlaw}\label{fig:powerlaw2}
\end{figure}
Generally, this finding is consistent with expectations as to the structure of a network with positive modularity. The difference of in- and out-degree distributions, while plausible and prevalent in real-world information dissemination networks, was not considered in Subsection~\ref{sec:dynamics}.

Finally, a visualization of $G^*_{\geq \beta}$ (\figref{fig:powerlaw}(b)) exhibits a core-periphery structure. This is again consistent with the expected modular structure.

\begin{finding}
We confirmed that during training, \emph{a graph structure with positive modularity} appears in \BDHGPU model parameter matrices $\decoderx \encoder$ and $\decodery \encoder$. This modular structure plausibly follows from the network's inference function, and specifically from the cluster-aware information propagation dynamics supported by the ReLU-lowrank mechanism (Observation~\ref{obs:sbm}).
\end{finding}

We also observed that for all of the studied models with $h=4$ heads, $1$ encoder sub-matrix out of $4$ has no heavy positive tail, and generally appears to capture a form of inhibitory structure ${G'}\ii$ from \eqeqref{eq:epsilon}. Since we have not provided convincing mechanisms for isolating negative signals in $G^*$ and these are easily confounded with the bias term $\eps_{B}$, we omit this case from discussion. We remark that the apparent need for passing activations through such a separate ``inhibitory circuit'' is one of the most evident explanations for why introducing (a small number of) heads to \BDHGPU provides an improvement in model quality.

\section{Analysis: linear attention, sparse positive activation, and monosemanticity}\label{sec:bdh_lin_attention_monosemanticity}\label{sec:attention}

\subsection{Macro-expressiveness of attention in \BDHGPU}\label{sec:bdhgpumacroexpattention}

The attention mechanism of \BDHGPU can be described at a coarse-grained level as a transformation mechanism for key-query-value vectors, similar to that in the Transformer. This description is complementary to the interpretation of the \BDHGPU attention mechanism at the micro-level of correlations between neuron pairs, which we defer to Section~\ref{sec:micro_attention_bdh_gpu}, which provides more insight into the way activation vectors used by \BDHGPU relate to the concept space of the model.

We compare the attention mechanism of \BDHGPU with the attention mechanism of the Transformer, describing both as reflections of a general attention mechanism. Specifically, we explain why, and up to what context length, the linear attention mechanism of \BDHGPU plausibly fits into macro-expressiveness frameworks of attention designed for the Transformer (based on RASP).

\paragraph{Basic properties of \BDHGPU attention.} The key-query space for \BDHGPU is $R^n$, the same as its neuron dimension, rather than the small dense dimension used by the Transformer. The keys and queries used by \BDHGPU are given by positive vectors, in $(R^+)^n$, and are expressed by the same vector $x_{t,l}$ (noting that at time $t$, $x_{t,l}$ is used as a query, and only $x_{\tau,l}$, for $\tau \leq t-1$, are used as keys).

`Value' vectors of \BDHGPU remain in the small dimension, $R^d$, which at some model scales is comparable to the dimension used for attention `values' in common configurations of the Transformer. %

The relationship between softmax-based attention of the Transformer, regarded as a low-dimensional kernel for general linear attention, and linear attention for vectors in the positive orthant, was considered in a framework called FAVOR+~\cite{choromanski}. Here, we provide a few complementary (simpler) observations, sufficient to grasp the main effects of the ability of Linear Attention to distinguish facts in context.

\paragraph{State capacity vs. distinction capacity.}

The matrix $\state \in R^{n \times d}$, which is used to represent state for each layer of \BDHGPU, should theoretically have sufficient capacity to store $O(n)$ `value' vectors in $R^{d}$ if considered as a lookup table for values. We now remark that its actual capability of \emph{distinguishing facts} using the linear attention mechanism is also asymptotically close to $n$.

Attention is a mechanism of associative memory which, given a series of key-value pairs $((k_1,v_1) \ldots, (k_t,v_t)) \in (\Lambda_k \times R^d)^t$, a query $q \in \Lambda_q$ and an affinity function $\phi(\cdot,\cdot): \Lambda_q \times \Lambda_k \to [0,1]$  between the space of queries and keys, returns the attention value:
$
a_t = \sum_{\tau=1}^{t-1} \phi(q, k_\tau) v_\tau
$
(or a normalization thereof). With \BDHGPU, we consider `value' vectors $v \in R^d$, where $d$ is small. The spaces of keys and queries may be assumed to coincide as $\Lambda = \Lambda_k = \Lambda_q$, and we consider in general a single key-query sequence, given by $(k_t)_{t\in \Nat}$:\footnote{This assumption is known to have moderate practical implications for trainability. In this specific discussion, it is `without loss of generality', since one can consider $\Lambda = \Lambda_1 \tensor \Lambda_2 \tensor \ldots \tensor \Lambda_t$, and consider each $k_i$ as chosen from $\Lambda_i$, defining affinity $\phi(k_t, k_\tau) : \Lambda_t \times \Lambda_{\tau} \to [0,1]$ appropriately to handle successive keys and queries (effectively describing a general form of positional embedding).}
\begin{equation}
\label{eq:attn2a}
a_t = \sum_{\tau=1}^{t-1} \phi(k_t, k_\tau) v_\tau
\end{equation}
This key-query space $\Lambda$ may be considered as an abstract space, and represented in any way which is convenient, for as long as the affinity function $\phi(k_t, k_\tau)$ is preserved. For example, when the keys and queries are sampled from a finite (though possibly extremely large) set, there also exists some vector space dimension $\nu$ (possibly extremely large) and a function mapping $f: \Lambda \to S^{\nu}$, where $S^\nu = \{z \in R^{\nu} : \|z\|=1\}$ is the unit sphere, such that the scalar (dot, cosine) product in $S^{\nu}$ satisfies $f(k_t)\cdot f(k_\tau) = \phi(k_t, k_\tau)$. In other words, any affinity function $\phi$ becomes linear when represented in sufficiently high dimension, subject to suitable preparation of its arguments with function $f$. With $\nu$ extremely large, $S^{\nu}$ is a sort of Platonic ideal of a space in which the attention keys and queries live, with no relation to any specific model.

This type of argument, often used in considerations of Support Vector Machines, is linked to two challenges: (1) ensuring that the dimension actually considered by the network (in our case $n$) is high enough compared to the hypothetical dimension $(\nu)$, and (2) ensuring that a suitable preparation function $f$ exists and is easy to learn for the model.\footnote{The Transformer can also be positioned in the same SVM framework: the Transformer's attention represents a form of ``kernel trick'' for one specific affinity function $\phi$, with the kernel used to approximate it being the exponential function (in the case of softmax attention).} We now explain when the dimension $n$ can be considered sufficient, and what types of keys can be prepared by \BDHGPU.

\paragraph{Expressiveness of linear attention in dimension $n$.} The Linear Attention mechanism aggregates key-value correlations over time. In general, the associated rate of accumulation of noise is manageable, up to the approximate scale of between  $t = \Omegatilde(\sqrt n)$ and $t = \Otilde(n)$ key-value `facts' stored in the attention of a given layer. We make the following statement about the Linear Attention mechanism in general.

\begin{claim}[informal statement]
\label{claim:linearinformal}
The mechanism of Linear Attention, applied in dimension $R^n$, can approximately express an attention affinity function for up to $t = \Otilde(n)$ `key-value' pairs in context, with `values' having comparable L2-norm, under moderate assumptions on weak correlation of historical keys and uniformity of the expressed affinity function. Without such assumptions, Linear Attention can compute the correct affinity up to at least $t = \tilde\Omega(\sqrt n)$ `key-value' pairs in context, except for a negligible fraction of possible inputs. Keys and queries need to be suitably prepared beforehand.
\end{claim}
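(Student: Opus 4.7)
The plan is to view Linear Attention as a noisy low-rank estimator of the ideal kernel sum $\hat a_t := \sum_{\tau<t} \phi(q_t, k_\tau) v_\tau$. Writing the linear-attention state as $S_t := \sum_{\tau<t} v_\tau k_\tau^T$ and its readout as $a_t := S_t q_t = \sum_{\tau<t} (k_\tau^T q_t)\, v_\tau$, the residual factors through per-pair preparation errors $\eta_{t,\tau} := k_\tau^T q_t - \phi(q_t, k_\tau)$, so $a_t - \hat a_t = \sum_{\tau} \eta_{t,\tau} v_\tau$. The argument reduces to (i)~preparing keys and queries so that each $|\eta_{t,\tau}|$ is small, and (ii)~bounding the aggregate $\|\sum_\tau \eta_{t,\tau} v_\tau\|$ against the signal strength $\|\hat a_t\|$.

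\textbf{Key preparation via \JL.} For any affinity $\phi : \Lambda \times \Lambda \to [0,1]$ on a finite alphabet, a standard kernel argument yields a ``Platonic'' feature map $f : \Lambda \to S^{\nu}$ with $f(q)\cdot f(k) = \phi(q,k)$, where $\nu$ may be enormous. Composing $f$ with a \JL random projection $\Pi : \R^\nu \to \R^n$ and setting $k_\tau := \Pi f(k_\tau)$, $q_t := \Pi f(q_t)$, a union bound over $t^2$ inner-product \JL bounds yields $\max_\tau |\eta_{t,\tau}| = O(\sqrt{\log t / n})$ except on a negligible fraction of draws of $\Pi$. This is exactly the ``preparation'' step referred to in the statement; in \BDHGPU it is absorbed into the encoder $\encoder$ together with the high-dimensional positive neuron space in which the $x$-vectors live.

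\textbf{Worst-case and improved bounds.} A triangle-inequality bound combined with $\|v_\tau\| = O(1)$ gives $\|a_t - \hat a_t\| \le t \cdot \max_\tau |\eta_{t,\tau}| \cdot O(1) = \Otilde(t/\sqrt n)$, which stays below signal scale for $t = \Omegatilde(\sqrt n)$; this covers the worst-case half of the claim, with the ``negligible fraction of inputs'' accounting for the bad \JL events. The stronger bound exploits cancellation: if the empirical key cross-correlations remain $\Otilde(1/\sqrt n)$ and $\phi(q_t,\cdot)$ is spread rather than concentrated on a few indices, the $\eta_{t,\tau}$ behave like mean-zero weakly-dependent variables of variance $O(\log n / n)$, and a vector Bernstein / Hanson-Wright inequality then gives $\|a_t - \hat a_t\| = \Otilde(\sqrt{t/n})$, below unit-scale signal whenever $t = \Otilde(n)$.

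\textbf{Main obstacle.} The hard part is formalizing the two ``moderate assumptions'' so that they are both mild and non-vacuous. ``Weak correlation of historical keys'' must be stated as a quantitative property of the empirical Gram $\frac{1}{t}\sum_\tau k_\tau k_\tau^T$, controlling its off-diagonal spectrum at scale $\Otilde(1/\sqrt n)$; since trained-model keys are not iid, the condition must be generic enough to cover realistic regimes while still ruling out degenerate ones. ``Uniformity of $\phi$'' must exclude adversarial queries whose affinity structure aligns with the noise, which is subtle because $\phi$ is co-trained with the keys rather than chosen adversarially post-hoc. A fully rigorous version would phrase both as spectral/combinatorial conditions on the key distribution and verify them, at minimum empirically, on trained \BDHGPU attention snapshots (consistently with the sparsity observations of Section~\ref{sec:sparsity}).
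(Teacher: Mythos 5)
Your proposal follows essentially the same route as the paper: a Platonic feature map making $\phi$ linear in an enormous dimension $\nu$, a \JL projection down to $R^n$ giving per-pair inner-product error $O(\sqrt{\log n / n})$, a triangle-inequality bound yielding the $\tilde\Omega(\sqrt n)$ regime, and a cancellation argument yielding the $\Otilde(n)$ regime under weak correlation. The two genuine differences are minor but worth noting. First, where you invoke vector Bernstein / Hanson--Wright on the vector-valued error $\sum_\tau \eta_{t,\tau} v_\tau$, the paper applies Azuma's inequality to the scalar error terms $r_\tau := \braket{k_\tau}{k_t} - \braket{x_\tau}{x_t}$, decomposed into $O(C)$ martingales; your vector-valued formulation is arguably the cleaner way to handle the values $v_\tau$, which the paper treats somewhat tersely. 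Second, and more importantly, the ``main obstacle'' you flag --- how to formalize ``weak correlation of historical keys'' so that it is mild yet non-vacuous --- is exactly what the paper resolves, and it does so differently from your proposed spectral condition on the empirical Gram matrix: it defines a key sequence to be \emph{$C$-non-adversarial} if each prepared key can be treated as sampled independently of all but at most $C$ of its predecessors, and proves a single parametrized bound $t < \delta n / ((C+1)\log n)$ that interpolates continuously between your two separate arguments ($C=0$ recovers $\Otilde(n)$; $C=t-1$ recovers $\Otilde(\sqrt n)$). Your spectral-Gram formulation is a plausible alternative but would require a different (and probably harder) proof that off-diagonal Gram control suffices for the martingale-like cancellation; the paper's combinatorial independence condition sidesteps that. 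Your remark that the preparation map is absorbed into the encoder, and the caveat about $\phi$ being co-trained with the keys, are both sensible additions not present in the paper's proof.
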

\emph{The formal statement and proof is provided in Appendix~\ref{apx:linear}.}\qed

The above claim captures the expressiveness of Linear Attention in dimension $R^n$, subject to some way of preparing keys and queries in $R^n$ by the model in blocks preceding the attention block. A model using Linear Attention has to learn its own way to prepare keys. In fact, different \emph{natural} approaches to key preparation, for example using random projections or hashing on a set of $t$ vectors, lead to the same asymptotic statement of Claim~\ref{claim:linearinformal}. (In the proof in the Appendix, we chose to use a particularly simple one.)

The specific way of preparing keys used (learned) by \BDHGPU for its Linear Attention is particularly interesting. Except for the effect of RoPE rotation, which introduces a negative positional effect in the affinity of keys and queries, \BDHGPU uses activation vectors (keys, queries) with only positive coordinates to represent its keys.

We discuss some aspects of how the positive activation vectors of \BDHGPU relate to Linear Attention.

\paragraph{Preparation of positive keys for Linear Attention.}

Activation vectors of \BDHGPU belong to the positive orthant, and are often sparse. The interpretation of such vectors depends on whether we consider the positive orthant to be a ``valid shape'' for the latent concept space of the considered task (in this case, language and reasoning), or whether the task has to be embedded into such a space. For language, this would be a question of whether a \textsf{word2vec}-like internal representation of the concept space by the model has an inherent advantage over a 
\textsf{bag-of-words}-like representation, especially when expressing concept affinities in attention.

We note that latent representation of key and query vectors in the positive orthant is natural for any problem which is \emph{amenable to attention}. In the discussion of general attention given by \eqeqref{eq:attn2a}, we noted that the affinity function $\phi$ takes values in $[0,1]$, and we considered an embedding $f$ of a set of key vectors $k_1,\ldots,k_t$ into $R^\nu$ such that $f(k_t)\cdot f(k_\tau) = \phi(k_t, k_\tau) \geq 0$. Given this condition on non-negativity of dot product on all pairs among the $t$ vectors considered, we could have, without loss of generality, used an appropriately rotated embedding $f$ so that $f(k_\tau) \in (R^+)^\nu$, thus \emph{directly reducing the problem of general attention to a problem of linear attention in the non-negative orthant}. The question which remains is a subtle one: whether this type of embedding of the latent space of language and reasoning in $(R^+)^\nu$ is `natural', i.e., preserved over long periods of time of inference and training, notably longer than the short window $t$ of context used for Transformer-like attention.

In the rest of the paper, we are generally inclined to assume that representations in $(R^+)^\nu$ of concepts, combinations of concepts, and density distributions over such combinations of concepts, are universal to language and reasoning.

We limit ourselves to a very brief discussion of a way to represent attention keys with positive vectors for problems for which such a concept representation is not natural.

\subparagraph{Using LSH to move key vectors into the positive orthant.}

Locality Sensitive Hashing (LSH) is one technique for converting arbitrary vectors in a lower-dimensional space $R^a$, for some fixed $a \in \Nat$, into vectors in $(R^+)^n$, in a way which can be used to describe certain `sharp-boundary' affinity functions $\phi$ in $R^a$. Consider an $n \times a$ matrix represented as $n$ fixed random vectors $\lambda_1, \ldots, \lambda_n \in R^a$, and a corresponding sequence of $n$ appropriately chosen gating functions $\gamma_1, \ldots, \gamma_n : R \to R^+$. For a vector $v \in R^a$, we define:
\begin{equation}\label{eq:lshb}
b(v) := \gamma([\lambda_1\ldots\lambda_n] v) = (\gamma_i(\braket{v}{\lambda_i}))_{1\leq i\leq n}.
\end{equation}
Each $i$-th element of vector $b$ thus corresponds to the outcome of the $i$-th bucket of LSH.

The bucketing function $b$ may now be used to prepare queries and keys as attention inputs. If $\gamma_i$ is a $\{0,1\}$-valued threshold function, then, for $q, k_i \in R^a$, $\braket{b(q)}{b(k_i)}$ is an attention affinity function between $q$ and $k_i$, equal to the number of LSH buckets shared between $q$ and $k_i$.

\begin{observation}
The LSH vector affinity function $b$, given by \Eqeqref{eq:lshb}, using $n$ buckets on vectors in $R^{a}$ for some $a\in \Nat$, can be expressed through Linear Attention with attention keys in the positive orthant $(R^+)^n$.\qed
\end{observation}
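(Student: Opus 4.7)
The observation is essentially a definitional unfolding, so the plan is to make the correspondence explicit. The starting point is that $b$ takes values in $(R^+)^n$ by construction: each coordinate $\gamma_i(\braket{v}{\lambda_i})$ lies in $R^+$ because $\gamma_i \colon R \to R^+$. Thus, using $b$ to prepare the attention inputs already places the attention keys and queries in the positive orthant, which is the only structural constraint of the claim.

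First I would substitute the prepared keys $K_\tau := b(k_\tau)$ and the prepared query $Q := b(q)$ into the Linear Attention formula~\eqref{eq:attn2a}, yielding
\begin{equation*}
a_t \;=\; \sum_{\tau < t} \braket{Q}{K_\tau}\, v_\tau \;=\; \sum_{\tau < t} \braket{b(q)}{b(k_\tau)}\, v_\tau,
\end{equation*}
which matches the text's definition of LSH affinity (``the number of LSH buckets shared between $q$ and $k_i$'' in the threshold case, and the generalization thereof for non-binary $\gamma_i$). This is exactly the general-attention weighted sum~\eqref{eq:attn2a} with $\phi(q, k_\tau) = \braket{b(q)}{b(k_\tau)}$, so the identification is exact rather than approximate, and it is performed with attention keys in $(R^+)^n$ as required.

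Second I would argue briefly that $b$ itself is a legitimate preparation block compatible with the surrounding model. The linear stage $v \mapsto \Lambda v$, with $\Lambda \in R^{n \times a}$ whose rows are $\lambda_1, \ldots, \lambda_n$, is a single linear map of the same shape as a decoder in the \BDHGPU architecture, so it fits the model's parameter budget whenever $a = O(d)$. The coordinate-wise gate $\gamma_i$ is realized with a ReLU and a per-coordinate bias, $\gamma_i(x) = \relu{x - \theta_i}$ in the threshold-LSH case, and by a short piecewise-linear sum of ReLU ridges for a smoother positive $\gamma_i$; this is precisely the ReLU-lowrank primitive discussed in Section~\ref{sec:macrocomparison-of-feedforward}.

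The main obstacle, such as it is, is purely bookkeeping: one has to match the chosen $\gamma_i$ to the available nonlinearity and ensure that $\Lambda$ is realizable within the model's low-rank structure. Neither point affects the expressibility claim itself, which follows from the two equalities above, and the positivity of every coordinate of $Q$ and $K_\tau$ guarantees that no cancellation obscures the identification of $\braket{Q}{K_\tau}$ with the LSH affinity.
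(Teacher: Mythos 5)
Your proposal is correct and matches the paper's own (implicit) argument: the Observation is stated with an immediate \qed precisely because it follows definitionally from the preceding sentence, which notes that $\braket{b(q)}{b(k_i)}$ equals the LSH bucket-overlap affinity and that $b$ maps into $(R^+)^n$ since each $\gamma_i$ is $R^+$-valued. Your extra paragraph on realizing $\gamma_i$ via ReLU-plus-bias is consistent with the paper's remark that such a $b$ is ``plausibly easy to learn'' in the ReLU-based setup, but it is not needed for the expressibility claim itself.
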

In the ReLU-based setup considered in \BDHGPU, an appropriate function $b$ is plausibly easy to learn. LSH is a `sharp-boundary' technique, well-suited for finding $k$-nearest-neighbors of a queried vector in a set of keys. Hence, the class of attention affinity functions, naturally expressible using \BDHGPU, also includes such `sharp' functions.

\subparagraph{Attention in the positive concept space of language and reasoning.}

\BDHGPU uses the positive orthant $(R^+)^{n}$ as its latent space for representing combinations of concepts in its activation vectors. Attention keys and queries are prepared entirely in this positive orthant.

When representing a task of reasoning or language inference in a high-dimensional space, positive activation vectors  in $(R^+)^n$ have a natural interpretation of convex combinations of concepts. Such convex combinations of concepts may represent both semantically connected concepts (``bags-of-concepts''), and mixed states of uncertainty between unconnected concepts. In this interpretation, a positive vector is considered as a state of certain knowledge when its L1-norm and L2-norm align closely. Note that for a (normalized) probability vector, the only vectors for which L1-norm and L2-norm coincide precisely are distributions concentrated on a single coordinate.

Linear Attention of \BDHGPU is capable of amplifying very small differences between keys in the L1-norm when matching queries to keys. Consider, for instance, two probability distribution vectors $x_1, x_2 \in (R^+)^n$, where $x_1 = (\alpha, \frac{1-\alpha}{n-1}, \frac{1-\alpha}{n-1}, \ldots, \frac{1-\alpha}{n-1})$ and $x_2 = (\frac{1-\alpha}{n-1}, \alpha, \frac{1-\alpha}{n-1}, \ldots, \frac{1-\alpha}{n-1})$, for some $0<\alpha<1$. Now, vectors $x_1$ and $x_2$ almost coincide when treated as probability distributions, $\|x_1 - x_2\|_1 = O(\alpha) = \|x_1 - x_2\|_{\mathrm{TVD}}$. However, they are extremely different when considered as keys for the Linear Attention mechanism, with $x_1$ showing very weak affinity to $x_2$: $\bra{x_1}\ket{x_2} = O(\alpha^{-2} n^{-1}) \bra{x_1}\ket{x_1}$.
\begin{observation}\label{obs:prob}
In key-query matching, the Linear Attention mechanism of \BDHGPU is able to separate positive keys which are close in the L1-norm, strongly amplifying L1-norm differences of activation vectors.\qed
\end{observation}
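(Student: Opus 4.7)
The statement asserts a concrete amplification phenomenon, and the cleanest approach is to exhibit it explicitly on the witness family given in the paragraph immediately preceding the observation. The plan is therefore to take $x_1, x_2 \in (R^+)^n$ of the stated form (point-mass $\alpha$ placed on coordinate $1$ for $x_1$ and on coordinate $2$ for $x_2$, with the remaining mass $1-\alpha$ spread uniformly over the other $n-1$ coordinates), verify that both are probability vectors, and then compare two quantities directly: the $L_1$ distance $\|x_1-x_2\|_1$, which controls the sense in which the keys are ``close'', and the ratio $\braket{x_1}{x_2}/\braket{x_1}{x_1}$, which governs the linear-attention affinity that $x_2$ (as a key) produces against $x_1$ (as a query) relative to the self-affinity.

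First I would compute $\|x_1-x_2\|_1$ by noting that only two coordinates differ and each differs by $|\alpha - (1-\alpha)/(n-1)|$, yielding $\|x_1-x_2\|_1 = 2\alpha - 2(1-\alpha)/(n-1) = O(\alpha)$ for $\alpha = \omega(1/n)$. Next, a one-line expansion gives $\braket{x_1}{x_2} = 2\alpha(1-\alpha)/(n-1) + (n-2)(1-\alpha)^2/(n-1)^2 = O(1/n)$, and $\braket{x_1}{x_1} = \alpha^2 + (1-\alpha)^2/(n-1) = \Theta(\alpha^2)$ in the regime $\alpha = \omega(1/\sqrt{n})$. Dividing yields $\braket{x_1}{x_2} = O(\alpha^{-2} n^{-1}) \braket{x_1}{x_1}$, matching the inline claim in the paragraph.

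To conclude the observation, I would contrast the two scalings: the relative $L_1$ separation of $x_1$ and $x_2$ is of order $\alpha$, so for any fixed small $\alpha$ the two keys are arbitrarily close as $n$ grows, yet the linear-attention affinity between them collapses by a factor $\Theta(n)$ relative to the self-affinity. In particular, $x_2$ contributes $\Theta(\alpha^2 n)$ times less to the attention output at query $x_1$ than $x_1$ itself would, so linear attention treats the two nearly-identical distributions as essentially orthogonal. This verifies the qualitative statement that linear attention in dimension $n$ \emph{amplifies} $L_1$-norm differences between positive keys.

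There is no serious obstacle here beyond bookkeeping: the only point warranting care is the parameter regime $1/\sqrt{n} \ll \alpha \ll 1$, outside of which either the background uniform mass dominates $\braket{x_1}{x_1}$ (spoiling the ratio) or $x_1$ and $x_2$ are no longer close in $L_1$. I would therefore state the witness family with this regime explicit, and otherwise the proof is a direct computation.
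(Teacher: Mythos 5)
Your proposal is correct and follows essentially the same route as the paper: the Observation is justified there by exactly the witness pair $x_1, x_2$ you analyze, and your computations of $\|x_1-x_2\|_1 = O(\alpha)$ and $\braket{x_1}{x_2} = O(\alpha^{-2}n^{-1})\braket{x_1}{x_1}$ match the paper's inline claims. Your only addition is making the regime $1/\sqrt{n} \ll \alpha \ll 1$ explicit, which the paper leaves implicit but which is indeed the range where the example works as intended.
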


This mechanism can be treated as complementary to the propagation dynamics of positive activations in the feed-forward network, discussed in Section~\ref{sec:bdh_feedforwardpropagation}.

\paragraph{Natural support for long context.} There is no bound on context length in \BDHGPU, so the actual $t = \Omegatilde(\sqrt n)$ to $t=\Otilde(n)$ ``equally important facts'' that a \BDHGPU model can distinguish in each layer in view of Claim~\ref{claim:linearinformal} do not have to correspond to the latest $t$ ``facts'' seen in context. For example, if, for some layer $l$, mechanisms from lower layers deem a given entry to be irrelevant for layer $l$, and provide an extremely weak attention `value' for this layer, and this key-value entry is effectively seen as omitted. This mechanism corresponds to weaker signals $\xysparse$ in a layer which needs to take no action on a given input, e.g., does not have to remember it (cf.~Fig.~\ref{fig:sparsity}). Indeed, empirically we observe progressive de-noising of state in the higher layers, with only small fractions of input tokens requiring significant key-value state update in the middle layers across the entire spectrum of state $\state$ of neurons.

As a result, the middle and higher layers of \BDHGPU may, in principle, have unbounded look-back on context. Nonetheless, as context length $t$ increases, we find that damping of historical signals over long sequences is necessary in \BDHGPU to avoid overwhelming the model with noise from stale context. For the vanilla version of the architecture, we found that RoPE combined with ALiBi provide a sufficient remedy, and model performance improves as context length increases. More advanced techniques for \BDHGPU, related to selective forgetting, state compression, or other forms of state optimization, can also be added to the architecture.

\subsection{Micro-interpretation of attention in \BDHGPU}\label{sec:micro_attention_bdh_gpu}\label{sec:sparsity}

BDH maintains its state in the $n\times n$ matrix $\corr$ that has a clear interpretation as synapse weights that connect neurons (cf.~Section~\ref{sec:bdhgraph}). On the other hand, \BDHGPU's state $\state$ is a $n \times d$ matrix. To perform the analysis for \BDHGPU, in this section we recover $\corr$ from the relation:
\begin{equation}
\label{eq:corr}
\corr_{t-1,l} =
    \sum_{\tau < t}
        \ket{\xysparse_{\tau,l-1}}
        \bra{\xsparse_{\tau,l}}
        \rope^{t-\tau}
\end{equation}

\begin{figure}[ht]
    \centering
    \includegraphics[width=.75\linewidth]{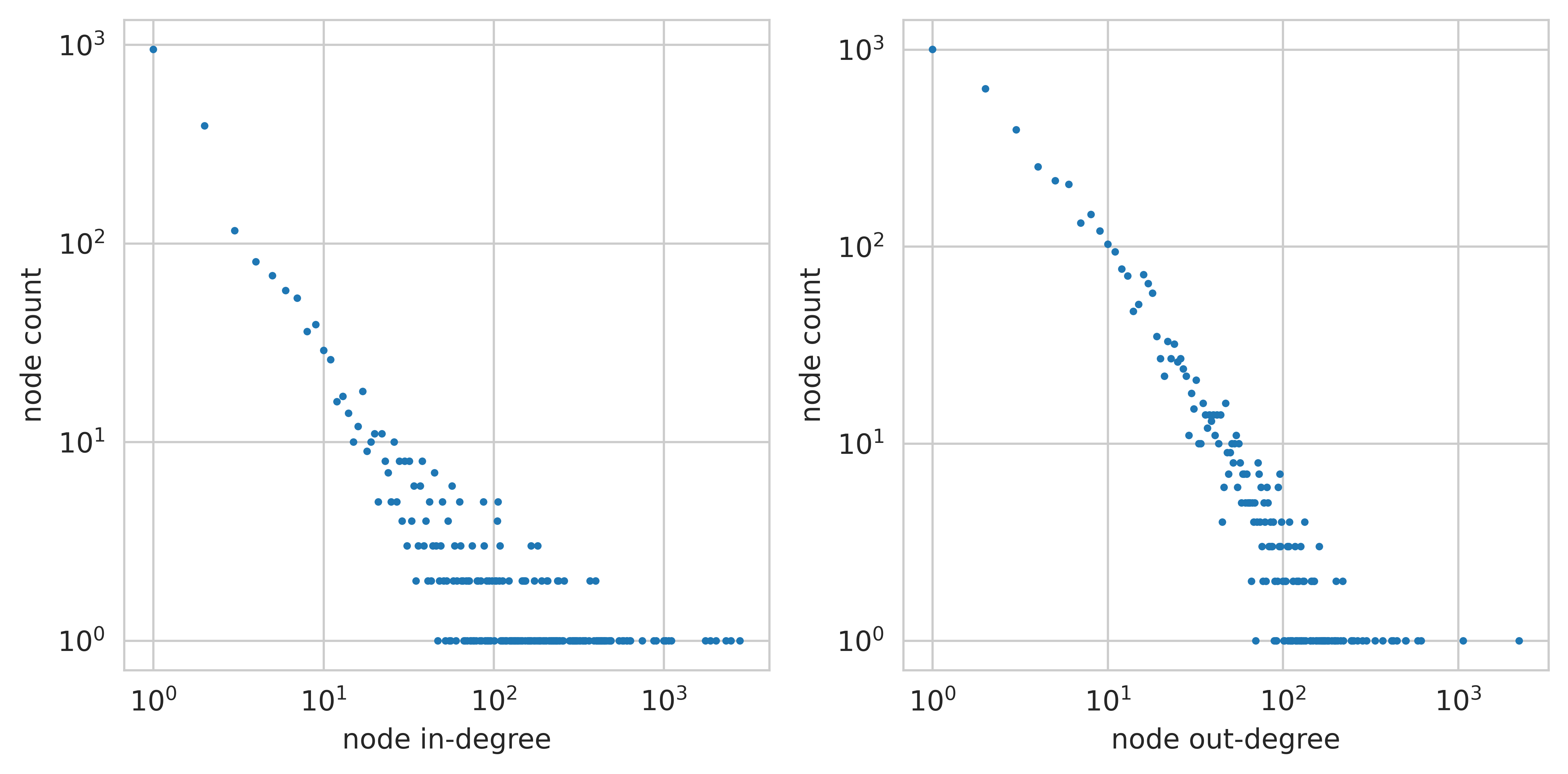}
    \caption{BDH's state $\corr$ encodes neuron connections as a scale-free graph showing clear heavy-tailed (power-law-like) degree distribution.}
    \label{fig:sigma_degrees_powerlaw}\label{fig:powerlaw1}
\end{figure}

We first analyze the neuron relationship graph encoded by matrix $\corr$. As explained in Section~\ref{sec:equations_of_reasoning}, $\corr$ can be interpreted as a graph of context dependent implications between $\xsparse$ and $\xysparse$. We compute the $\corr$ matrix for 0-th head at layer 5 of an 8-th layer network trained on Europarl translation corpus~\cite{koehn-2005-europarl} (we provide more details in Appendix \ref{sec:bdh_monosynapse_details}). We filter out negative entries which are introduced by the RoPE positional embeddings~\cite{su2023roformerenhancedtransformerrotary} and enforce a small positive threshold on remaining values to further sparsify the network structure. We plot the histograms of neuron in- and out-degrees, unraveling a scale-free network structure.

Encouraged by the emergent network structure, we have identified a few synapses that are activated at recognizable concepts, we show examples in the next section.

\subsection{Empirical findings: monosemantic synapses}\label{sec:monosynapse}

We have identified in the $\corr$ matrix entries (synapses) that show activity whenever a currency name or country name, both frequently occurring in the Euro-parliament transcripts, is present in the processed sentence. We have identified the synapses by searching for entries in $\corr$ that have predictive power at separating sentences containing a concept from contrast sentences. We present a few examples in Figure~\ref{fig:money_synapse}. We note that the synapses strength changes abruptly after words that are related to each concept. The same synapse is activated for concepts in both French and English sentences, even when the words used are different (e.g. ``livre sterling'' vs ``British Pound''). Synapse selectivity to a semantic context stems directly from sparsity of neuron activations as shown in~\figref{fig:x_y_synapse}.
\begin{figure}
\centering
    \includegraphics[width=\linewidth]{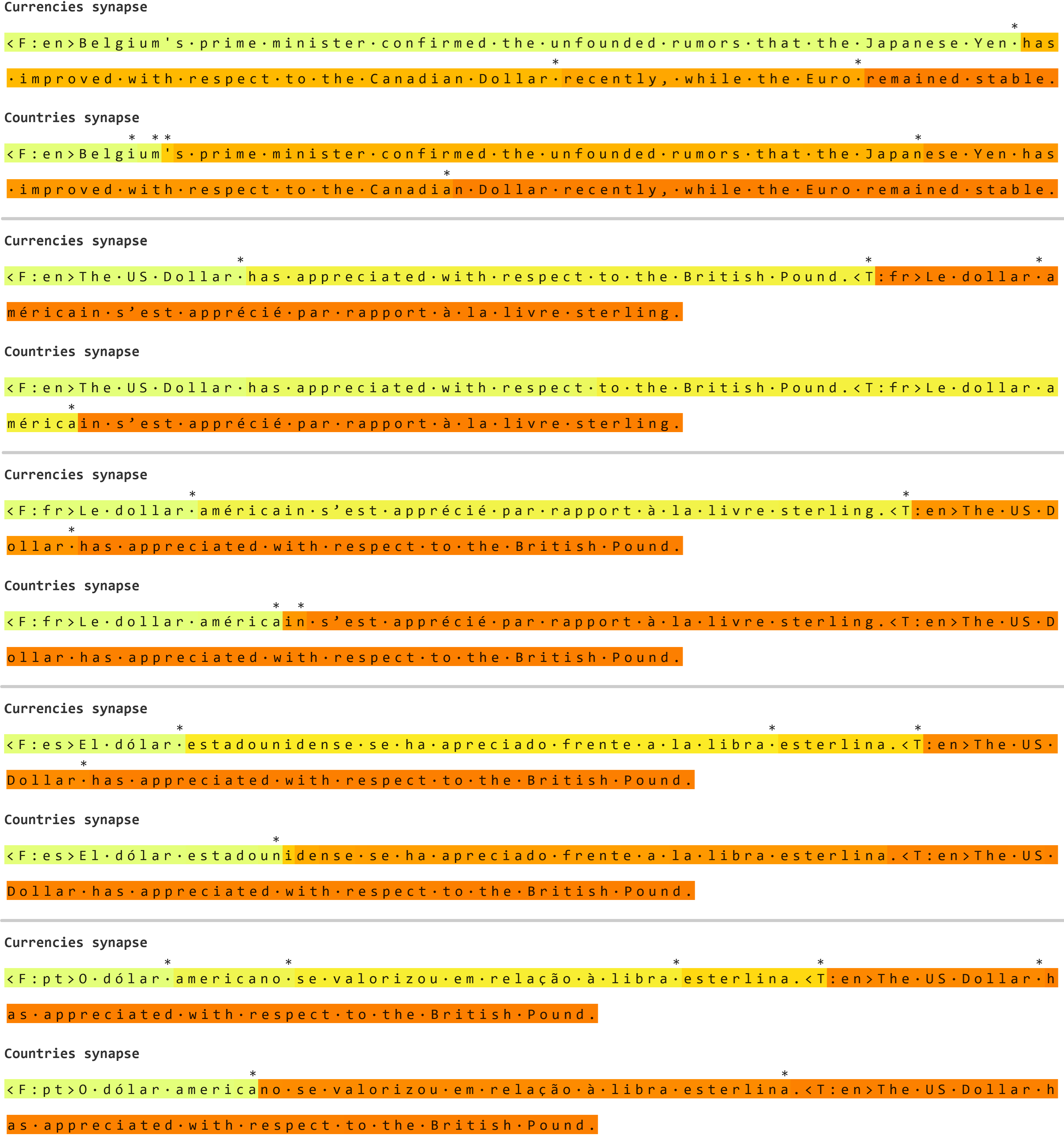}
    \caption{Evolution of values set by \BDHGPU on 2 specific synapses which we have named (following their interpretation) as ``currency synapse'' and ``country synapse'', relating to concepts naturally present in European Parliament transcripts on which the model was trained. We can notice that mentions of country or currency names result in an increase of the respective synapse value, indicating a stronger presence of the concept in the context. Moreover, the synapses consistently became activated in both French and English, confirming the (notice how it reacts both to ``British Pound'' and ``livre sterling'').\\
    For visual clarity, we indicate changes that clear a small threshold with the $*$ character (the changes in activity when the system is processing the translation of a source sentence tend to be small).}
    \label{fig:money_synapse}
\end{figure}
\begin{figure}
\centering
\includegraphics[width=0.85\textwidth]{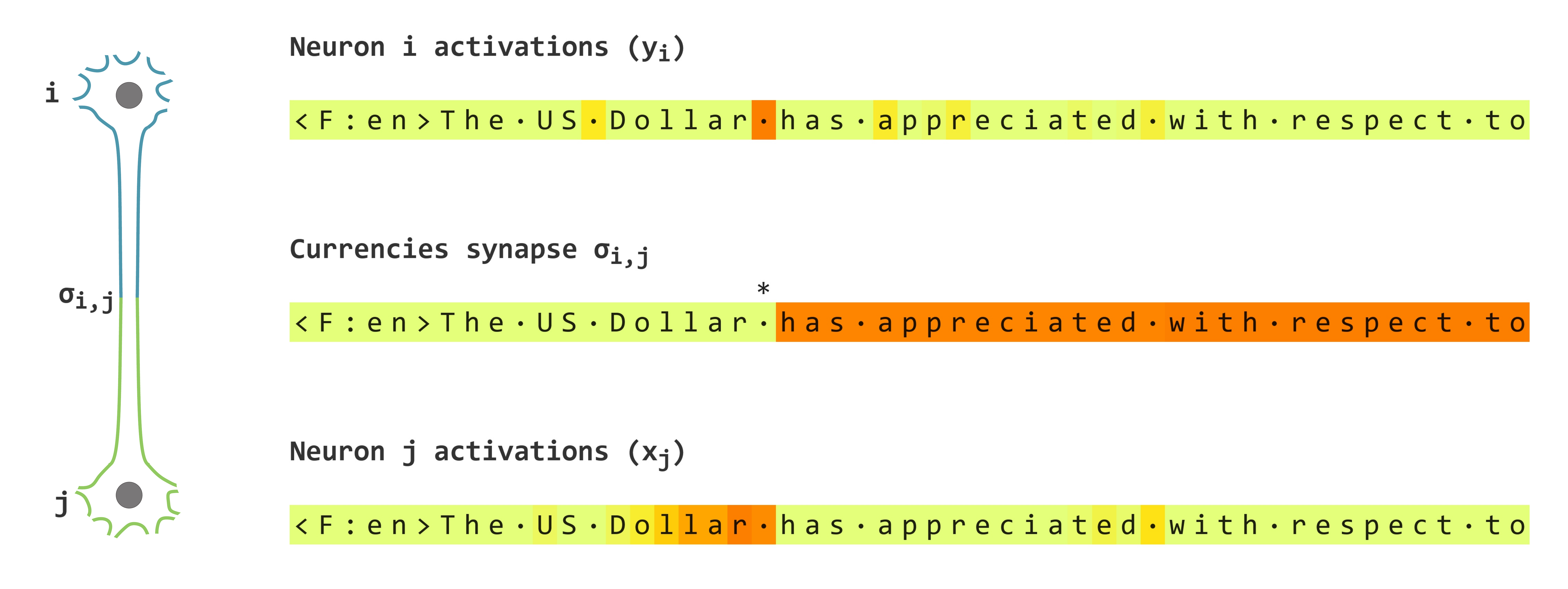}
\caption{Sparse updates to synapses related to meaningful concepts stem from sparse neuronal activations. BDH-GPU maintains in its recurrent state a ``currency synapse'' (a concept naturally present in the Europarl corpus, see also~\figref{fig:money_synapse}). The synapse is updated using a Hebbian learning rule when activity in $\xysparse$ activations at a preceding layer (4 in the example) leads to firing of neuron $\xsparse$ in the next layer (5).}
\label{fig:x_y_synapse}
\end{figure}

To confirm the selectivity of the synapses, we have generated, using ChatGPT, 50 sentences relating to European currencies, and another set of 50 sentences speaking about European politics, but not mentioning currencies. A one-sided Mann–Whitney U test revealed that sentences relating to currencies received significantly higher ``Currency synapse'' values than those without the currency concept ($U = 2368$ with $U_{\textrm{opt}}=2500$, $p < 10^{-14}$). The rank-biserial correlation was $0.86$, further confirming association between Currency concept presence and synapse value.

\subsection{Empirical findings: sparse neuron activations}

Sparsity of signals is often a prerequisite to their interpretability. In section \ref{sec:monosynapse} we have shown that BDH has monosemantic synapses, selectively activated by occurrences of specific concepts. In this section, we experimentally show that neuron activity correlates with signal predictability: fewer neurons are active, or equivalently, layer activations become sparser, for more predictable input signals.

We have trained a \BDHGPU model with $n=65536$ neurons, $d=256$, $L=4$ layers, and tokenization on letters of the Latin alphabet, to perform a single synthetic next-token prediction task. The input sequence started with a fixed $13$-letter warm-up sequence, followed by $8$ repetitions of an $8$-letter random word (``fact''), with the same pattern repeating every $13 + 8 \cdot 8 = 77$ letters. In~\figref{fig:sparsity}, we show neuron activity patterns. We can notice that neurons in higher layers are active during warm-up and fact introduction, then become quiet. We then group neurons by their RoPE frequencies and find that largest difference of activity during memorization and repetition is shown by the slow-acting neuron population.

\begin{figure}
\centering
(a)
\includegraphics[width=0.45\textwidth]{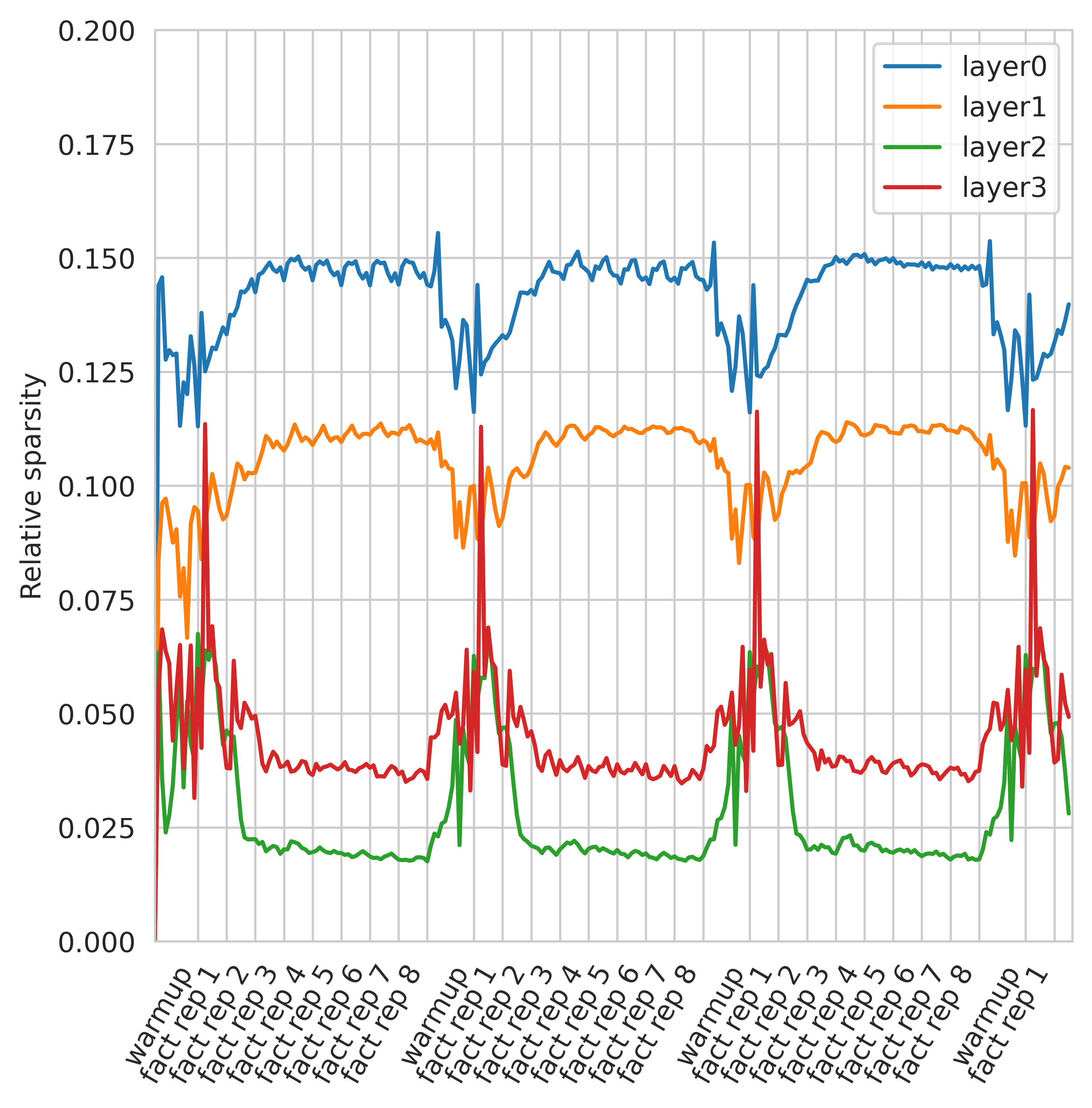}
(b)
\includegraphics[width=0.45\textwidth]{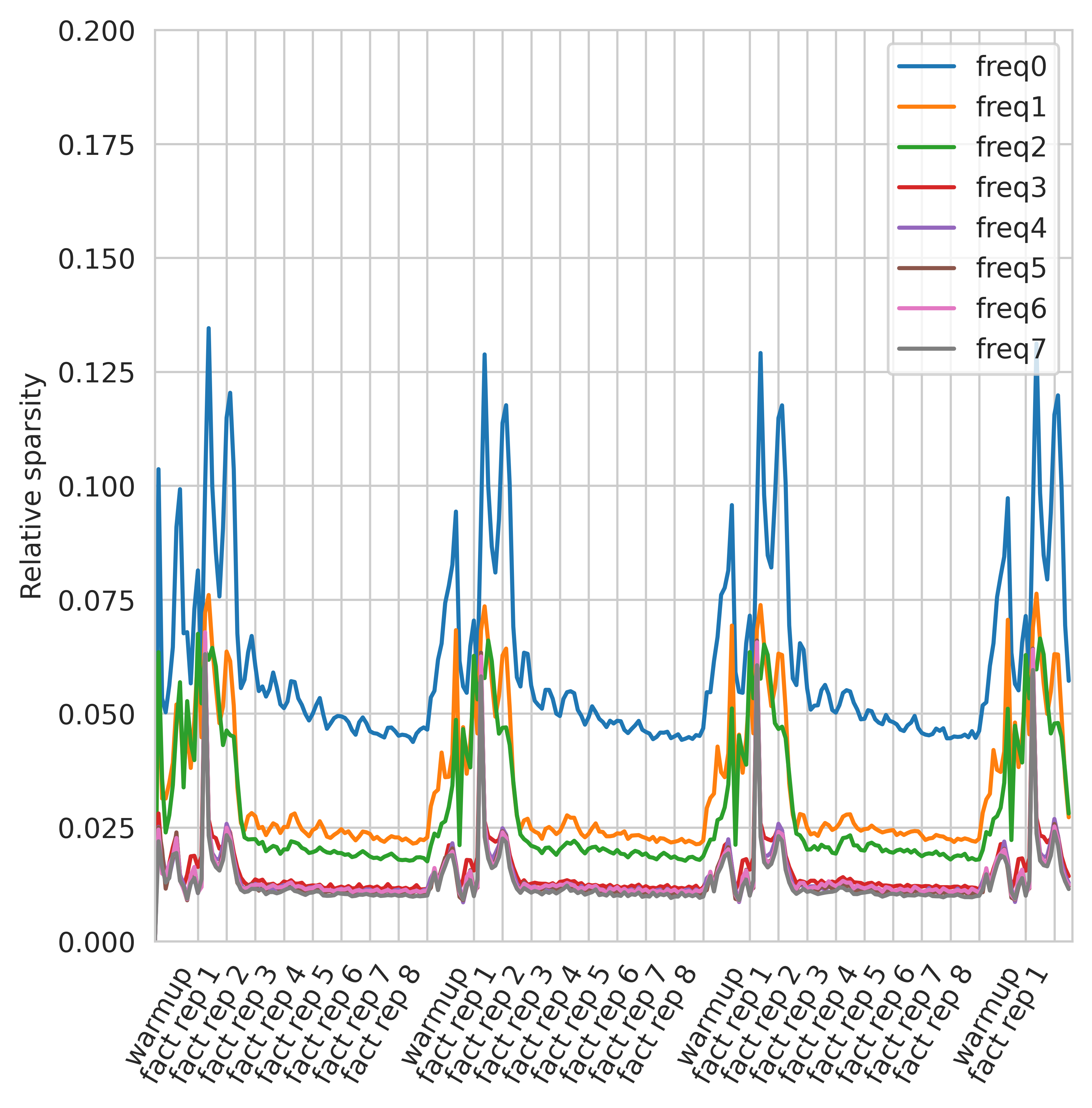}
\caption{Neurons in \BDHGPU are less active (signal is sparser) when the input is predictable. The input sequence started with a fixed $13$-letter warm-up sequence, followed by $8$ repetitions of an $8$-letter random word (``fact''), with the same pattern repeating every $13 + 8 \cdot 8 = 77$ letters. (a) Fraction of neurons with non-zero entry $\xysparse_{t,l}$ in different layers $l$, with fact memorization effect noted through increased activation level in layer $2$. The activation in layer $2$ has $4.0\%-7.5\%$ non-zero entries during memorization and approximately $2.5\%$ non-zero entries during repetition. (b) Detailed breakup of activation sparsity in layer $2$, with neurons bucketed into equal fractions by their RoPE phase: $\textrm{freq}0\in[1,4]$, $\textrm{freq}1\in[4,16]$, $\textrm{freq}2\in[16,64]$, $\ldots$, $\textrm{freq}7\in[16384,65536]$. The slow-acting half of the neuron population ($\textrm{freq}4-\textrm{freq}7$) exhibits the largest amplitude ratio between peak activation during memorization and repetition phases.}
\label{fig:sparsity}
\end{figure}

From a biological standpoint, sparse and surprisal-driven neuron activation lowers energy consumption --- despite fluctuations in low level percepts (in the experiment tokens are changing at every timestep), neurons in higher layers are inactive and do not expand energy. From a Deep Learning perspective, it has been recently shown that input complexity is related to predictability of internal representations of Transformers~\cite{herrmann2025measuringincontextcomputationcomplexity}. BDH makes this very explicit and does not require a separate prediction network: the predictable steady-state consists of zero activations, and input complexity entails neuronal activity. This suggests that BDH, natively, at a neuron level, implements mechanisms reminiscent of adaptive computation time~\cite{graves2017adaptivecomputationtimerecurrent} and conditional computation~\cite{cho2014exponentiallyincreasingcapacitytocomputationratio,shazeer2017outrageouslylargeneuralnetworks}, used in modern Transformers to lower computational effort during inference. 

Finally, sparse activation vectors in BDH imply that potentiation of specific synapses occurs rarely during inference. This is useful from the point of view of interpretability, noise reduction in Linear Attention state, and opens the door to simplified and compressed representations, notably for state and for gradient backpropagation DAG's.

\section{Playing with the Hatchling}\label{sec:experiments}

\subsection{Model merging: concatenating two models}\label{sec:experiments_merging}
Updating models with up-to-date knowledge and expanding models knowledge-base will become crucial in practical applications of AI. On possible solution is model composability, potentially allowing building of larger models by assembling a number of smaller, specialized models into a larger, more powerful one. A natural hope for such a system would be the achievement of ''more is different than a sum of its part'' effect. In the following experiment we are showing that doing so is relatively straight forward with \BDHGPU. This is because \BDHGPU can be scaled by varying only the number of neurons $n$. In this section we explore whether we can create larger models directly concatenating smaller models trained on disjoint subsets of data. Details in Appendix~\ref{sec:bdh_merge_details}. We have experimented with the following simple model merging procedure:
\begin{enumerate}
\item Train a base model on a chosen language pair. In the experiment we have used English-Spanish (En-Es) translation data, and have trained a model with $n=24576$ neurons (19M parameters).
\item Clone the base model and continue training on two datasets: English-French (En-Fr) and English-Portuguese (En-Pt).
\item We then merge the weights of the En-Fr and En-Pt models to create a new En-FrPt model with $n=24576\cdot 2=49152$ neurons (38M parameters).
    To create the merged model we:
        \begin{enumerate}
            \item concatenate all parameter tensors that have an `$n$' dimension (e.g. $\decodery$, $\decoderx$, $\encoder$, RoPE frequency buffers) along their $n$ dimension,
            \item average all other parameters (e.g. token embeddings and token prediction weights).
        \end{enumerate}
    To validate the hypothesis that direct model merging is feasible, we report all results on the merged model without any subsequent training or finetuning. However, we have verified that the merged model quickly improves when trained on all language pairs. 
\item After each stage we evaluate the models on all involved language pairs: En-Es, En-Fr, En-Pt, regardless of the data seen by the model up to this stage.
\end{enumerate}

\begin{table}
    \centering
\begin{tabular}{llll|lll}
\toprule
 & \multicolumn{3}{c}{\emph{Translation into English}} & \multicolumn{3}{c}{\emph{Translation from English}} \\
\emph{Model:}&  Es$\rightarrow$En & Fr$\rightarrow$En & Pt$\rightarrow$En & En$\rightarrow$Es & En$\rightarrow$Fr & En$\rightarrow$Pt \\
\midrule
1: Base En-Es & \underline{0.36} & 0.77 & 0.64 & \underline{0.35} & 2.21 & 2.27 \\
2: Base (1) tuned on En-Fr & \underline{0.58} & \underline{0.36} & 0.68 & 2.57 & \underline{0.31} & 2.54 \\
3: Base (1) tuned on En-Pt & \underline{0.44} & 0.76 & \underline{0.34} & 1.79 & 2.20 & \underline{0.33} \\
4: Merged (2$\|$3)%
& \underline{0.43} & \underline{0.40} & \underline{0.39} & 1.45 & 0.77 & 0.86 \\
\bottomrule
\end{tabular}
\\[5mm]
    \caption{Validation next token prediction losses (lower is better) of translation models trained on different language pairs and then merged. We evaluate each model on En-Es, En-Fr, En-Pt language pairs separately. We can see that the base model can translate between English and Spanish, while on En-Fr and En-Pt tasks it falls back on perplexities of an unconditional English language model (loss about $0.65$) and can't generate proper French or Portuguese. After tuning on French or Portuguese the model learns to translate between respectively English and French or English and Portuguese, while somewhat retaining the capacity to translate Spanish to English and losing the capability to translate English to Spanish. The merged model can translate Spanish, French, and Portuguese to English, however it mixes these three languages when asked to translate from English. This is consistent with qualitative results shown in Figure~\ref{fig:merged_translations}.}
    \label{tab:model_merging_esfrpt}
\end{table}

We report quantitative results in Table~\ref{tab:model_merging_esfrpt} and show qualitative results of merged model operation in Figure~\ref{fig:merged_translations}. The merged model shows human-like degradation of operation: while it retained the capability to generate and translate into English, it has lost the ability to generate proper text in Spanish, French, or Portuguese, mixing words and grammatical constructs. We have verified that a small amount of training on all language pairs restore the model's proficiency in Spanish, French and Portuguese. However, we decided to report on the behavior of the merged model without any subsequent tuning to highlight the possibilities of model engineering offered by the large and sparse working dimension $n$ of \BDHGPU.

\begin{figure}
\newcommand{\prompt}[1]{\textcolor{brown}{#1}}
\newcommand{\gen}[1]{\textcolor{blue}{#1}}
\newcommand{\separator}{\phantom\ }
{\scriptsize
\begin{Verbatim}[commandchars=\\\{\}]
\prompt{<F:es>Esta es una afirmación clara.<T:en>} | \gen{In this clarification, it is a clear statement.}
\prompt{<F:es>Esta es una afirmación clara.<T:en>} | \gen{It is a clear statement.}
\prompt{<F:es>Esta es una afirmación clara.<T:en>} | \gen{That is a clear affirmation.}
\separator
\prompt{<F:fr>C’est une déclaration claire.<T:en>} | \gen{This is a clear statement.}
\prompt{<F:fr>C’est une déclaration claire.<T:en>} | \gen{This is a clear declaration.}
\prompt{<F:fr>C’est une déclaration claire.<T:en>} | \gen{It is a clear declaration.}
\separator
\prompt{<F:pt>Esta é uma afirmação clara.<T:en>} | \gen{That is a clear statement.}
\prompt{<F:pt>Esta é uma afirmação clara.<T:en>} | \gen{This is a clear statement.}
\prompt{<F:pt>Esta é uma afirmação clara.<T:en>} | \gen{This is a clear assertion.}
\end{Verbatim}
    (a) Sampled translations from Spanish, French, and Portuguese to English. The vertical bar $|$ delimits the prompt from model output. Each translation was sampled three times to show the model's consistency.\\
\begin{Verbatim}[commandchars=\\\{\}]
\prompt{<F:en>This is a clear statement.<T:es>} | \gen{Ce récent statement está clarificative.}
\prompt{<F:en>This is a clear statement.<T:es>} | \gen{Il se revela exact.}
\prompt{<F:en>This is a clear statement.<T:es>} | \gen{Constato que cette déclaration était monstruosa.}
\separator
\prompt{<F:en>This is a clear statement.<T:fr>} | \gen{Está de noto uma déclaration.}
\prompt{<F:en>This is a clear statement.<T:fr>} | \gen{Ce sont une declaration clare.}
\prompt{<F:en>This is a clear statement.<T:fr>} | \gen{C'está uma declaração clare.}
\separator
\prompt{<F:en>This is a clear statement.<T:pt>} | \gen{Esta declaração étangling état está clara.}
\prompt{<F:en>This is a clear statement.<T:pt>} | \gen{Istambigna de contence.}
\prompt{<F:en>This is a clear statement.<T:pt>} | \gen{C'estaté clarification é clara!.}
\end{Verbatim}
    (b) Sampled translations from English into Spanish, French, and Portuguese. The model mixes the three languages, though the meaning of the source sentence seems to have been preserved.\\
\begin{Verbatim}[commandchars=\\\{\}]
\prompt{<F:en>} | \gen{The European Convention on Human Rights has been set up in 1992, when it applied the Convention in 1}
\prompt{<F:es>} | \gen{Naturalment, nos deputés de toda autre groupe southern Italians, notariously engaged in the discuss}
\prompt{<F:fr>} | \gen{(ES) Mr President, aproveit de montrar ma satisfaction por a surprise de la parte de Milan, et parti}
\prompt{<F:pt>} | \gen{(FI) Mr President, todos outreaches, mesures on ways and means of promoting the economic development}
\end{Verbatim}
    (c) Language-conditional samples. Consistently with the translation experiment in (a) and (b) above, the model properly generates English, and mixes Spanish, French, and Portuguese, sometimes slipping into English.\\
\begin{Verbatim}[commandchars=\\\{\}]
\prompt{<F:es>Mi lingua project nominat Romanid esed publicat ja in may de pasat an.<T:en>} | 
    \gen{A Lingue nominated Project is Romanian published in Romanian pasta year.}
\prompt{<F:fr>Mi lingua project nominat Romanid esed publicat ja in may de pasat an.<T:en>} |
    \gen{My language in Project Nomina is Romani esedi publish posted peas in maybe the same year.}
\prompt{<F:pt>Mi lingua project nominat Romanid esed publicat ja in may de pasat an.<T:en>} |
    \gen{I have been a Latva project nomination Romanim esede publica published in May of the postal service and the service provider.}
\end{Verbatim}
    d) Attempts of the model to translate a sentence in Romanid~(\citeyear{enwiki:romanid}), a zonal auxiliary language for speakers of Romance languages naturalistic constructed language intended to be intuitively understandable to speakers of Romance languages. The English translation of the prompt is: ``My language project called Romanid was published already in May of last year''. The model is able to pick up some of the meaning of the prompt.

}
    \caption{Conditional and unconditional samples generated from a English-Spanish-French-Portuguese translation model created by direct concatenation of parameters of models trained on distinct language pairs.}
    \label{fig:merged_translations}
\end{figure}

The \BDHGPU model merging experiment has shown that when the model latent space promotes concept disentangling (c.f. Section~\ref{sec:micro_attention_bdh_gpu} on monosemanticity) then it is feasible to directly compose concepts in this space, e.g. by concatenation of weights from different models. This feature of the BDH architecture allows us to see the models as composable computer programs with emergent properties.

\subsection{Training without backpropagation through time}

Sparsity of synapse activations in BDH opens the door to efficient approximations to backpropagation through time. The main intuition is that we only need to remember when a synapse has changed, and the $i,j$ coordinates of the synapse implicitly encode which neurons were active and should take part in error signal backpropagation.

In this section we report results of preliminary experiments on the impact of removal of backpropagation through time on model performance. %
For the PyTorch implementation in Appendix~\ref{sec:bdh_code_listing}, this corresponds to `detach'-ing variables \texttt{K} and \texttt{V} in the implementation of the \textsf{LinearAttention} class. 

In particular, we found that such a model, trained without any backpropagation through time, retained some ability to model language, but lost the ability to match concepts between different languages during translation. For translation tasks like those presented in Table~\ref{tab:model_merging_esfrpt}, loss values for English increased from a loss level of approximately $0.65$ for an unconditional English language model (trained with backpropagation over time), to loss of approximately $0.75-1.05$ for a model trained without backpropagation over time, depending on model variant, regardless of whether English was the source or target language in translation. No significant difficulties were encountered during training when crossing the barrier of the letter-bigram language model, at loss value $2.4$.

Beyond side-effects of the general design, we did not optimize the \BDHGPU model for suitability of training without backpropagation. We consider this architecture to be a good starting point for bootstrapping further investigations in this direction. 

\section{Conclusions}

\subsection{Takeaways for model engineering}

This paper leads up to a new class of language and reasoning models which eliminate architecture nonuniformities, notably in terms of scaling for model size, and handling of time scales of inference.

The \BDHGPU architecture introduced in this paper opens the following opportunities:
\begin{enumerate}
  \item \emph{New ways of scaling models for time and size.} \BDHGPU is a state-space model which scales for size in one large dimension $n$ (neurons in this dimension are indexed by RoPE oscillator frequency). Subject to appropriate sharding, this also leads to a desirable form of locality: important data is located just next to the sites at which it is being processed. This minimizes communication, and eliminates the most painful of all bottlenecks for reasoning models during inference: memory-to-core bandwidth.
  
  \item \emph{Faster model iteration.} During training and inference alike, \BDHGPU provides insight into parameter and state spaces of the model which allows for easy and direct evaluation of model health and performance, notably, through sparsity-related measures and through aggregates and statistics on the large pool of homogeneous neurons, even for relatively small models. Attention and parametric layers alike operate on the same neuron dimension (`concept dimension').
  \item \emph{Direct explainability of model state.} Elements of state of \BDHGPU are directly localized at neuron pairs, allowing for a micro-interpretation of the hidden state of the model.
  \item \emph{New opportunities for `model surgery'.} The \BDHGPU architecture is, in principle, amenable to direct composability of model weights in a way resemblant of composability of programs. This concerns the potential both the direct composition of separately trained model parts, as well as `surgery' of parameter spaces of models, by inserting fragments of manually programmed protocols into machine-learned code.
\end{enumerate}

\subsection{Implications for brain science}

We have obtained a micro-foundational description of attention for artificial language and reasoning models, expressed in a framework of local graph dynamics. This has been found to be consistent with the effects observed for the same function of \emph{attention for language and reasoning} in the brain. By introducing a translation layer based on similarity of function between the artificial and biological planes, for blocks of feed-forward neural networks and attention mechanisms, our work points to the following hypothesis: \emph{complex systems effects which are observed in the brain, around modular scale-free network structure, synaptic plasticity, and Hebbian learning arose from its core purpose --- doing reasoning --- and \textbf{not} from any specific longer-term training dynamics which the brain applies}.

We have exhibited how a general attention mechanism can be efficiently implemented as an artificial neuronal system with spiking neurons and synapse plasticity. More formally, we first describe the class of local interaction dynamics which any system \emph{plausibly needs} to implement attention mechanisms. We then confirmed that the edge-reweighting rule is \emph{sufficient} to allow a certain artificial Language Model (\BDHGPU) to operate at least at the level of the Transformer. For an artificial network, the edge-reweighting rule intuitively describes the interaction between two artificial neurons exhibiting rapid state-change behavior, and one synaptic neuron interconnection element exhibiting plasticity as shown in Fig.~\ref{fig:x_y_synapse}. %

More broadly, this work may potentially serve to support efforts aiming to isolate, from among the many extremely complex electrochemical patterns and signal dynamics occurring in the brain, those that are crucial for \emph{solving tasks in-context (based on attention)}, from those that potentially serve other purposes, such as transfer of information from short-term memory to long-term memory, or long-term improvement of brain function (learning).

\paragraph{How this work helps with axiomatization of learning theory in the brain.} %
Attempts to understand the brain, starting from the perspective of longer time scales of training, have proved extremely challenging, defying progress. This paper pin-points attention-based reasoning at shorter time scales as `the other end of the string', and hints how, from here, untangling the entire story will plausibly be easier.

For natural systems undergoing continuous learning, the time scales to look at are: language function and reasoning (chain-of-thought inference), then short-to-long memory transfer from state to network weights, adaptation of structure: changes to interconnections, and finally, changes to neuron nodes.

For long time scales, this reduces the question of finding supervised training dynamics form the most general case, to a specific class of local dynamics: an interaction kernel performing `edge-reweighting' rules. %
As these rules appear fundamental to logical inference and biochemical processes alike, its universality in processes that the brain is responsible for is plausible also beyond the realm of language-based reasoning.

\emph{From a systems perspective, we arrive at the following possible explanation.} The brain generally tries to be lazy in terms of energy expense, and does things as late as it can. Only reasoning needs to happen close to a critical regime, because it involves executing a real-time program which needs to be responsive, since the life and success of the biological organism depends on it. Then, for a certain time, which may be minutes for humans, the brain has enough synapses in it to represent (almost) all useful information it needs for reasoning, decision-making, etc. --- all stored in short-term state, at synapses (and/or neurons). Some of the neuron activations which the brain performs at this time scale represent `gradients of state' --- the gradients of in-context learning, passed on to modify synapse strength, in a weight-update process. As time goes by, the system runs out of state space. Then, memory processes work to iron things out, preserving in more permanent neuron connection weights and graph structure the elements of state that have been reinforced by feedback signals. Overall, there are fewer and fewer things that need to be remembered across progressively longer time scales. However, this entire memory process is, plausibly, subsidiary to the definition of the dynamics of reasoning and the synaptic dynamics of state that we discuss in this paper. In other words, the best form of description of the relaxation from state into longer-term memory follows from the specific kernel of the reasoning dynamics, such as the edge-reweighting kernel.

As for the ratio of time scales (measured in tokens for language), we can estimate that the time lapse after which harmonizing state with a memory process becomes important is of about the same order of magnitude as the average time between `writes' (significant transmission increases) for individual synaptic elements (see e.g.\ \figref{fig:sparsity}). In our models, this time is lower-bounded by the inverse of sparsity of the vector $\xysparse$, i.e., $1/\rho \approx 1/5\% = 20$ tokens, but it could be much larger for larger systems; we also do not force it in any way to be sparser during training. During training with backpropagation, if the backpropagation window $T$ is short enough, $T < 1/\rho$ tokens, we can plausibly assume that a synapse changes state only once in that window (and is used multiple times), hence the DAG of gradient backwards propagation is much more direct to embed within the system graph. Backpropagation is then a question of `routing' gradients in the neuron communication graph, and not one of disentangling them. All natural training approaches, whether based on backpropagation, or any more direct form of relaxation `from state into weights', appear to bottleneck on the amount of available state space on synapses, becoming necessary at about $T \sim 1/\rho$ by a simple information-theoretic argument on state storage capacity.

Regardless of how much of this is an accurate description, and how much an intuition, at the very least, it appears we may now have a way forward. Some part of the ``global mystery'' of learning in the brain can be reduced to a more ``localized problem'' of state-to-operator transfer for some relatively compact form of state-space dynamics (i.e., one specific local graph kernel). This change of perspective brings in both a completely new `problem landscape' in which to navigate towards a complete solution, as well as a set of new methods to use for the different types of graph structure changes involved in learning, including approaches from distributed computing, evolving network theory, and graph rewiring systems.

At this point, it seems one natural next step would be to ground the current discussion more deeply in findings of brain science, to refine or simplify the \emph{actual kernels} used by brain reasoning (which was not the objective of this paper), and potentially seek validation through experiment.

\subsection{Societal impact}

This paper is a voice in favor of bringing principled understanding to reasoning in Machine Learning. Axiomatic AI provides an opportunity to reduce risks related to unpredictable behavior of AI models, and, to open or accelerate new development directions. The subject matter which we consider here serves as a direct introduction to the most crucial problem that lies ahead: controlling the behavior of autonomous AI reasoning models and AI systems as they progress across time scales, from seconds to years.

\begin{sidewaystable}
\footnotesize
\setlength{\extrarowheight}{6pt}
\begin{tabularx}{\textwidth}{YYYYY}
\toprule
& \textbf{Transformer (GPT2)}
& \textbf{\BDHGPU($n$,$d$)}
& \textbf{BDH($n$,$\Delta$)}
& \textbf{Brain models (reasoning and language function)}
\\
\midrule
\emph{Inference hardware}
& GPU, CPU
& GPU, CPU
& CPU, Sparse GPU kernels, Neuromorphic
& Brain and supporting systems
\\
\emph{Model weights (predominant location)}
& $5$ tensors per layer (different shapes)
& $3$ tensors per model (same shape $n \times d$)
& Neuron-synapse graph: connection topology, edge weights
& Neuron-synapse graph: connection topology, edge weights
\\
\emph{Representation of attention}
& KV-cache tensor (not localized at neurons)
& $n \times d$ tensor for each layer (localized at neurons)
& Memory on synapse edge weights
& State memory through synapse plasticity
\\[5mm]
\emph{Macro-description of attention}
& Key lookup data-structure, key-value map
& Key lookup data-structure, key-value correlation matrix
& Key lookup data-structure, key-value correlation matrix
& Not known
\\
\emph{Micro-description of attention}
& None
& Neuron-pair correlations in context (transformed)
& Neuron-pair correlations in context
& Strengthened or weakened connections between neurons based on context
\\
\emph{Scaling for model size}
& Multiple combinations of dimensions, e.g. MLP scales with $D \times D$, scales separately with context length
& Uniform linear array of $n$ particles in a mean-field
& $n$-node graph model
& $n$-node graph model with evolving graph mechanisms
\\
\emph{Distributed system micro-architecture}
& Follows from compiled matrix multiplication kernels, non-uniform
& Particles run identical local kernels, communicating $O(d)$-size messages through mean-field, and storing local state
& All $n$ neuron nodes run identical local kernels, communicating over neuron-synapse graphs. Some synapses act as memory elements.
& $n$ neurons run local kernels and communicate through a network, using numerous signal patterns and coding schemes. Synapses act as memory element.
\\
\emph{Macro-expressiveness of programs (approximation)}
& RASP-L, C-RASP
& RASP-L, C-RASP
& RASP-L, C-RASP (or superset)
& Unknown
\\
\emph{Micro-expressiveness of programs}
& Unknown
& Subset of BDH
& Probabilistic rule-based local protocols. Micro-Inductive bias interpreted as reasoning system in a form of propositional logic.
& Unknown
\\
\emph{Emergence of structure}
& Partially interpretable concept layer (evidence of monosemantic neurons for important concepts)
& Evidence of emergent network, oscillator dynamics
& Emergent network, oscillator dynamics
& Emergent network; oscillatory effects; possible monosemantic ``grandfather neurons''
\\
\emph{Activation vectors}
& Dense activation; can be subsampled or sparsified by architecture modification
& Positive vectors, sparse fresh activation vectors $y$
& Positive vectors, sparse fresh activation vectors $y$
& Sparse positive activation vectors
\\
\bottomrule
\end{tabularx}
\caption{Comparison of properties of language and reasoning model architectures: the GPT2 Transformer, \BDHGPU, BDH, and brain models.}
\label{tab:comparison}
\end{sidewaystable}

\newpage

\subsection*{Acknowledgments}
The authors thank David Sussillo, Navdeep Jaitly, and Emanuele Natale for insightful discussions on reasoning and the brain, and for early feedback on this write-up. 
We also thank Samy Bengio for comments on the presentation.
We kindly acknowledge the support of all of the Pathway team, notably, Pawe\l{} Podhajski for his amazing help with cluster setup, Victor Szczerba and Z Schwab for all discussions over coffee, and Kamil Piechowiak and Chris Ociepa for constructive comments on the presentation. AK thanks Christos Papadimitriou for being the direct inspiration for us to embark on this journey.

\subsection*{Author contributions}

AK conceived the BDH and \BDHGPU architectures, conceived most of the theory, developed most of the model source code, conceived and performed experiments on synapses, and wrote most of the paper.

\noindent
PU contributed crucial elements of \BDHGPU architecture, contributed model and framework source code, contributed to theoretical analysis, and performed experiments.

\noindent
JCh led, designed, and oversaw methodology of experiments, led framework development, contributed major improvements to \BDHGPU architecture, contributed to the theory, implemented baselines, performed experiments, and substantially redacted the paper.

\noindent
ZS conceived the project, guided research directions, introduced particle-interaction interpretation, acted as final judge in research decisions, and substantially redacted the paper.

\noindent
MB optimized model source code, contributed framework source code, and performed experiments.

\newpage

\bibliography{paper}{}

\newpage
\appendix
\addtocontents{toc}{\protect\setcounter{tocdepth}{1}}

\section{Connection between generalization of reasoning and computational expressiveness}
\label{appx:one}
State-of-the-art reasoning models have the interpretation of (Turing-complete) programs, executed over a certain period of time. This shifts the emphasis of generalization, from discovering the structure of mathematical functions which maps inputs to outputs, to discovering a class of runnable programs, which take as input a given class of input prompts, and process these prompts ``in the right direction''.

Consider a given reasoning task, whose scope is defined as a set  $\P$ of valid input prompts, given as bounded-length token sequences over some alphabet $\Omega$. Given a prompt from $\P$, a model solving the considered task is eventually (i.e, after some number of steps of reasoning) expected to generate an output, in the form of a bounded-length token sequence over the same alphabet $\Omega$,  which is subjected to evaluation. Consider language models sampled from some probability distribution $\M_1$ over parameter sets in some architecture $\A_1$. 

Now, suppose that for some other model architecture $\A_2$ there exists a distribution $\M_2$ over language models in $\A_2$ such that, for a valid input prompt chosen uniformly at random from $\P$, the outputs sampled from a model $M_1 \sim \M_1$ and the outputs sampled from a model $M_2 \sim \M_2$, have (almost) the same distribution in the space of bounded-length sequences over $\Omega$, and are both obtained within some asymptotic bound on the number of steps of reasoning, in expectation. The described setting is equivalent to saying that \emph{models $\M_2$ have generalized the considered task $\P$ in (almost) the same way as models $\M_1$}. Indeed, conversely, if the described condition did not hold, we could, in a finite number of trials, distinguish solutions to problem $\P$ obtained by model families $\M_1$ and $\M_2$.

Now, consider model architectures $\A_1, \A_2$ which apply Chain-of-Thought reasoning~\cite{DBLP:journals/corr/abs-2201-11903}. A model in such an architecture has the interpretation of a trainable probabilistic program, taking inputs from $\P$, and the architectures themselves represent computational machine architectures. Moving to a discussion of computational expressiveness, we obtain the following statement. 

\begin{observation}
Given a probability distribution of models $\M_1$ in architecture $\A_1$, suppose there exists a distribution over models in architecture $\A_2$ which generalizes on task $\P$ in the same way as models from $\M_1$. Then, the machine architecture $\A_2$ has sufficient computational expressiveness to simulate programs from $\M_1$ efficiently on the set of inputs $\P$, i.e., $\A_2$ contains programs which obtain an (almost) identical distribution of outputs within the given bounds on running time.
\qed
\end{observation}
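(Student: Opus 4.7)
The plan is to unpack the definitions carefully and then observe that the statement is, in essence, a tautology built on top of the notion of ``simulation'' appropriate to probabilistic Chain-of-Thought architectures. First I would make precise the three objects implicitly referenced: (i) the input distribution $\mathrm{Unif}(\P)$ on prompts; (ii) the two joint distributions $(M_1,p) \sim \M_1 \otimes \mathrm{Unif}(\P)$ and $(M_2,p) \sim \M_2 \otimes \mathrm{Unif}(\P)$ over pairs (program, prompt) in each architecture; and (iii) for each such pair, the induced distribution on bounded-length output sequences over $\Omega$, together with the (random) number of Chain-of-Thought steps used. Writing $\mathrm{Out}_i(p)$ for the output distribution of $\M_i$ on $p$ and $T_i(p)$ for the corresponding running-time random variable, the hypothesis of the Observation says precisely that (a) $\mathrm{Out}_1(p) \approx \mathrm{Out}_2(p)$ in distribution when averaged over $p \sim \mathrm{Unif}(\P)$ and (b) $\E[T_2(p)] = O(\E[T_1(p)])$ under the same average.

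Next I would define what ``$\A_2$ efficiently simulates $\M_1$ on $\P$'' means in this probabilistic-program setting, namely the existence of a distribution $\M_2'$ over programs in $\A_2$ such that the induced joint distribution of (output, running-time upper bound) matches that of $\M_1$ (almost everywhere on $\P$), up to the same asymptotic time overhead. The proof is then immediate by taking $\M_2' := \M_2$: the hypothesis already furnishes a distribution of $\A_2$-programs that realizes the required input-output correspondence within the stated time bound. Formally, this step is just a renaming, combined with the observation that a distribution over $\A_2$-programs is itself a legitimate randomized program in $\A_2$ (since $\A_2$, being a Chain-of-Thought model architecture, already supports sampling during inference).

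I would then address the (only) non-trivial point: the difference between \emph{per-instance} equivalence and \emph{aggregate} equivalence over $\mathrm{Unif}(\P)$. The Observation's conclusion must therefore be read as an \emph{on-$\P$} simulation guarantee in the sense of matching output distributions on random inputs, not as a pointwise simulation that reproduces $\M_1$'s behavior on every individual prompt. I would make this distinction explicit, and argue it by the standard coupling/contrapositive: if $\A_2$ did not contain programs achieving the required joint output-time distribution on $\mathrm{Unif}(\P)$, one could construct a finite-sample distinguisher---drawing prompts from $\P$ and outputs from the two families---that separates $\M_1$ from every $\M_2$ in $\A_2$ with nontrivial probability, contradicting hypothesis (a).

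The hard part, in my view, is not the proof itself but selecting a formalization of ``simulate efficiently'' that is simultaneously (i) weak enough to follow mechanically from the ``same generalization'' hypothesis and (ii) strong enough to justify the motivational use of this Observation, namely that a Transformer-based CoT system which generalizes brain reasoning must have an efficient reduction realizing the relevant reasoning function. I expect the cleanest presentation to state the result in terms of total-variation closeness of output distributions under $\mathrm{Unif}(\P)$ and asymptotic expected running-time overhead, leaving the question of per-instance simulation (and its connection to length-generalization beyond $\P$) as an explicit remark; this keeps the proof a one-paragraph unpacking while correctly flagging the gap between distributional and instancewise simulation.
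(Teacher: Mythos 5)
Your proposal is correct and takes essentially the same route as the paper, which states this Observation with no separate proof precisely because it follows by unpacking the definition of ``generalizes in the same way'' given in the preceding paragraph (including the same finite-sample distinguisher remark for the converse direction). Your added care about distributional versus per-instance simulation is a reasonable clarification but does not change the argument.
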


In particular, we note that if we were to consider the special case of $\A_1$ being reasonable human agents, we could say that architecture $\A_2$ generalizes reasoning, in the same way as humans, if we can train models $\M_2$ in $\A_2$ which accurately reproduce the outcomes of reasoning for some sample $\M_1$ of humans in $\A_1$.

This leads us naturally to describe Language Model generalization through a universal reference to the principles of operation of the human brain, treated as a distributed computing architecture, and not through a characterization of language and reasoning prompts $\P$ that the model should be able to deal with in some specific way. 

\section{Further description of experiments}

\subsection{Language translation task}\label{sec:translation_task}

We have evaluated our models on a mixed language modeling and translation task derived from the Europarl corpus \cite{koehn-2005-europarl}. The corpus consists of sentence-level aligned translations of transcripts of European Parliament proceedings. For each language pair, we treat the data as a long stream of interleaved source and target sentences (sampling for each sentence which language is the source, and which is the target) on which we train decoder only models. Thus, models are jointly trained as language models and translators. We train all models using Truncated Backpropagation Through Time~\cite{williams1990efficient}. Subsequent minibatches served by the data loader are related: each is a continuation of the previous. Each model maintains a recurrent state, carried across minibatches: $\state$ matrix for \BDHGPU and a FIFO buffer of recent KV-cache entries for the TransformerXL~\cite{dai2019transformerxlattentivelanguagemodels} baseline. We train all models on raw UTF8 data. We are mainly interested in model comparison and prefer to keep the experimental setup as simple as possible. A few minibatches are shown in Fig.~\ref{fig:euparldata}.

The joint language modeling and translation formulation has several benefits:
\begin{enumerate}
\item Next token prediction is representative for LLM training. Simple architectures, such as decoder-only models are sufficient.
\item The task promotes models with long context capabilities --- subsequent sentences are related and the model can meaningfully
  utilize long context to model the source language sentences.
\item The task promotes models which carry state across minibatches, as training data is temporally coherent and the final model state at the end of one minibatch is a natural initialization of hidden state on the next minibatch.
\item Translation can be seen as language modeling coupled with fuzzy copying. Successful models will need to develop in-context learning capabilities such as inductive heads~\cite{olsson2022context}.
\end{enumerate}

\begin{figure}[ht]
  \centering
    \begin{verbatim}
        0. |<F:en>For countries such as Sweden and Finland, another system o|
        1. |f allocation would be extremely significant.<T:es>Por ejemplo, p|
        2. |ara pa•íses como Suecia y Finlandia tendr•ía un gran significado|
        3. | que se hiciese otra forma de distribuci•ón.<F:es>El diputado Fe|
        4. |rber ha presentado una propuesta que implica una distribuci•ón m|
        5. |•ás flexible, y yo respaldo esta enmienda.<T:en>Mr Ferber has ta|
        6. |bled an amendment which involves our looking in a considerably m|
        7. |ore flexible way at the present allocation, and I support this a|
        8. |mendment.<F:en>.<T:es>.<F:en>(NL) Mr President, I would like to |
        9. |start by thanking both parliamentary committees and not least bo|
            \end{verbatim}
  \caption{Exemplary sequence of 10 successive minibatches from the translation task. The model is trained on raw UTF8 bytes (for visualization we pad multi-byte UTF8 characters with ``•'' symbol). Special token strings $\texttt{<F:lang\_code>}$ and $\texttt{<T:lang\_code>}$ delimit source and target sentences. Minibatches are temporally coherent: source sentences are followed by their translations, and subsequent source sentences are part of  the same larger document.}
  \label{fig:euparldata}
\end{figure}

\subsection{BDH Scaling Experimental Details}\label{sec:bdh_scaling_details}
We provide details on models used in scaling experiments described in Section~\ref{sec:comparison_bdh_gpt_transformers}. All models were implemented in PyTorch~\cite{paszke2019pytorch} and  trained on the Europarl~\cite{koehn-2005-europarl} task described in Section~\ref{sec:translation_task}. We have kept the same training regime for all models at all sizes: En-PL and En-Cs language pairs (380MB total). All models trained on raw UTF8 bytes seeing a total of 1.2B tokens (about 3 epochs). All minibatches were 2048 tokens long, but we have varied the number of examples in the minibatch (varying number of tokens in each minibatch) to accommodate different memory requirements of different models. We have used multi-GPU training using the Distributed Data Parallel approach using AdamW~\cite{loshchilov2017decoupled} with learning rate $10^{-3}$, and 1000 warm-up step followed by linear learning rate decay over the course of training to $10^{-4}$, adaptive gradient clipping~\cite{kumar2025zclipadaptivespikemitigation}, and weight decay $0.1$. Models were trained to operate on a context longer than minibatch length using Truncated Backpropagation Through time~\cite{williams1990efficient}. 

The Baseline model, dubbed GPTXL, was a GPT2-like transformer~\cite{radford2019language} based off the NanoGPT~\cite{karpathy2024nanoGPT} implementation with KV-cache carried across minibatches as in TransformerXL~\cite{dai2019transformerxlattentivelanguagemodels}. We have used ALiBi positional biases~\citet{press2022trainshorttestlong}. We list its hyperparameters for various model sizes in Table~\ref{tab:transformerxl_hyperparams}. Optimal Dropout was selected using a small sweep at each model size.

\begin{table}
    \centering
    \begin{tabular}{lll lll l}
    \toprule
     model & num   & embd  & num  &  MLP  & dropout & Carried KV-cache  \\
     size  & layer & dim   & head &  dim  &         & size \\
    \midrule
     25M   & 9     & 480   & 5    &  1920 & 0.01    & 4096 \\
     50M   & 12    & 576   & 6    &  2304 & 0.02    & 4096 \\
     100M  & 15    & 768   & 8    &  3072 & 0.02    & 4096 \\
     200M  & 18    & 960   & 10   &  3840 & 0.002   & 4096 \\
     400M  & 25    & 1152  & 12   &  4608 & 0.005   & 4096 \\
     800M  & 28    & 1536  & 16   &  6144 & 0.15    & 4096 \\
    \bottomrule
    \end{tabular}\\[5mm]
    \caption{Hyperparameters for GPTXL baselines in scaling experiments. The model architecture follows GPT2~\cite{radford2019language}, with a FIFO buffer of past KV-cache entries~\cite{dai2019transformerxlattentivelanguagemodels}.}
    \label{tab:transformerxl_hyperparams}
\end{table}

\BDHGPU directly uses model code provided in Appendix~\ref{sec:bdh_code_listing}. \BDHGPU' adds xLSTM-like gating mechanism~\cite{beck2024xlstm}, and merges next token predictions from all layers. Both \BDHGPU and \BDHGPU' use same architectural hyperparameters, gathered in Table~\ref{tab:bdh_scaling_hyperparams}.

\begin{table}
    \centering
    \begin{tabular}{lll lll}
    \toprule
     model & num   & $d$   & $n$  &  num   & dropout \\
     size  & layer &       &      &  head &         \\
    \midrule
     25M   & 8     & 256   & 32768  &  4     & 0.1     \\
     50M   & 8     & 256   & 65536  &  4     & 0.1     \\
     100M  & 8     & 256   & 131072 &  4     & 0.1     \\
     200M  & 8     & 256   & 262144 &  4     & 0.1     \\
     400M  & 8     & 256   & 524288 &  4     & 0.1     \\
     800M  & 8     & 256   & 1048576  &  4     & 0.1     \\
    \bottomrule
    \end{tabular}\\[5mm]
    \caption{Hyperparameters for \BDHGPU models in scaling experiments.}
    \label{tab:bdh_scaling_hyperparams}
\end{table}

\subsection{BDH Monosemantic Synapse Experiment Details}\label{sec:bdh_monosynapse_details}
We provide details for models used in exploration of monosemantic synapses in Section~\ref{sec:micro_attention_bdh_gpu}. The model was trained on Europarl~\cite{koehn-2005-europarl} described in Section~\ref{sec:translation_task}. It had $d=256, n=49152$, $4$ attention heads, and $8$ layers. The model was trained on about one epoch of En-Es, En-Pt, and En-Fr data (total 1.9B tokens) in a Distributed Data Parallel setup using AdamW~\cite{loshchilov2017decoupled} with learning rate $10^{-3}$, 1000 warm-up step followed by linear learning rate decay over the course of training to $10^{-4}$, adaptive gradient clipping~\cite{kumar2025zclipadaptivespikemitigation}, and weight decay $0.1$. We have used Truncated Backpropagation Through time, carrying over the recurrent state of attention and training on sequences of length $2048$ characters at a time. We have used minimal Dropout~\cite{srivastava2014dropout} of $0.01$.

\subsection{BDH Merging Experiment Details}\label{sec:bdh_merge_details}
We provide details for models described in Section~\ref{sec:experiments_merging}
All models were trained on Europarl~\cite{koehn-2005-europarl} described in Section~\ref{sec:translation_task}. We provide model architecture hyperparametrs in Table~\ref{tab:merging_model_arch_hyperparams}. Models were trained on about two passes over the training set in a Distributed Data Parallel setup using AdamW~\cite{loshchilov2017decoupled} with learning rate $10^{-3}$, 1000 warmup step followed by linear learning rate decay over the course of training to $10^{-4}$, adaptive gradient clipping~\cite{kumar2025zclipadaptivespikemitigation}, and weight decay $0.1$. We have used Truncated Backpropagation Through time, carrying over the recurrent state of attention and training on sequences of length $2048$ characters at a time. We have used minimal Dropout~\cite{srivastava2014dropout} of $0.01$.

\begin{table}
    \centering
    \begin{scriptsize}
    \begin{tabular}{lll lll lll l}
    \toprule
    Model          & Init.                 & Training & Data size & Training & n     & d    & num.  & num.   & param. \\
                   & from                  & data     & (bytes)   & tokens   &       &      & heads & layers & count \\
    \midrule
    BaseEnEs       & ---                   & En-Es    & 612M      & 1.2B     & 24576 & 256  & 4     & 8      & 19M  \\
    TunedEnFr      & BaseEnEs              & En-Fr    & 640M      & 1.2B     & 24576 & 256  & 4     & 8      & 19M  \\
    TunedEnPt      & BaseEnEs              & En-Pt    & 616M      & 1.2B     & 24576 & 256  & 4     & 8      & 19M  \\
    MergedEnEsFrPt & TunedEnFr+TunedEnPt   & ---      & ---       & ---      & 49152 & 256  & 4     & 8      & 38M  \\
    \bottomrule
    \end{tabular}
    \end{scriptsize}
    \\[5mm]
    \caption{Architecture and training details for model merging experiments.}
    \label{tab:merging_model_arch_hyperparams}
\end{table}

\section{Omitted formal claims and proofs}\label{sec:apxjl}
\subsection{Proof of Observation~\ref{obs:protocol_equiv}}\label{sec:protocol_equiv}

\begin{proof}
The equivalence is straightforward to verify, rewriting the linear-algebraic multiplication expressions of \eqeqref{eq:bdhgraph} in Einstein summation notation and comparing respective index pairs. At any time, during the execution of rules for layer $l$, variables $X(i)$, $Y(i)$ and $\sigma_l(i,j)$ in the protocol description, for $i, j \in\{1,\ldots,n\}$ correspond to the $i$-th coordinate of vectors $\xsparse_{t,l}$ (based on $\xsparse_{t,l-1}$ from the previous round), $\xysparse_{t,l}$ (based on $\xysparse_{t,l-1}$ from the previous round), and matrix entry $\corr_{t,l}$ (based on $\corr_{t-1,l}$ from the previous token). The auxiliary variable $A(i)$ corresponds to a similar auxiliary vector $a_{t,l}:=\corr_{t-1,l}\xsparse_{t,l}$ in an intermediate step of computation of $\xysparse_{t,l}$ from $\xsparse_{t,l}$. The parameter $u(i,j) \in R^+$ associated with an element of state follows from the definition of matrix $U$; we assume for simplicity that $U$ is diagonal (which corresponds to the case of ALiBi). Finally, in Table~\ref{tab:protocolx}, the auxiliary node variables $X\ee(i), X\ii(i), Y\ee(i), Y\ii(i)$ are used to handle the thresholding of the inhibitory circuit.
\end{proof}

\subsection{Formal statement of Claim~\ref{claim:linearinformal} (linear attention)}\label{apx:linear}

We provide the following Claim, expressing the operation of attention under \emph{$C$-non-adversarial} key vectors $(k_\tau)$, $t= 1 \ldots t$, understood in the sense that there exists $C \in N$, $0\leq C < t-1$ such that, if considering $(k_\tau)$ as a sequence of random variables, each $f(k_\tau)$, $\tau= 1 \ldots t$, can be considered sampled independently at random in $S^{\nu}$ with respect to all keys sampled previously, except for at most $C$ such keys. We put $C=t-1$ for adversarial inputs, or if this condition cannot be satisfied at all due to the nature of function $f$.

\begin{claim}\label{claim:linear}
Let $\Lambda$ be a space of keys and queries, let $\phi : \Lambda \times \Lambda \to [-1,1]$ be an attention affinity function, and let $f : \Lambda \to S^{\nu}$, for some $\nu = O(\poly(n))$, be such that for any $q,k \in R$, we have $f(q)\cdot f(k) = \phi(q, k) \pm O(n^{-100})$.
Fix $\delta > 0$ and $C \in \Nat$. Let $A_{\phi, t}$ be a block which computes attention $a_t$ given by \eqeqref{eq:attn2a}, for a given sequence of key-query inputs $(k_1, \ldots, k_t)$ and values $(v_1, \ldots, v_t)$, where $t < \delta n / ((C+1) \log n)$ is fixed, $k_\tau \in \Lambda$, and $v_\tau \in R^d$ are of similar strength in the L2-norm, with $c_1 \leq \| v_\tau \| \leq c_2$, for all $\tau = 1 \ldots t$, for some constants $0 < c_1 \leq c_2$. Then the (simplified) linear attention equation of \BDHGPU:
\begin{equation}\label{eq:attnlinear}
\ket{\yKV_{t}} := \sum_{\tau=1}^{t-1}
    \ket{v_\tau}
    \bra{x_\tau}
    \ket{x_t}
\end{equation}
expresses $A_{\phi, t}$ with $O(\sqrt\delta)$-error in the L2-norm (i.e., $\|\yKV_\tau - a_\tau\| = O(\sqrt\delta)$, provided that the input vector $(k_\tau)$ is $C$-non-adversarial, under a suitable randomly chosen key preparation function $f' : \Lambda \to R^n$ , $x_\tau := f'(k_\tau)$, where $f'$ depends on $f$, w.h.p. in $n$ with respect to choice of $f'$.
\end{claim}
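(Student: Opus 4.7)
The plan is to take $f'$ to be a random Gaussian projection of $f$ into the target dimension $n$. Specifically, let $P \in R^{n \times \nu}$ have iid entries $P_{ij} \sim \gauss(0, 1/n)$, and set $f'(k) := P f(k)$, so $x_\tau = P f(k_\tau)$. The \JL lemma, applied to inner products, implies that for any unit vectors $u, v \in S^\nu$ we have $\E[(Pu)^T(Pv)] = u^T v$ and $\text{Var}[(Pu)^T(Pv)] = O(1/n)$, with sub-Gaussian concentration at scale $O(\sqrt{\log n / n})$. Hence each error
\begin{equation*}
\epsilon_{t,\tau} := x_\tau^T x_t - \phi(k_\tau, k_t) = (Pf(k_\tau))^T (Pf(k_t)) - f(k_\tau)^T f(k_t)
\end{equation*}
is zero-mean and sub-Gaussian at scale $O(1/\sqrt n)$, so pointwise $|\epsilon_{t,\tau}| = O(\sqrt{\log n / n})$ w.h.p.

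\textbf{Bounding the sum.} The attention error equals $\yKV_t - a_t = \sum_{\tau=1}^{t-1}\epsilon_{t,\tau} v_\tau$. A direct triangle-inequality bound yields $\|\yKV_t - a_t\| = O(tc_2 \sqrt{\log n/n})$, which is too weak by roughly $\sqrt t$ and would only give $t = \Otilde(\sqrt n)$. To obtain $t = \Otilde(n)$, we need to exploit the fact that most of the $\epsilon_{t,\tau}$ are approximately conditionally uncorrelated. The key step is to decompose the Gaussian matrix as $P = P_\parallel + P_\perp$, where $P_\parallel$ acts on $\text{span}(f(k_t))$ (and is determined by $x_t$) and $P_\perp$ acts on its orthogonal complement (independent of $x_t$). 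Each $\epsilon_{t,\tau}$ then depends on $P_\perp$ only through the component of $f(k_\tau)$ orthogonal to $f(k_t)$, and by rotational symmetry of $P_\perp$ the pairwise cross-covariances are of order $O(\phi(k_\tau,k_{\tau'})/n)$. The $C$-non-adversarial assumption ensures that, for each index $\tau$, there are at most $C$ other indices $\tau'$ whose key vectors $f(k_{\tau'})$ can have $\Omega(1)$ correlation with $f(k_\tau)$; the remainder sum to a negligible cross-covariance budget.

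\textbf{Concentration.} Split the index set $\{1,\ldots,t-1\}$ into a "dependent'' subset $D$ of size at most $C$ (indices whose keys are correlated with the others being analyzed) and a "generic'' subset $I$ on which the errors are approximately uncorrelated. For $D$, a union bound over the $|D|$ terms gives $\|\sum_{\tau \in D} \epsilon_{t,\tau} v_\tau\| = O(C c_2 \sqrt{\log n/n})$. For $I$, a vector-Bernstein inequality applied conditionally on $x_t$ (with per-term bound $O(c_2\sqrt{\log n/n})$ and total conditional variance $O(c_2^2 t/n)$) yields $\|\sum_{\tau \in I} \epsilon_{t,\tau} v_\tau\| = O(c_2 \sqrt{t \log n/n})$ w.h.p.\ in $n$. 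Combining, the total error is $O(c_2 \sqrt{(C+1) t \log n / n})$, which is $O(\sqrt \delta)$ by the hypothesis $t < \delta n / ((C+1) \log n)$.

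\textbf{Main obstacle.} The principal technical challenge is the rigorous verification of the decomposition in the second paragraph: namely, showing that after conditioning on the $P_\parallel$ component of the Gaussian matrix, the residual errors $\{\epsilon_{t,\tau}\}_{\tau \in I}$ satisfy the approximate independence assumption required by a vector Bernstein inequality (or, alternatively, form an approximate martingale difference sequence against the natural filtration), so that one does not lose the $\sqrt t$ concentration savings. A secondary issue is translating the probabilistic "$f(k_\tau)$ independent of all but $C$ previous keys'' hypothesis into the deterministic geometric statement controlling cross-covariances of \JL errors that is actually used in the bound; this requires an application of standard measure-concentration on $S^\nu$ combined with conditioning on the $C$ "correlated'' keys.
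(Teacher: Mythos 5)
Your proposal is correct in outline and follows essentially the same route as the paper's proof: prepare keys with a random (Johnson--Lindenstrauss-type) projection into $R^n$, bound each per-term inner-product error by $O(\sqrt{\log n / n})$, and then recover the crucial $\sqrt{t}$ concentration savings over the naive triangle inequality by exploiting the $C$-non-adversarial structure --- the paper does this by partitioning the indices into $O(C)$ martingales of length $O(t/(C+1))$ and applying Azuma's inequality, whereas you condition on a $P_\parallel/P_\perp$ decomposition of the Gaussian matrix and invoke a vector Bernstein inequality; both yield the same $O(c_2\sqrt{(C+1)\,t\log n/n}) = O(\sqrt\delta)$ bound. One bookkeeping caveat: your ``dependent subset $D$ of size at most $C$'' does not quite match the hypothesis, under which \emph{each} index has up to $C$ dependencies (so the set of correlated pairs can have size $\Theta(Ct)$, not $C$); the correct formalization is the paper's chromatic decomposition into $C+1$ classes, each internally independent, which your final bound implicitly assumes since it carries the $(C+1)$ factor inside the square root.
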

\begin{proof}[Proof (sketch)]
To simplify notation, assume w.l.o.g. that $\Lambda = S^{\nu}$ and $f= idem$; to undo this assumption, at the end of the proof we apply $f \circ f'$ for preparation in place of $f'$.

All vectors $v$ and the result $a_t$ we are looking to calculate are in $R^d$. With this notation, the attention task we are approximating is:
\begin{equation}
\label{eq:attn2}
a_t = q \sum_{\tau=1}^t k_\tau^T v_\tau.
\end{equation}
(this is still the general form of attention almost precisely equivalent to~\eqref{eq:attn2a}, not a special case).

The goal is to show how, subject to $t < \delta n / \log^2 n$, linear attention in dimension $n$ given by~\eqref{eq:attnlinear} is a sufficiently precise estimation of~\eqref{eq:attn2}.

Consider now, with $\Lambda = S^{\nu}$, $f' : S^{\nu} \to R^n$, where we recall that $x_\tau := f'(k_\tau)$, to be a suitable dimensionality reduction preserving approximation of scalar product between $R^{\nu}$ and $R^n$. For simplicity of argument, we let $f' : R^{\nu} \to R^n$ be a standard \JL transform, with the additional property that $f'(-z) = -f'(z)$ for all $z\in R^{\nu}$ (easy to obtain from any other \JL transform $f''$ by taking $f'(z) := (f''(z) - f''(-z))/2$). The distortion of scalar product in $R^n$ is then known to be bounded as follows: $|\bra{k_\tau} \ket{k_t} - \bra{x_\tau} \ket{x_t}| = O(\eps) (\|k_\tau\| + \|k_t\|) = O(\eps)$, w.h.p.\ with respect to choice of $f'$. Here, $\eps = \sqrt {\log n / n} = O(\sqrt \delta) / \sqrt {(C+1) t \log t}$, where the last inequality follows from the assumption on $t$ made in the Claim.

We now consider the sequence $r_{\tau} := \bra{k_\tau} \ket{k_t} - \bra{x_\tau} \ket{x_t}$, for $\tau < t$. Set aside the (at most $C$) elements $r_\tau$ for which $k_\tau$ and $k_t$ are not independent. For all other elements, consider that $|r_{\tau}| = O(\eps)$ as established previously, and the sign $r_{\tau}/|r_{\tau}|$ is chosen independently at random with respect to all but at least $C$ elements by the conditions imposed on $f'$ and $k_{\tau}$. It follows that $\sum_{\tau=1}^t r_{\tau}$ can be represented as a sum of $O(C)$ martingales, each of which has length $O(t/(C+1))$ and all elements bounded by $O(\eps)$ with $\eps = O(\sqrt \delta) / \sqrt {(C+1) t \log t}$. The Claim follows directly, by applying Azuma's inequality to each of these martingales independently.
\end{proof}

Considering the extreme cases of $C=0$ and $C=t-1$, the above Claim leads directly to Claim~\ref{claim:linearinformal}, clarifying over what time, linear attention can be used to express general attention.

\subsection{Proof of Claim~\ref{claim:graphs}}\label{apx:proofgraphs}

\begin{proof}
The proof is almost immediate, through the construction of an appropriate neuron-synapse interaction graphs $H\ee$, $H\ii$ such that $G\ee={H\ee}^2[V]$ and $G\ii={H\ii}^2[V]$. Consider $\encoder' \in (R^+)^{2d \times n}$ such that $\encoder'_{\alpha,j} = \relu{\encoder_{\alpha,j}}$ and $\encoder'_{\alpha+d,j} = \relu{-\encoder_{\alpha,j}}$, for $j \in \{1,\ldots,n\}$ and $\alpha \in \{1,\ldots,d\}$. Define $\decoder\ee, \decoder\ii \in (R^+)^{n \times 2d}$ so that:
$$
(\decoder\ee - \decoder\ii) \encoder' = \decoder \encoder.
$$
Indeed, notice that this is always possible by redistributing elements of $\decoder$ into $\decoder\ee$ and $\decoder\ii$ (putting $\decoder\ee_{i,\alpha} = \decoder\ii_{i+d,\alpha} = \relu{\decoder_{i,\alpha}}$) and $\decoder\ii_{i,\alpha} = \decoder\ee_{i+d,\alpha} = \relu{-\decoder_{i,\alpha}}$), so that, for all $i, j \in \{1,\ldots,n\}$ and $\alpha \in \{1,\ldots,d\}$, we have:
$$
(\decoder\ee_{i,\alpha} - \decoder\ii_{i,\alpha})\encoder'_{\alpha,j} + (\decoder\ee_{i,\alpha+d} - \decoder\ii_{i,\alpha+d})\encoder'_{\alpha+d,j} = \decoder_{i,\alpha}\encoder_{\alpha,j}.
$$
Considering $S = \{1,\ldots,2d\}$, the definition of $H\ee$ as the union of edges of $\decoder\ee$ and $\encoder'$ on input neuron layer $V$, hidden layer $S$, and output neuron layer $V$ follows. Likewise, we define $H\ii$ as the union of edges of $\decoder\ii$ and $\encoder'$.

We verify that for $G\ee={H\ee}^2[V]$ and $G\ii={H\ii}^2[V]$, we have $G\ee - G\ii = \decoder\encoder$, and the Claim holds.
\end{proof}

\paragraph{Considerations of building linear circuits.} The above proof makes the neuron-synapse interaction graphs $H\ee$, $H\ii$ sparse in terms of the number of edges, as required to show that the number of parameters are preserved by correspondence. However, it is a purely technical construction, and nodes in the synaptic layer have high degree, $n$. While preserving strict equivalence of linear dynamics, the degrees of nodes of the considered graphs in the synaptic layer can be reduced in this construction, at the cost of increasing the number of edges of graphs $H\ee$, $H\ii$. (For example, subdividing each node of the synaptic layer into $a^2$ nodes can be used to reduce their degree $\Theta(a)$-times, while increasing the number of edges $\Theta(a)$-times; putting $a=\sqrt{n/d}$ we reach graphs $H\ee$, $H\ii$ with degree $O(\sqrt{nd})$ in both the neuron and synaptic layers, and consequently $O(n\sqrt{nd})$ edges.)

Reduction of internal degrees in this circuit is also possible by introducing more than 1 hidden layer, creating a form of branching circuit. The implementation for this in a distributed way remains very simple, as the considered dynamics of the form $z \to Gz$ are linear (token-propagation dynamics). The bound on the number of edges needed to represent such a circuit remains $O(nd)$, even when the circuit has constant degree.

The technical construction of the linear circuits $H\ee$, $H\ii$ provided in this Appendix do not affect results concerning the analysis of the structure of neuron-neuron interaction graphs $G\ee$, $G\ii$. These neuron-neuron interaction graphs plausibly maintain a heavy-tailed, power-law-like degree distribution, as is the case for the models considered empirically in Section~\ref{sec:bdh_empiricalgraphs}.

\subsection{Formal statement of Claim~\ref{claim:attention}}\label{apx:proofattention}

\begin{claim}\label{claim:attentionformal}
Let $\decodery, \encoder$ be parameter matrices of BDH-Normfree. Then, there exists a graph $\graphy \in \G(n,O(nd))$, expressible through a sparse linear circuit, a graph $\graphs$ having $O(nd)$ edges, and a sparse linear value preparation function $A : {R^+}^{n} \to {R^+}^{n}$, such that, for any sequence of keys $(\xsparse_{\tau,l})_{0 \leq \tau \leq t}$ and values $(\xysparse_{\tau,l-1})_{0 \leq \tau \leq t}$, with $\xsparse_{\tau,l}, \xysparse_{\tau,l-1} \in {R^+}^{n}$, we have:
$$
(\graphy\ee - \graphy\ii) \corr^*_{t-1,l} x_{t,l}= \decodery\encoder\corr_{t-1,l} x_{t,l},
$$
where $\corr_{t-1,l} = \sum_{\tau < t}\ket{\xysparse_{\tau,l-1}}\bra{\xsparse_{\tau,l}}\rope^{t-\tau}$ represents the attention state of BDH-Normfree following \eqeqref{eq:corr}, and $\corr^*_{t-1,l} = \lr{\sum_{\tau < t}\ket{A(\xysparse_{\tau,l-1})}\bra{\xsparse_{\tau,l}}\rope^{t-\tau}}\odot\graphs$ represents the corresponding attention state of the BDH system with sparse attention on graph $\graphs$, subject to appropriate preparation of attention values using function $f_y$.
\end{claim}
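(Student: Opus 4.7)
\textbf{Proof plan for Claim~\ref{claim:attentionformal}.}

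The plan is to designate a small subset $S \subset V$ of $|S|=2d$ neurons as a \emph{synaptic layer}, used to hold a positive encoding of the attention-value contribution, and then realize the attention block in three steps: (i) a value-preparation linear map $A$ that pushes $y_{\tau,l-1}$ into $S$ via the encoder $\encoder$; (ii) an attention state $\corr^*$ sparsified by $\graphs$ so that its non-zero entries sit on the rows indexed by $S$; (iii) a read-out by a low-rank graph $\graphy\ee-\graphy\ii$ that applies $\decodery$ through $S$. The key algebraic identity to exploit is that for a scalar weight $\alpha_\tau := \xsparse_{\tau,l}^T \rope^{t-\tau} \xsparse_{t,l}$, linear attention factorizes through $\encoder$: the vector $\encoder\corr_{t-1,l}\xsparse_{t,l} = \encoder\sum_\tau y_{\tau,l-1}\alpha_\tau$ depends on $(y_\tau)$ only through the $d$-dimensional image $\encoder y_\tau$.

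First I would split $\encoder=\encoder^+-\encoder^-$ and $\decodery=\decodery^+-\decodery^-$ into non-negative parts (coordinate-wise ReLU). Then I would define $A(y):=\tilde \encoder y$, where $\tilde \encoder \in (R^+)^{n\times n}$ is the sparse matrix whose first $d$ rows are $\encoder^+$, next $d$ rows are $\encoder^-$, and remaining $n-2d$ rows vanish. This $A$ is a linear sparse map with $O(nd)$ non-zero entries, sending $(R^+)^n$ into $(R^+)^n$ and supported on $S$. For $\graphs$ I would take the graph whose support is exactly the $2d$ rows indexed by $S$ (weight $1$ on those rows, $0$ elsewhere), giving $|E(\graphs)|=O(nd)$; since $A(y_\tau)\xsparse_{\tau,l}^T$ is already supported on these rows, the $\odot \graphs$ mask leaves the outer products untouched. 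Consequently $\corr^*_{t-1,l}\xsparse_{t,l} = \sum_\tau A(y_\tau)\alpha_\tau = \tilde \encoder \corr_{t-1,l}\xsparse_{t,l}$.

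Next I would build $\graphy\ee,\graphy\ii$ so that $(\graphy\ee-\graphy\ii)\tilde \encoder = \decodery\encoder$ as matrix products. Writing $\graphy\ee-\graphy\ii$ in block-column form $[M_1\mid M_2\mid 0]$ with $M_1,M_2\in R^{n\times d}$, the identity is satisfied iff $M_1=\decodery$ and $M_2=-\decodery$. Splitting signs, set $\graphy\ee=[\decodery^+\mid\decodery^-\mid 0]$ and $\graphy\ii=[\decodery^-\mid\decodery^+\mid 0]$, both non-negative and with $O(nd)$ non-zero entries. To verify $\graphy\ee,\graphy\ii\in\G^2(n,O(nd))$, I would exhibit the sparse linear circuit $H\ee$ on vertex set $V\cup S$: from each $j\in\{1,\ldots,2d\}$ a unit-weight edge into $s_j\in S$, and from each $s_\alpha\in S$ out-edges into $V$ carrying a column of $\decodery^+$ or $\decodery^-$ as appropriate; then $(H\ee)^2[V]=\graphy\ee$ by a direct computation of two-hop weights, and $H\ee$ has $O(nd)$ edges.

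The chain of identities then closes: $(\graphy\ee-\graphy\ii)\corr^*_{t-1,l}\xsparse_{t,l} = (\graphy\ee-\graphy\ii)\tilde \encoder\, \corr_{t-1,l}\xsparse_{t,l} = \decodery\encoder\,\corr_{t-1,l}\xsparse_{t,l}$, which is exactly the required equality. I expect no deep obstacle; the main care is bookkeeping around (a) the sign-splitting $\encoder=\encoder^+-\encoder^-$ and $\decodery=\decodery^+-\decodery^-$ to preserve positivity of $A$ and of the graphs $\graphy\ee,\graphy\ii,\graphs$, and (b) ensuring that the two-hop construction of $H\ee,H\ii$ has no spurious paths outside of the intended synaptic intermediates, which is handled cleanly by making $H$ bipartite between $V$ and $S$ with the only $V\!\to\!S$ edges being the identity-like injection on the first $2d$ coordinates. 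The overall parameter budget stays $O(nd)$ for $A$, $\graphs$, and each of $\graphy\ee,\graphy\ii$, matching the $O(nd)$ parameter count of the corresponding BDH-Normfree block.
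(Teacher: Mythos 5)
Your construction is essentially the paper's own proof: the paper likewise fixes a $2d$-neuron synaptic subset, sign-splits $\encoder$ and $\decodery$ into non-negative parts, defines $A$ as the immersion of values into that subset via the stacked positive encoder, takes $\graphs$ supported on the corresponding $2d$ lines of the state matrix, and reads out with $\graphy\ee=[\decodery^+\mid\decodery^-]$, $\graphy\ii=[\decodery^-\mid\decodery^+]$ composed with the restriction to the subset. The argument is correct, and your explicit bipartite circuit for $H\ee,H\ii$ matches what the paper delegates to the proof of Claim~\ref{claim:graphs}.
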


Before we start the proof, we make a general point about the formulation of the claim. We are considering the problem of expressing (or more generally, approximating) the matrix operator  $\corr_{t-1,l}$ by another, sparser one. The setting of our problem can be distilled into obtaining an equality or approximation of the form $\encoder \corr_{t-1,l} \approx \encoder^*\corr^*_{t-1,l}$, where $\encoder \in R^{d \times n}$ is a given low-rank matrix, $\encoder^*\in R^{d \times n}$ can be defined arbitrarily, and $\corr^*$ is defined as in the statement of the Claim. If we content ourselves with an approximation, then it is possible to have $\corr^* = \corr$ (i.e., put $f_y = idem$), using for example the stochastic sparsification framework of~\cite{mcsherry2007}, or a value-dependent variant (cf. e.g.~\cite{krauthgamerS23}). The samples chosen by such a framework in a value-dependent variant would lead to a graph $G_s$ which plausibly reflects the power-law element distributions that we empirically observe in $\corr$.
\begin{figure}
\centering
\includegraphics[width=0.5\textwidth]{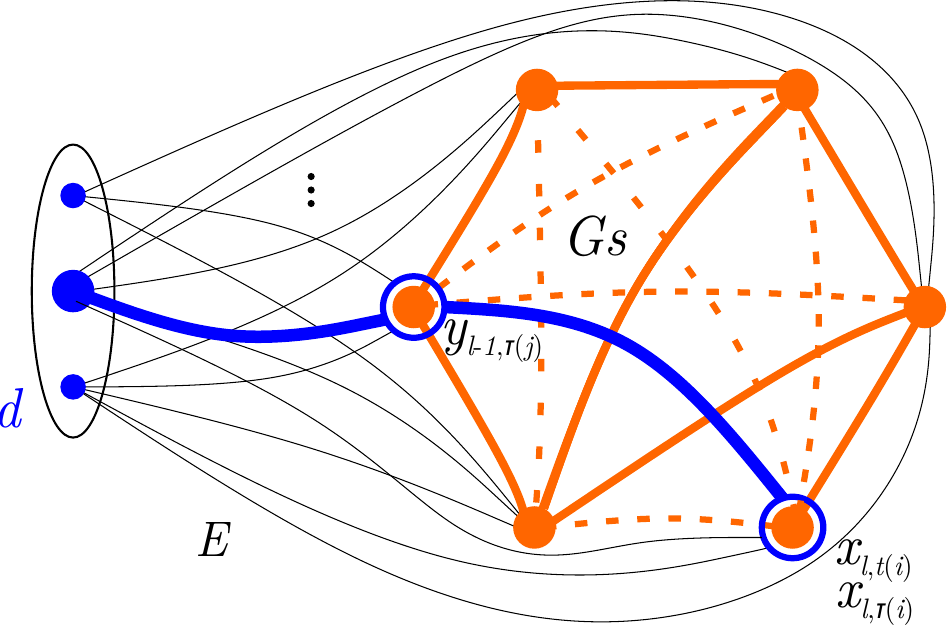}
\caption{Non-uniform graph attention: interpretation of $\encoder (\corr_{l,t} \odot G_s)$ after sparsification of graph $G_s$.}
\end{figure}

While the spirit of such an approximation is generally valid, we opt in the proof for a simpler, purely technical argument applicable to our specific setting, which gives a strict equality in the statement of Claim~\ref{claim:attentionformal} subject to linear preparation of attention values with a function $A$. In practice, this would mean that two successive layers of BDH with sparse state are sufficient to express a layer of BDH-Normfree under this reduction.

To prove the claim, it is enough to embed the connection structure of the encoder matrix, treating it as a graph, into $\graphs$.

\begin{proof}(of Claim~\ref{claim:attentionformal})
Fix arbitrarily subset $D \subseteq V$ of neurons, with $|D|=2d$. For the given matrix $\encoder \in R^{d\times n}$ from \BDHGPU, let $\encoder' \in (R^+)^{2d \times n}$ be defined as in the proof of Claim~\ref{claim:graphs} in Appendix~\ref{apx:proofgraphs}, and let $\decodery\ee$, $\decodery\ii$ also be applied as in that proof for considerations of decoder $\decodery$. Define the value preparation function $A$ as the immersion of vectors over $V$ into $D$ using $\encoder'$. Define $\graphs$ to be the all-ones matrix on the $2d$ columns corresponding to $D$, and zeros elsewhere. Then, define $\encoder^*\in R^{2d\times n}$ to be a diagonal matrix acting on its first $2d$ elements (corresponding to $D$), and zeros elsewhere. Setting $\graphy\ee = \decodery\ee \encoder^*$ and $\graphy\ii = \decodery\ii \encoder^*$, we obtain the claim.
\end{proof}

\section{Desirable properties of a local graph dynamics for language models}\label{sec:checklist}\label{sec:b}

We outline several general criteria of computational expressiveness and computational efficiency which a distributed computing system has to meet to effectively deal with language and reasoning. For this, we take a first-principles approach, relying only on very fundamental properties which an attention-based language model appears to need to capture, and which are applicable far beyond the specific case of BDH --- plausibly, being equally applicable to human and human-like reasoning.\footnote{In particular, the reader will have no doubt observed that graph settings applicable to language inference and reasoning systems, which involve task inputs spread out over time and the emergence of graph structure, are very different from graph-based frameworks which directly associate the task to solve with the communication graph (the latter case includes most considerations of: Graph Neural Networks, Graph Transformers, the LOCAL/CONGEST model of distributed computing, Approximate Message Passing systems, etc.)
}

\begin{hypothesis}
\label{hyp:x}
We expect any efficient graph-based distributed system dealing with language and reasoning using an attention-based approach to have the following characteristics:
\begin{itemize}
\item \itemone The system achieves computationally irreducible dynamics, i.e., it provides no systematic opportunity to predict the outcomes of its inference or approximate its dynamics in a numerically easier way than by running the system itself.
\item \itemtwo The state-space dynamics of the distributed system is a non-linear interacting particle dynamics, i.e., the system does not admit an efficient representation as a non-interacting particle system, but relies on a form of non-linear evolution expressed through (at least) two-particle interactions. (Such interactions are necessary, in particular, to enable multi-point correlation analysis on language inputs, when assuming only a small number of inference steps of the system per output token.)
\item \itemthree The system is capable of computing correlations between pairs of scalar variables localized at different nodes of the distributed system, and storing the state of such correlations so that the result is accessible from these two nodes. (This is plausibly needed to express attention in a state-space system.)
\item \itemfour The communication graph of the distributed system does not, in itself, represent any specific task input to solve, but reflects a trained model (a program), whereas tasks are represented as inputs to this program, presented over time. The communication graphs used to solve language and reasoning problems are expected to display modular, scale-free structure.
\end{itemize}
\end{hypothesis}
\emph{A detailed discussion of the four items of the Hypothesis is provided below.}

\subsubsection*{\itemone $\diamond$ Computational models have irreducible dynamics.}

We start by recalling a general observation which is applicable to most learning systems $L$ (machine learning models, biological systems) that have learned how to do computations: they are likely to have chosen state-space dynamics that will allow them to resolve their computational problem with the least effort during inference. In other words, \emph{if there is a physical system $P$ that solves a given computational problem, and if there exists a simulation $S(P)$ of this physical system that would approximate system $P$ with less effort, the learning system $L$ will be following the dynamics of $S(P)$, not those of $P$.}

We provide a few hypothetical examples for intuition, anchored in different areas of particle dynamics.

If $P$ were the particle dynamics of electrons in a resistor network, the simulation $S(P)$ could be a calculation based on Ohm's law with a Laplacian solver --- and we would consequently expect the dynamics of our computational system $L$ to follow the Laplacian solver code, and not to simulate electron dynamics.

If $P$ were the ensemble of billions of Internet users performing short walks clicking through links of the world wide web, the simulation $S(P)$ would be a calculation of aggregate behavior, reminiscent of PageRank, and we would expect $L$ to encode the parallel dynamics of Map-Reduce matrix operations of PageRank, not the simulation of individual agents.

If $P$ were a quantum system amenable to approximation by perturbation theory, we would expect $L$ to simulate the (classical) calculus of this perturbation theory, and not the quantum system $P$ directly.

Most mechanical systems admit some form of more efficient simulation, which means the the dynamics of such systems are rarely a suitable choice for neuronal models. Anecdotally, in nature, only very simple systems like the Physarum slime mold~\cite{jabr2012brainless} rely on direct action (with hydrostatic pressure gradients) to perform their optimization process; and contemporary neuroscience research suggests that even the simplest neuronal brains do not perform their work in a similar ``fluid-mechanical'' manner.

The irreducibility of $L$ means that this system is stretched to the limits of stability, just as a highly optimized numerical algorithm would be have been simplified and optimized to the limit of numerical stability. This relates to the limits of dimensionality reduction techniques that we have explored through a largely equivalent information-lens perspective of loss of precision and loss of information which it inflicts upon the model.

\subsubsection*{\itemtwo $\diamond$ Latent concept spaces arise from outcomes of particle-particle interactions.}

Dynamics of systems with multiple particles moving around in a (deformable) environment fall into two broad categories, depending on the strength of interaction between different parts of the dynamics. In the simpler setting, particles can be assumed \emph{at short time scales} to be moving in an environment unchanged by other particles --- the concurrent action of other particles, which would change the environment, does not need to be taken into account when representing individual particle motion, nor is it necessary to consider particle-particle interactions. By contrast, in the more general setting, the dynamics of multiple particles are tightly coupled, and their dynamics need to be modeled (simulated) together.

An example of a dynamics with no coupling would be a dynamics of multiple independent random walkers, such as the previously mentioned dynamics of electricity in wires, or the dynamics of PageRank. Examples of dynamics including interactions between particles, which may either happen directly or be moderated through the environment, include cellular automata, particle method simulations and molecular simulations, or swarms of communicating agents. %

The natural representation of state-space models as moving particles comes from the following interpretation. A distributed system with depth-$L$ computations (not least BDH or the \BDHGPU model given by the state equations~\eqref{eq:integral}) is amenable to interpretation as a system of walker particles performing an $L$-step walk over layers, starting at some token $t_0$ in the input layer $0$ and, in each time step $t \geq t_0$, either pausing (skipping a time step) or moving on to the next layer, until they reach the last layer $L$ in some time step $t_f$, at which point they leave the system, contributing to the distribution of the $t_f$-th output token. When attempting this approach with \emph{independent} walkers, the distribution of tokens output by such a system could be described by correlation functions following or resembling the Dyson series, $\sum_{\tau_L=0}^{t} \sum_{\tau_{L-1}=0}^{\tau_L-1}\ldots \sum_{\tau_1=0}^{\tau_2-1} F(\textrm{input}(\tau_1), \ldots, \textrm{input}(\tau_L))$. However, the output of attention (e.g., the linear attention output $\yKV$ given by equation \eqref{eq:integral} for \BDHGPU, or defined similarly in other state space models based on linear attention), cannot be represented as a Dyson formula when unrolling the dynamics backwards through layers (even if it looks deceptively similar at first glance). Each entry retrieved from attention is an interplay between two moments of time: the moment at which the key-value pair was entered, and the moment at which the corresponding query arrived. In consequence, the considered dynamics can be represented, in each layer, as a linear sum of two-point correlations between current time $t$ and some point $\tau$ in the past. Thus, in the $l$-th layer, this recursion can (with some approximation) be unrolled into a linear combination of functions of sets of $2^l$ input tokens (provided in the $0$-th layer), but cannot be represented through correlation functions $F$ on smaller sets of tokens (e.g., of size linear in $l$). Otherwise put, a system like BDH can be described using particles performing $l$-step walks when \emph{relying on intermediate elements of KV-state $\corr$}, which are produced during interactions with other walker particles in intermediate layers, but needs to be viewed through at least $2^l$-point correlation functions defined directly on input tokens in the input layer.

The considered point is relevant because it \emph{precludes many forms of modeling of attention-based language dynamics, in particular those using non-interacting particle theories}. The precluded approaches include:
\begin{itemize}
\item $L$-grams, \textsf{word2vec}-like $L$-skip-grams~\cite{mikolov2013distributed}, as well as any other $L$-point correlations of past input tokens.
\item $L$-step non-interacting random walk models (walks inside the network structure, which move from input layers towards output layers across time).
\item systems known to be equivalent to the above, such as approximations of classical spin-chain systems by means of Feynman integral path lengths bounded by $L$~\cite{walks2025}, and many forms of graph/GNN kernels based on $L$-th powers of the graph Laplacian.
\item by extension, $L$-layer state-space systems which perform excessive compression (size reduction) of their state, in a way which eliminates most long-term correlations.
\end{itemize}

We can ask if this requirement for communication between particles is an artifact of the construction of BDH (and similarly, of the Transformer), or if it comes from a genuine need related to language and reasoning tasks. For language problems \textsl{per se}, the need for multi-point token correlation in $L$-layer language modeling plausibly follows from the expectation that the model should have the ability to create a syntax tree of a sentence by means of a single quick parallel scan over words in this sentence. With this assumption, the depth $L$ of computation used to build a language syntax tree should be sufficient to represent the number of \emph{levels} of the syntax tree that the model is able to process naturally, but can be (and in general, should plausibly be) much smaller than the number of \emph{leaves} (words) of this syntax tree. This is consistent with the RASP-L-based understanding of the Transformer's capabilities, which allows for expressing depth-L trees in a depth-L Transformer.\footnote{This does not mean the problem is easy; synthetic problems inspired by this type of tree problem were (for us) among the hardest to train into a Transformer with no Chain-of-Thought --- as compared to RASP-L problems described in~\cite{zhou2023algorithms} and others we tested.}

Such a way of mapping the tree structure of problems into the model's layers, from bottom to top, also essentially captures the ``generative'' nature of the considered models, which rely on concept spaces created and stored in state in intermediate layers, to guide both language comprehension and reasoning on language. Thus, the ability to handle language syntax trees efficiently, in itself, precludes the previously-mentioned types of modeling approaches.

\subsubsection*{\itemthree $\diamond$ The interaction process \abc describes attention.}

The preceding discussion in paragraph \itemtwo grounds state-of-the-art state-space language models in the world of interacting particle systems. %

Whenever the global vector-based description of a state-space model calls for a three-point operation, such as the trilinear operation of key-value-query attention, this translates into the nature of pairwise (for polynomial interaction terms, degree-two) non-linear particle interactions in the transition equations of the same model when described at the level of particles. Notably, at scale, \emph{the state-space transition equations of an attention-based model plausibly involve altering or deforming correlation strength between pairs of particles, with such pairs being represented as interaction variables in the state of the system}. This requirement on structure, repeated across layers, can be seen as sufficient: interactions of particle pairs are about the only requirement on non-linear rulesets that the system needs to be support, as demonstrated by the simple local transition rules of BDH.

Overall, the statement ``attention is all you need'', which describes a system-level global property,
translates into ``$\abc$ is all you need'' at the level of particle dynamics of a state-space language model.

\subsubsection*{\itemfour $\diamond$ Inputs to reasoning problems are sequential, not graph-based.}

Many real-world graphs are anchored in a spatial embedding of their nodes which is given by external constraints. For example, the structure of many social and transportation networks is impacted by the geographical placement of people and infrastructure on the globe.

In designing the dynamics for BDH, we are free from such spatial constraints. The graph topology corresponding to the model is free to take the shape needed to best resolve the problem. The problem itself is encoded as a sequence of tokens which arrive over time to the model (we take here a state-space view of the system).
We can naturally presume that the structure of the model graph of BDH is shaped in a way which follows from two aspects: this temporal encoding of information, and from the abstract (Platonic) latent space of concepts needed to deal with language and reasoning.

When looking for the right particle dynamics for language models, it seems reasonable to discard all \emph{unnecessary} aspects of spatial constraints.

One example of a particle interaction system which includes externally imposed constraints on the structure of the state space is that of cellular automata operating on a two-dimensional grid. While 2D cellular automata have appealed to public imagination, appearing in attempts to observe the emergence of intelligence at least since the 1970's, they are, in fact, an extremely cumbersome choice for representing in-context reasoning or language for any attention-based model. State-of-the-art language models seem to have no structural need for a low-dimensional grid in their dynamics. Arguably, the connection structure which needs to emerge in a graph system, allowing it to work efficiently in a setting of efficient information search
is precisely the opposite: it is a multi-scale, expander-like system of shortcuts, cf. e.g.~\cite{10.1145/1806689.1806744}. This scale-free graph structure is expected to correspond to the scale-free temporal behavior observed in natural systems~\cite{HE2010353}.

In the rest of this paragraph we briefly review other areas of computer science, and how they relate to the particle dynamics we are looking for in terms of their relationship to handling temporal inputs and the constraints they impose on the structure of the state-space.

The freedom of choice of graph topology in solving problems around language and in-context reasoning, which we are dealing with here, can be contrasted with settings in which the graph is, at the same time, part of the system dynamics (encoding interactions in the system) and a part of the statement of the problem input. This is particularly true for models of distributed computing inspired by computer networking (LOCAL, CONGEST, etc.) and other forms of interaction networks (Approximate Message Passing, quantum LOCC, etc.), where the same graph $G$ represents the communication network for the dynamics, and encodes the problem input --- with the required output being some function of $G$ (e.g., a clustering, coloring, spanning tree, etc.). Some distributed problems on graphs can be formulated so that the input and required output are independent of the graph structure, the notable ones being: majority consensus, leader election, information broadcasting, and computing aggregates. For such problems, the graph represents only a communication system, whose topology is more an obstacle to overcome, than an actual help in solving the problem. This applies also to architectures in Machine Learning which adhere to a known graph structure, such as Graph Neural Networks or Graph Transformers, when solving problems whose inputs are not naturally embedded in such a structure.

A handful of approaches in distributed computing are intended to describe systems which compute a function of an input signal which, like language, is spread out sequentially over time, and where computations happen while this signal is still arriving. In particular, some forms of particle dynamics can be distilled from the theory of self-stabilizing systems~\cite{dolev2000}, giving rise to settings where the system is expected to adapt its state in response to a time-changing input (see e.g.~\cite{BoczkowskiKN19}).
Among distributed streaming frameworks, one approach which, owing to its design, admits an elegant particle-based interpretation for time-changing inputs, is the incremental computing framework~\cite{DBLP:conf/cidr/McSherryMII13}. %
This framework emphasizes temporal commutativity, and is well suited to expressing dynamics of non-interacting particles, such as PageRank-like computation performed incrementally with Map-Reduce on time-changing graphs, or building nearest-neighbor indexes on sets of changing vectors. %
It does not naturally extend to the non-linear particle-particle interaction dynamics that appear in the context of attention (see paragraph \itemtwo).%

\section{\BDHGPU PyTorch code listing}\label{sec:bdh_code_listing}

The code listing below implements \BDHGPU (Definition~\ref{def:bdh}) for PyTorch version 2.7. It is self-contained, except for the implementation of RoPE which needs to be filled by the user. With respect to the state dynamics of~\eqeqref{eq:bdh}, it provides an extension supporting heads. The placement of layer norms and residual connections is modified with respect to~\eqeqref{eq:bdh}; in general, this aspect offers some flexibility.

This implementation assumes the simplest case of a fixed context window of length $T$. An unbounded context window is technically supported using a state-space kernel for Linear Attention, and works best following appropriate adaptation of the model for truncated backpropagation through time (see Appendix~\ref{sec:bdh_scaling_details}).

\newpage
\lstset{language=Python} 
\begin{lstlisting}[frame=single]
import torch
import torch.nn.functional as F
from torch import nn

D = 256  # internal dimension
H = 4  # heads
N = 32768  # neurons
L = 6  # layers
dropout = 0.05
vocab_size = 256

class BDH_GPU(nn.Module):
  def __init__(self):
    self.ln = nn.LayerNorm(D, elementwise_affine=False, bias=False)
    self.wte = nn.Embedding(vocab_size, D)
    self.drop = nn.Dropout(dropout)
    self.encoder = nn.Parameter(
      torch.zeros((N, D)).normal_(std=0.02)
    )
    self.decoder_x = nn.Parameter(
      torch.zeros((H, D, N//H)).normal_(std=0.02)
    )
    self.decoder_y = nn.Parameter(
      torch.zeros((H, D, N//H)).normal_(std=0.02)
    )
    self.readout = nn.Parameter(
      torch.zeros((D, vocab_size)).normal_(std=0.02)
    )
    self.attn = LinearAttention()

  def forward(self, idx):
    B, T = idx.size()  # mini-batch dimensions
    v_ast = self.ln(self.wte(idx).unsqueeze(1)) # B, 1, T, D
    
    for _ in range(L):
      x = F.relu(v_ast @ self.decoder_x) # B, H, T, N//H
      a_ast = self.attn(
        Q=x,
        K=x,
        V=v_ast,
      )
      y = F.relu(self.ln(a_ast) @ self.decoder_y) * x # B, H, T, N//H
      y = y.transpose(1, 2).reshape(B, 1, T, N)
      y = self.drop(y)
      
      # Start of layer with vectors x, y  
      v_ast = v_ast + self.ln(y @ self.encoder)  # B, 1, T, D
      v_ast = self.ln(v_ast)
      
    return v_ast.squeeze(1) @ self.readout  # B, T, vocab_size

class LinearAttention(nn.Module):
  def forward(Q, K, V):
    Qr = RoPE(Q)
    Kr = RoPE(K)
    
    return (Qr @ Kr.mT).tril(diagonal=-1) @ V
\end{lstlisting}
\end{document}